\RequirePackage{fix-cm}
\documentclass{article}

\PassOptionsToPackage{sort}{natbib}
\usepackage{iclr2027_conference,times}

\usepackage[utf8]{inputenc}
\usepackage[T1]{fontenc}
\usepackage[american]{babel}
\usepackage{microtype}
\usepackage{graphicx}
\usepackage{subcaption}
\usepackage{booktabs}
\usepackage{array}
\usepackage{xcolor}
\definecolor{mydarkblue}{RGB}{0,0,139}
\usepackage{wrapfig}
\usepackage{float}
\usepackage{caption}
\usepackage{siunitx}
\usepackage{hyperref}
\usepackage{xurl}
\usepackage{amsmath}
\usepackage{amsfonts}
\usepackage{amssymb}
\usepackage{mathtools}
\usepackage{amsthm}
\usepackage[ruled,vlined]{algorithm2e}
\SetKwInput{KwInit}{Init}

\usepackage[noend]{algpseudocode}
\usepackage{titletoc}
\usepackage[capitalize,noabbrev]{cleveref}

\hypersetup{
  colorlinks=true,
  linkcolor=mydarkblue,
  citecolor=mydarkblue,
  urlcolor=mydarkblue
}

\crefalias{equation}{equation}
\crefname{section}{\S}{\S\S}
\Crefname{section}{\S}{\S\S}
\crefname{equation}{Eq.}{Eqs.}
\Crefname{equation}{Equation}{Equations}

\theoremstyle{plain}
\newtheorem{theorem}{Theorem}[section]
\newtheorem{proposition}[theorem]{Proposition}

\theoremstyle{definition}

\theoremstyle{remark}

\newcolumntype{L}[1]{>{\raggedright\arraybackslash}p{#1}}

\title{Scalable Diffusion SBI for Compositional\\
Inference under Simulator Misspecification}

\author{%
  Vincent D. Zaballa \& Elliot E. Hui\\
  University of California, Irvine\\
  \texttt{vzaballa@uci.edu}
}
\iclrfinalcopy
\begin{document}
\maketitle
\lhead{Preprint}

\begin{abstract}
  Simulation-based inference is challenging when many heterogeneous observations must be composed, hierarchical latent structure must be preserved, and the simulator is misspecified relative to observed data. We develop sampling and fine-tuning methods for diffusion-based inference in design-conditional settings, where the same simulator is queried across different experimental conditions $\xi$. We extend compositional score-based inference with a continuous-time diffusion coefficient that accounts for the number of observations, avoiding Jacobian and auxiliary-covariance corrections. We introduce Hierarchical Blockwise Diffusion Sampling (HBDS), which infers shared parameters and group-specific latent states using a single pretrained model, with the hierarchy specified only at sampling time. Together, these methods support variable observation sets and groupings without retraining. To address misspecification, we introduce path-regularized fine-tuning that adapts the learned likelihood to observations and transfers corrections to posterior inference. Using Girsanov's theorem, we quantify path divergence between pretrained and fine-tuned models across experimental designs and interpret it alongside predictive errors to distinguish candidate misspecification correction from unnecessary adaptation. We evaluate compositional sampling on exact-score Gaussian and Simple Likelihood, Complex Posterior benchmarks, HBDS with analytic and learned scores on a controlled hierarchical model, and fine-tuning and localization on a separate analytic model with known design-dependent discrepancy. Finally, we apply the framework to 940 measurements across four cell lines in a mechanistic Bone Morphogenetic Protein signaling model, where fine-tuning improves posterior-predictive accuracy relative to the pretrained model and shifts posterior marginals toward the least-squares reference while retaining spread.
\end{abstract}

\section{Introduction}
\label{sec:introduction}

Mechanistic simulators compress scientific knowledge into predictive models whose latent parameters carry physical or biological meaning. Simulation-based inference (SBI) \citep{Cranmer2020} can amortize Bayesian inference in these models even when their likelihoods cannot be evaluated. In reality, however, inference rarely consists of conditioning on a single observation from a perfectly specified simulator. Data may contain many heterogeneous observations, shared and group-specific latent variables, and systematic discrepancies between the simulator and the observed system.

These challenges have motivated several complementary directions in SBI. Compositional methods aggregate multiple observations in a score-based model without retraining a posterior estimator, beginning with Factorized Neural Posterior Score Estimation (F-NPSE) \citep{geffner23a} and followed by covariance-aware diffusion samplers for tall-data SBI \citep{linhart2026tall}. Meanwhile, recent hierarchical approaches explicitly model shared and local latent structure through hierarchical estimators or training constructions \citep{heinrich2024hierarchical, arruda2026compositional, charles2026tokenised}. Separately, simulator misspecification has motivated a growing literature on detection and mitigation \citep{Cannon2022, boelts2026misspecification}, including explicit discrepancy models \citep{ward2022robust}, misspecification-aware representations and summary statistics \citep{schmitt2022detectingmodelmisspec, huang2023wentwrongclosingsimtoreal}, and distributional alignment or calibration methods \citep{wehenkel2025addressing}. These lines of work address composition, hierarchy, and misspecification, but typically as distinct problems.


We study how these requirements can be combined within a single diffusion-based SBI framework \citep{sharrock2022sequential}. Our central idea is to treat a pretrained amortized conditional model as a reusable inference engine: observation sets can be composed, hierarchical structure introduced at sampling time, and the simulator-trained surrogate adapted to observed data while retaining posterior inference in the original mechanistic parameterization.

Achieving the last of these requires likelihood and posterior score representations to remain coupled as the simulator-trained surrogate is adapted to observed data. We use the Simformer of \citet{gloeckler24a} as the diffusion backbone and introduce a new mechanism for transfer learning \citep{Pan2010ASO} through shared embeddings, allowing likelihood fine-tuning to update representations used for posterior inference (\cref{sec:rep_learn}). The framework is summarized in \cref{fig:bmp_simformer}. Building on this coupled surrogate, our contributions are:

\begin{figure*}[t]
\centering
\includegraphics[width=\textwidth]{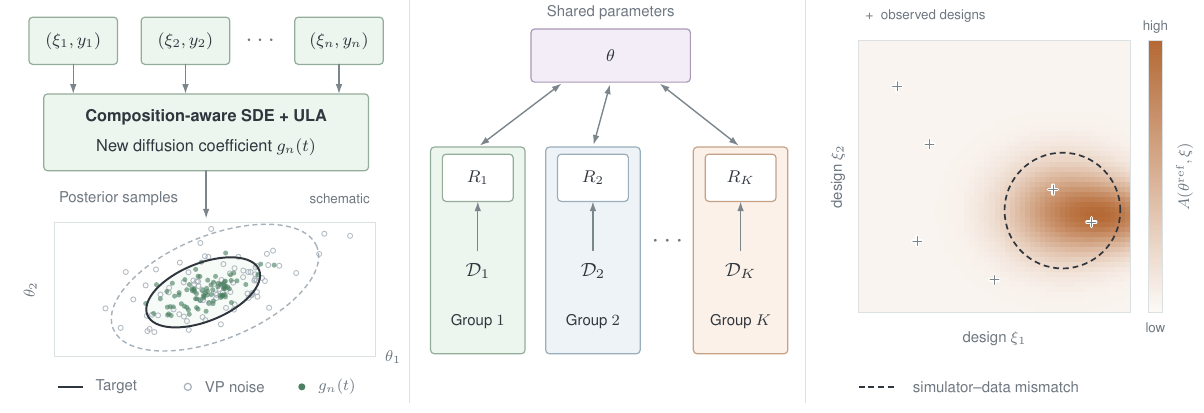}
\vskip -6pt
\caption{Schematic overview of the framework. (\emph{Left}) Compositional sampling combines observations using the derived diffusion coefficient $g_n(t)$ and optional Langevin correctors, without retraining. (\emph{Middle}) HBDS alternates updates of shared parameters $\theta$ and group-specific latent states $R_k$ conditioned on each group's data $\mathcal D_k$. (\emph{Right}) Path divergence $\mathcal{A}(\theta^{\mathrm{ref}},\xi)$ measures the magnitude of likelihood adaptation across experimental designs. Comparing this field with predictive improvement helps distinguish candidate misspecification correction from over-adaptation.}
\label{fig:bmp_simformer}
\vskip -12pt
\end{figure*}

{
\setlength{\leftmargini}{1.2em}

\begin{itemize}
    \setlength{\itemsep}{2pt}
    \setlength{\parskip}{0pt}
    \setlength{\parsep}{0pt}
    \setlength{\topsep}{4pt}

    \item \textit{Composition-aware diffusion sampling.} We derive an $n$-dependent continuous-time diffusion coefficient for compositional SBI, extending F-NPSE to efficient SDE sampling while avoiding the Jacobian and auxiliary-covariance costs of JAC and GAUSS.

    \item \textit{Hierarchical inference at sampling time.} We introduce Hierarchical Blockwise Diffusion Sampling (HBDS), reusing a single joint diffusion model to infer shared and group-specific latents with hierarchy introduced only at sampling time.

    \item \textit{Path-space adaptation and localization.} We fine-tune the surrogate likelihood to observations while transferring the correction to posterior inference, and use Girsanov's theorem to isolate \textit{where} model misspecification may occur.

    \item \textit{Large-scale systems biology inference.} We apply the full framework to a Bone Morphogenetic Protein (BMP) signaling simulator with 60 shared parameters, five receptor quantities per cell line, and 940 observations, showing how composition, hierarchy, and misspecification-aware adaptation jointly make SBI practical on a nontrivial scientific model.

\end{itemize}
}

\section{Background}
\label{sec:background}

\subsection{Design-dependent simulation-based inference}
\label{sec:SBI_background}
SBI considers a simulator from which observations can be sampled conditional on latent parameters and, when relevant, experimental conditions.  In that setting we write \(y \sim p_{\text{eval}}(y \mid \theta)\), where \(p_{\text{eval}}(y,\theta)=p_{\text{eval}}(y \mid \theta)\,p(\theta)\) is an \emph{expert simulator}, distinct from the unknown true data-generating process \(p^{\ast}(y,\theta)\). We adopt the $p_{\text{eval}}(y \mid \theta)$ notation of \cite{smith2025rethinking} and use the \emph{design-dependent} setting where an experimental design \(\xi\) alters the joint model and resulting posterior, yielding \(p_{\text{eval}}(y,\theta \mid \xi)\) and related \(p^*(y,\theta \mid \xi)\). We represent a simulator as a function
\[
f:\;\Xi\times\Theta\longrightarrow\mathcal{Y},\quad
\Xi\subset\mathbb{R}^{D},\;
\Theta\subset\mathbb{R}^{M},\;
\mathcal{Y}\subset\mathbb{R}^{B},
\]
where \(\Xi\), \(\Theta\), and \(\mathcal{Y}\) denote the spaces of experimental designs, latent model parameters, and observations, respectively. The simulator generates samples from the prior predictive distribution, $\theta \sim p(\theta)$ and $y \sim p_{\mathrm{eval}}(y\mid\theta,\xi)$, which provide training data for amortized density, ratio, or score estimators targeting quantities such as the posterior $p(\theta\mid y,\xi)$ or surrogate likelihood $p(y\mid\theta,\xi)$.

\subsection{Mask-conditioned joint diffusion models}
\label{sec:pretraining}

Diffusion-based SBI often trains an estimator for a particular inference task, such as posterior or likelihood estimation \citep{sharrock2022sequential}. Mask-conditioned joint diffusion models instead support different conditioning patterns within a single network \citep{gloeckler24a}. Variables such as $y$, $\xi$, and $\theta$ are represented by tokens encoding their identity, value, and whether they are latent or conditioned. For $d$ modeled variables, a condition mask $M_C\in\{0,1\}^d$ identifies those held fixed, with $M_C=(0,\dots,0)$ giving an unconditional model. An attention mask $M_E$ incorporates the expert model's graphical structure into the transformer \citep{weilbach2023diffusiongraph}.

Writing the tokenized variables collectively as $y_0$, a prescribed forward perturbation kernel $p_t(y_t\mid y_0)$ provides noisy versions $y_t$. The masked input $y_t^{M_C}$ retains clean values for conditioned variables and noisy values for latent variables. The network learns the conditional score, the gradient of the log density with respect to latent variables, through masked denoising score matching \citep{hyvarinen05a,vincent2011}. We define
\begin{align*}
\ell(\phi, M_C, t, y_0, y_t) = \zeta(t)(1-M_C)\Bigl(
  s_{\phi}^{M_E}\bigl(y_t^{M_C},t \bigr)
  - \nabla_{y_t} \log  p_t \bigl(y_t | y_0\bigr)
\Bigr),
\end{align*}
where $\zeta(t)$ weights the noise levels and $(1-M_C)$ selects latent coordinates. We minimize the expected squared residual,
\begin{equation*}
\mathcal{L}(\phi) = \mathbb{E}_{M_C,t,y_0,y_t}\!
       \Bigl[
           \bigl\|\ell\!\bigl(\phi,M_C,t,y_0,y_t\bigr)\bigr\|_2^{2}
       \Bigr], 
\end{equation*}
yielding the pretrained (PT) model $p_{\phi_0}(y,\theta\mid\xi)$, where $\phi_0$ denotes the PT network parameters. The choice of condition masks and their frequency during training influences conditional sampling performance (\cref{sec:train_deets}).

\subsection{Score-based diffusion sampling}
\label{sec:score-based-diffusion}

The learned conditional score defines a sampler that reverses the noising used during pretraining. We use a variance-preserving SDE (VP-SDE), the continuous-time limit of the forward Gaussian noising chain in denoising diffusion probabilistic models (DDPMs) \citep{ho2020denoisingdiffusionprobabilisticmodels,song2021scorebased}. At fixed conditioning, let $y_t$ denote the variables being generated and suppress the conditioning arguments. The forward process transforms $p_0(y_0)$ toward a tractable Gaussian $p_T(y_T)$, and the learned score $s_\phi(y_t,t)\approx\nabla_{y_t}\log p_t(y_t)$ gives the reverse dynamics,
\begin{equation*}
dy_t = \begin{cases}
f(y_t, t)dt + g(t)dw, & \text{(forward)} \\
[f(y_t, t) - g(t)^2 s_\phi(y_t, t)] dt + g(t)d\overset{\scriptscriptstyle\leftarrow}{w}, & \text{(reverse)},
\end{cases}
\end{equation*}
where $w$ and $\overset{\scriptscriptstyle\leftarrow}{w}$ are forward- and reverse-time Wiener processes, and $f$ and $g$ are the drift and diffusion coefficients \citep{anderson1982reverse,song2021scorebased}. For the VP-SDE, $f(y,t)=-\tfrac12\beta(t)y$ and $g(t)=\sqrt{\beta(t)}$ for a nonnegative noise rate $\beta(t)$. Sampling integrates the reverse SDE from $y_T\sim p_T$ at $t=T$ to $t=0$, holding conditioned variables fixed.

\section{Methods}
\label{sec:methods}

\begin{figure}[!t]
\centering
\includegraphics[width=\textwidth]{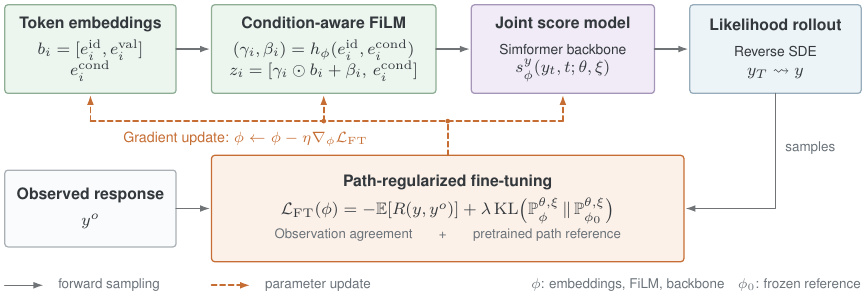}
\caption{Likelihood fine-tuning of the joint diffusion surrogate. At fixed reference parameters and design, generated responses are evaluated against observations with a path-space KL penalty relative to the PT model. Gradients through sampling update the token embeddings, FiLM layers, and score network, which also support posterior queries.}
\label{fig:likelihood-fine-tuning}
\vspace{-12pt}
\end{figure}

\subsection{Fine-tuning to address model misspecification}
\label{sec:fine-tune}

The PT joint surrogate provides a conditional likelihood $p_{\phi_0}(y\mid\theta,\xi) \approx p_{\mathrm{eval}}(y\mid\theta,\xi)$, which inherits simulator misspecification relative to $p^*(y\mid\theta,\xi)$. Following \citet{Cannon2022}, we conceptualize misspecification via an unknown transformation that maps simulator outputs to observations. In our setting this is naturally a map between \emph{different} sample spaces, since simulator outputs must be post-processed (e.g.\ normalized) to match the observation representation:
\[
    y \sim p_{\text{eval}}(\cdot\mid \theta,\xi),\qquad
    y_o = T(y),\qquad
    T:\mathcal{Y}_{\text{sim}}\to\mathcal{Y}_{\text{obs}}.
\]
Under this representation, \(p^\ast(\cdot\mid \theta,\xi)\) is the pushforward \(p^\ast(\cdot\mid \theta,\xi)=T_{\#}p_{\text{eval}}(\cdot\mid \theta,\xi)\). Rather than specifying \(T\) explicitly, we seek to learn an \emph{implicit} misspecification correction that absorbs this transformation into the joint model's parameters. We draw on diffusion fine-tuning to address simulator misspecification, adapting the surrogate likelihood to observed data. Specifically, we use the \textit{implicit diffusion} method of \citet{marion2025implicitdiffusionefficientoptimization}, which allows us to optimize agreement with observations by differentiating through likelihood sampling. We discuss alternative diffusion fine-tuning approaches and their suitability for SBI in \cref{sec:diffusion-fine-tuning-related-work}.

In SBI, observations $y_o$, or design--observation pairs $(y_o,\xi)$, provide a target for fine-tuning the surrogate likelihood. The learned correction must also inform the posterior over $\theta$ conditioned on those observations. We enable this transfer through token-aware Feature-wise Linear Modulation (FiLM) \citep{perez2017filmvisualreasoninggeneral}, which modulates token embeddings according to their identity and whether they are latent or conditioned. These embeddings are shared across likelihood and posterior queries in the masked transformer, allowing likelihood fine-tuning to update representations used for posterior inference; \cref{fig:likelihood-fine-tuning} illustrates this mechanism, with embedding details in \cref{sec:rep_learn}.

\paragraph{How can we fine-tune the likelihood $p_\phi(y\mid\theta,\xi)$ to observed data when the corresponding mechanistic parameters $\theta$ are unknown?} For observed design--response pairs $(y_i^o,\xi_i)$, we condition the surrogate likelihood on a fixed, simulator-aligned reference parameter $\theta^{\mathrm{ref}}$. In supervised settings, $\theta^{\mathrm{ref}}$ may be provided by a labeled real-world dataset $\mathcal{D}_{\mathrm{cal}}=\{(\theta^i,y_o^i)\}_{i=1}^{N_{\mathrm{cal}}}$, where each observation $y_o^i$ is paired with its ground-truth parameter $\theta^i$, as in \citet{wehenkel2025addressing}. When it is unknown, it may instead be an externally obtained best-fit estimate. In the BMP application, we use the domain-standard least-squares estimate $ \theta^{\mathrm{LSR}} \in \arg\min_{\theta} \sum_i \ell\!\left(f(\theta,\xi_i),y_i^o\right), $ which provides a parameter configuration explicitly grounded in the expert simulator. We do not interpret $\theta^{\mathrm{LSR}}$ as a true parameter vector; it serves only as a fixed reference for adapting the likelihood to observed data.

For fixed $(\theta^{\mathrm{ref}},\xi)$, we draw samples $y\sim p_\phi(y\mid\theta^{\mathrm{ref}},\xi)$ and evaluate their agreement with observations through a reward $R(y;y^o)$. The diffusion sampler induces an output distribution $\pi^\star(\phi;\theta^{\mathrm{ref}},\xi)$ defined implicitly by the network parameters $\phi$. We formulate fine-tuning as optimization of a functional $\mathcal{F}$ of this sampled distribution, differentiating through the sampler with the adjoint method of Implicit Diffusion \citep{marion2025implicitdiffusionefficientoptimization}. Suppressing the fixed conditioning arguments, the endpoint-regularized problem is $\min_\phi\,\mathcal{F}(\pi^\star(\phi))$, with
\begin{equation}
  \mathcal{F}(p_\phi)
  :=
  -\mathbb{E}_{y\sim p_\phi}[R(y;y^o)]
  +
  \lambda\,\mathrm{KL}\!\left(p_\phi\,\middle\|\,\pi^\star(\phi_0)\right),
\label{eq:fine-tune-objective}
\end{equation}
where $\lambda\geq0$ penalizes departure from the PT output law. In our experiments, $R$ is the negative mean squared error (MSE) between generated and observed responses. In implementation, we replace the endpoint KL by a path-space KL penalty, giving the stronger regularization described in \cref{sec:path-divergence,sec:soc-connection}. Alternative fine-tuning frameworks are discussed in \cref{sec:diffusion-fine-tuning-related-work}. Surprisingly, anchoring likelihood fine-tuning at a fixed $\theta^{\mathrm{ref}}$ does not reduce posterior inference to a point estimate --- in the BMP example, the fine-tuned (FT) posterior marginals generally shift toward the reference relative to pretraining while retaining substantial spread (\cref{fig:bmp-posterior-marginals}).

\subsection{Localizing simulator misspecification with path-space divergence}
\label{sec:path-divergence}

For fixed $(\theta,\xi)$, the reverse-SDE sampler underlying $\pi^\star(\phi;\theta,\xi)$ induces a path measure $\mathbb{P}_{\phi}^{\theta,\xi}$ over trajectories $Y=(Y_t)_{t\in[0,T]}$, whose generated endpoint distribution is $\pi^\star(\phi;\theta,\xi)$. Fine-tuning from $\phi_0$ to $\phi$ therefore changes not only the endpoint distribution of generated responses but the probability law of the complete sampling trajectory, giving PT and FT path measures $\mathbb{P}_{\phi_0}^{\theta,\xi}$ and $\mathbb{P}_{\phi}^{\theta,\xi}$. We quantify this change using path-space KL, which we use in place of the endpoint penalty in \cref{eq:fine-tune-objective}. Since the generated response is a projection of the complete diffusion trajectory, the data-processing inequality implies that path-space KL upper-bounds the corresponding endpoint divergence and can therefore capture adaptation not visible from the final samples alone. Under the regularity and integrability conditions stated in \cref{sec:girsanov_background}, Girsanov's theorem gives

\begin{equation}
\begin{aligned}
\mathrm{KL}\!\left(
\mathbb{P}_{\phi}^{\theta,\xi}
\,\middle\|\,
\mathbb{P}_{\phi_0}^{\theta,\xi}
\right)
=
\frac{1}{2}\,
\mathbb{E}_{\mathbb{P}_{\phi}^{\theta,\xi}}
\!\left[
\int_0^T g(t)^2
\left\|
s_{\phi}^{y}(Y_t,t;\theta,\xi)
-
s_{\phi_0}^{y}(Y_t,t;\theta,\xi)
\right\|_2^2\,dt
\right].
\end{aligned}
\label{eq:score-control-energy}
\end{equation}

We denote this discrepancy by $\mathcal{A}(\theta,\xi)$. Evaluating $\mathcal{A}$ at the same fixed reference parameter $\theta^{\mathrm{ref}}$ used during fine-tuning yields a non-negative scalar field over design space, assigning each design a measure of the trajectory-level displacement between the PT and FT reverse-SDEs.

\paragraph{Stochastic optimal control (SOC) interpretation.} Equation~\eqref{eq:score-control-energy} admits a SOC interpretation. In our reverse-time convention, a control field $u(x,t)$ perturbs a reference drift through $dX_t^u=[b_0(X_t^u,t)+\sigma(t)u(X_t^u,t)]dt+\sigma(t)d\overset{\scriptscriptstyle\leftarrow}{w}_t$, integrated from $t=T$ to $t=0$, inducing a controlled path measure $\mathbb{P}^u$. Taking the PT reverse-SDE as the reference and $\sigma(t)=g(t)I$ gives $u_\phi=-g(t)(s_\phi^y-s_{\phi_0}^y)$. Hence, $\mathcal{A}(\theta,\xi)$ is the quadratic control energy associated with transporting the PT path law to the FT one, connecting likelihood adaptation to SOC and KL-constrained path-space transport \citep{blessing2025trustregion}.


\paragraph{Fisher geometry over conditioning variables.} Beyond comparing PT and FT path laws at fixed $(\theta,\xi)$, the same path-space formulation induces a local geometry over the conditioning variables themselves. Let $z=(\theta,\xi)$ denote the continuous mechanistic and design coordinates, and fix the learned model $\phi$. Assuming the induced path laws vary smoothly with $z$, nearby path measures satisfy the absolute-continuity conditions required by Girsanov's theorem, and distinct nearby conditioning values induce distinct path laws, the family
\[
\mathcal{M}_{\phi}
=
\left\{
\mathbb{P}_{\phi}^{z}:z\in\Theta\times\Xi
\right\}
\subset
\mathcal{P}\!\left(C([0,T];\mathbb{R}^{d_y})\right)
\]
can be treated locally as a statistical manifold (\cref{fig:bmp-simformer-geometry}). Here, $C([0,T];\mathbb{R}^{d_y})$ is the space of continuous trajectories $Y:[0,T]\to\mathbb{R}^{d_y}$, and $\mathcal{P}(\cdot)$ denotes probability measures on that space. With a shared diffusion schedule and terminal law, the same Girsanov argument gives
\begin{equation}
\mathcal{I}_{\phi}^{\mathrm{path}}(z)
:=
\mathbb{E}_{Y\sim\mathbb{P}_{\phi}^{z}}
\!\left[
\int_0^T g(t)^2
J_z s_{\phi}^{y}(Y_t,t;z)^\top
J_z s_{\phi}^{y}(Y_t,t;z)\,dt
\right],
\label{eq:path-fisher-main}
\end{equation}

\begin{wrapfigure}[11]{r}{0.40\textwidth}
\vspace{-21pt}
\centering
\includegraphics[width=\linewidth]{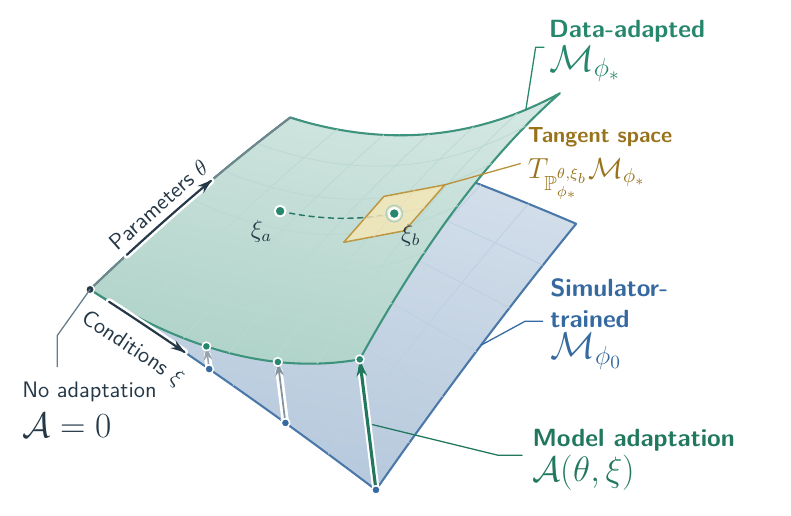}
\captionsetup{font=small,skip=3pt}
\caption{Conditional path-law geometry under fine-tuning.}
\label{fig:bmp-simformer-geometry}
\vspace{-6pt}
\end{wrapfigure}
where $J_zs_{\phi}^{y}=\partial s_{\phi}^{y}/\partial z$. Thus, $\mathcal{I}_{\phi}^{\mathrm{path}}(z)$ is the local Fisher--Rao metric induced by KL divergence on the conditional reverse-SDE path laws \citep{amari2000methods}. When the conditioning parameterization is locally non-identifiable, $\mathcal{I}_{\phi}^{\mathrm{path}}(z)$ may be singular, consistent with the classical connection between local identifiability and nonsingularity of the Fisher information matrix \citep{rothenberg1971identification}. This includes the familiar case of mechanistic parameter non-identifiability, where distinct parameter combinations $\theta$ induce indistinguishable responses \citep{raue2009structural,villaverde2019observability}. The tangent space $ T_{\mathbb{P}_{\phi}^{z}}\mathcal{M}_{\phi} $ captures first-order perturbations of the conditional path law around $\mathbb{P}_{\phi}^{z}$, with $\mathcal{I}_{\phi}^{\mathrm{path}}(z)$ defining the local metric on this space. We further discuss the parameter- and design-space structure of this geometry in \cref{sec:fisher-rao-analysis}.

\subsection{Compositional sampling}
\label{sec:comp_samp}

A simformer can represent arbitrary conditionals of a fixed joint distribution, but explicitly representing each design--observation pair in the joint makes the modeled state grow with the number of experiments and ties inference to a fixed number of observation tokens. Instead, for each observation $(y_j,\xi_j)$ we use the posterior score $ s_\phi(\theta_t,t;y_j,\xi_j) \approx \nabla_{\theta_t}\log p_t(\theta_t\mid y_j,\xi_j), $ and combine these per-observation scores at inference to approximate the posterior conditioned on $\mathcal D_n=\{(y_j,\xi_j)\}_{j=1}^n$. Since the same score model is reused for every $(y_j,\xi_j)$, the number and subset of observations can vary without changing or retraining the learned joint model. To achieve this, we first review the discrete F-NPSE construction of \citet{geffner23a}, then introduce a continuous-time extension.

\paragraph{Discrete compositional sampling.} Under conditional independence given $\theta$, the target posterior factorizes as $p(\theta\mid\mathcal D_n)\propto p(\theta)^{1-n}\prod_{j=1}^{n}p(\theta\mid y_j,\xi_j)$. F-NPSE exploits this factorization by defining intermediate distributions whose scores combine the per-observation posterior scores above with the corresponding prior-score correction. In the DDPM setting \citep{ho2020denoisingdiffusionprobabilisticmodels}, $k$ indexes diffusion steps and $\alpha_k$ denotes one-step signal retention, with signal scaling $\sqrt{\alpha_k}$ and added noise variance $1-\alpha_k$ per coordinate. The resulting normalized reverse transition can be written as
\begin{equation}
\widetilde p_k(\theta_{k-1}\mid\theta_k,\mathcal D_n)
=
\mathcal N\!\left(
\theta_{k-1};\mu_k^{\mathrm G},
\frac{1-\alpha_k}{\kappa_k^{(n)}}I
\right),
\qquad
\kappa_k^{(n)}
=
1+(n-1)(1-\alpha_k),
\label{eq:geffner-transition-main}
\end{equation}
where $\mu_k^{\mathrm G}$ combines the learned reverse scores for each $(y_j,\xi_j)$ with the prior-score correction implied by the posterior factorization above; its full form and assumptions are given in \cref{sec:rev_kernel_der}. The transition variance in Equation~\eqref{eq:geffner-transition-main} decreases with the number of observations $n$ and approaches $1/n$ as $\alpha_k\to0$. This transition-level contraction motivates a narrower cumulative noising path than the single-observation VP process, echoing the $n^{-1}$ covariance contraction of Bernstein--von Mises theory \citep[Ch.~10]{vanderVaart1998}. 

\paragraph{From discrete to continuous dynamics.} To construct a continuous-time noising process, consider a hypothetical draw $\theta_0$ from the desired posterior. Ordinary VP noising attenuates this draw by $\sqrt{\alpha(t)}$ and adds Gaussian noise with variance $1-\alpha(t)$ per coordinate, where $\alpha(t)$ denotes cumulative squared signal retention. Motivated by the discrete contraction in \cref{eq:geffner-transition-main}, we retain this mean attenuation but prescribe a smaller accumulated noise variance $V_n(t)=(1-\alpha(t))/\kappa^{(n)}(t)$, where $\kappa^{(n)}(t)=1+(n-1)(1-\alpha(t))$. This introduces observation-count dependence without estimating the posterior covariance. Keeping the ordinary VP-SDE drift $f(t)=\dot\alpha(t)/(2\alpha(t))$ \citep{song2021scorebased}, the variance evolution $\dot V_n(t)=2f(t)V_n(t)+g_n(t)^2$ then determines the diffusion coefficient for this noising process
\begin{equation}
g_n(t)^2
=
\beta(t)
\frac{
1+(n-1)\left(1-\alpha(t)\right)^2
}{
[\kappa^{(n)}(t)]^2
},
\qquad
\beta(t)=-\frac{\dot\alpha(t)}{\alpha(t)}.
\label{eq:compositional-diffusion-main}
\end{equation}
The coefficient satisfies $g_n(t)^2\leq\beta(t)$, with $g_1(t)^2=\beta(t)$ and $g_n(t)^2/\beta(t)\to1/n$ as $\alpha(t)\to0$. We call the resulting sampler \textsc{F-NPSE-SDE}. It uses $g_n(t)$ in the reverse-SDE predictor, which advances between noise levels, and in $L$ optional unadjusted Langevin corrections, which explore the current compositional marginal at fixed diffusion time. For a concentrated marginal, the score can change rapidly over short distances, so a correction step may overshoot even when its direction points toward higher density. The smaller coefficient moderates score-driven drift and noise, reducing the risk of overshoot in stiff dynamics. Our controlled exact-score Gaussian example shows that these numerical issues arise even in a simple model and that the schedule reduces both step-size sensitivity and sampling error (\cref{sec:compositional-coefficient-evaluation}). This yields an inexpensive schedule depending only on diffusion time and observation count, with its derivation in \cref{sec:forward_kernel_derivation}.

\paragraph{Practical scaling.} \textsc{F-NPSE-SDE} requires only score evaluations, avoiding the score Jacobians of JAC and auxiliary covariance estimation of GAUSS \citep{linhart2026tall}. These costs become especially important in HBDS (\cref{sec:em_diffusion}), where compositional sampling is repeated within each block update. Computational complexity and observed runtime and memory limitations on the BMP model are discussed in \cref{sec:compositional-cost-details}.

\subsection{Hierarchical blockwise diffusion sampling}
\label{sec:em_diffusion}

Na\"ive compositional sampling, which we refer to as \emph{pooled compositional sampling}, treats all experimental observations as if they share a single posterior distribution. This ignores the fact that different groups of experiments may induce distinct posteriors, leading to suboptimal inference. For example, the BMP dataset comprises $K=4$ cell lines consisting of a parent and three receptor-knockdown lines. Each line has its own receptor state $R^k$, shared across its ligand-dose experiments, while the biophysical parameters $\theta$ are shared across all four lines (Appendix~\ref{sec:BMP_model}). We use a hierarchical Bayesian formulation \citep{gelman1995bayesian} with shared parameters $\theta$ and group-specific latent states $R^{1:K}$. Given designs $\xi^{1:K}$ and observations $y^{1:K}$ across $K$ groups, we target the joint posterior $p(\theta,R^{1:K}\mid y^{1:K},\xi^{1:K})$. To approximately sample this posterior, HBDS initializes a shared parameter state and separate receptor states. At every reverse-diffusion time, each $R^k$ is updated with its own $(y^k,\xi^k)$ and current global $\theta$, then $\theta$ is updated using all receptor blocks. Repeating yields samples with one globally consistent parameter vector and $K$ distinct local latent states without training a separate hierarchical estimator or tokenizing the complete dataset. The HBDS algorithm and analytic example are in \cref{sec:hbda_details,app:em_simulator}.

\section{Experiments}
\label{sec:experiments}

We evaluate $g_n(t)$ on an exact toy Gaussian model and the Simple Likelihood, Complex Posterior (SLCP) \cite{papamakarios2019sequentialneurallikelihoodfast} benchmarks (\cref{sec:validate_comp_diffusion}), hierarchical inference on a controlled toy (\cref{sec:compare_pool_hbds}), and misspecification-aware fine-tuning (\cref{sec:imp_of_fine_tune}). The BMP application (\cref{sec:bmp-application}) combines fine-tuning with HBDS across 940 observations, assessing predictive improvement and its relationship to design-localized path divergence.


\subsection{Evaluating \textsc{F-NPSE-SDE}}
\label{sec:validate_comp_diffusion}

\begin{table*}[!b]
\vspace{-12pt}
\captionsetup{skip=2pt}
\caption{Comparison of compositional samplers on the Gaussian benchmark ($n=100$, $T=400$) and SLCP ($n=30$). Entries are mean $\pm$ standard deviation over five seeds for Gaussian and 25 test instances using one network seed for SLCP.}
\label{tab:fnpse-sde-benchmarks}
\centering
\begingroup
\setlength{\tabcolsep}{6pt}
\renewcommand{\arraystretch}{1.10}
\small
\begin{tabular*}{\textwidth}{@{\extracolsep{\fill}}lcccc@{}}
\toprule
& \multicolumn{1}{c}{\textbf{Gaussian ($T=400$, $n=100$)}} & \multicolumn{3}{c}{\textbf{SLCP ($n=30$)}} \\
\cmidrule(lr){2-2}\cmidrule(lr){3-5}
\textbf{Method} & \textbf{max-sW} $\downarrow$ & \textbf{sW} $\downarrow$ & \textbf{MMD} $\downarrow$ & \textbf{C2ST} $\rightarrow 0.5$ \\
\midrule
GAUSS & \textbf{$0.24 \pm 0.23$} & \textbf{$0.45 \pm 0.18$} & $0.05 \pm 0.16$ & $0.97 \pm 0.04$ \\
JAC & $0.26 \pm 0.22$ & $0.53 \pm 0.19$ & \textbf{$0.05 \pm 0.14$} & \textbf{$0.95 \pm 0.05$} \\
Langevin & $0.52 \pm 0.50$ & $0.81 \pm 0.28$ & $0.13 \pm 0.22$ & $0.97 \pm 0.03$ \\
\textsc{F-NPSE} & $0.29 \pm 0.24$ & $0.83 \pm 0.29$ & $0.11 \pm 0.13$ & $0.98 \pm 0.03$ \\
\midrule
\textsc{F-NPSE-SDE} ($L=1$) & $0.29 \pm 0.25$ & $0.74 \pm 0.31$ & $0.10 \pm 0.15$ & $0.98 \pm 0.03$ \\
\textsc{F-NPSE-SDE} ($L=5$) & $0.31 \pm 0.27$ & $0.73 \pm 0.28$ & $0.10 \pm 0.15$ & $0.98 \pm 0.03$ \\
\bottomrule
\end{tabular*}
\endgroup
\end{table*}

\paragraph{Exact-score Gaussian benchmark.} We compare \textsc{F-NPSE-SDE} with GAUSS, JAC, and Langevin \citep{linhart2026tall}, and the original F-NPSE \citep{geffner23a}. The 10-dimensional Gaussian model (\cref{sec:compositional-coefficient-evaluation}) has correlation $\rho=0.8$ and exact compositional scores and posteriors. Here, $n$ counts observations, $T$ predictor steps, and $L$ Langevin corrections per predictor. At $n=100$ and $T=400$, \cref{tab:fnpse-sde-benchmarks} shows that \textsc{F-NPSE-SDE} is competitive with GAUSS and JAC in maximum sliced Wasserstein distance, matches F-NPSE, and improves over Langevin. Full sweep settings and results appear in \cref{sec:gaussian-benchmark-details}.

\paragraph{SLCP benchmark.} We next evaluate compositional sampling on SLCP, where the scores must be learned and the posterior is multimodal. Following the protocol of \citet{linhart2026tall}, we evaluate three independently trained single-observation score networks at $n\in\{1,14,30\}$, with all samplers sharing the same frozen network within each comparison. At $n=30$, GAUSS and JAC remain the most accurate, while \textsc{F-NPSE-SDE} improves over Langevin and F-NPSE in sliced Wasserstein distance and MMD (\cref{tab:fnpse-sde-benchmarks}), without JAC's score Jacobians or GAUSS's auxiliary DDIM rollouts. Training, stabilization, and results across observation counts and network seeds are detailed in \cref{sec:slcp-benchmark-details}.

\subsection{Evaluating pooled compositional sampling and HBDS}
\label{sec:compare_pool_hbds}

\paragraph{Hierarchical toy model.} We first isolate the effect of hierarchical structure in a controlled model with a shared continuous parameter $\theta\in\mathbb{R}$ and group-specific latent states $z_g\in\{-1,+1\}$,
\[
z_g\sim\mathrm{Rad}\!\left(\tfrac12\right),\qquad
\theta\sim\mathcal N(0,1),\qquad
x_{gi}\mid\theta,z_g\sim\mathcal N(5z_g+\theta,1).
\]
\begin{wrapfigure}[13]{r}{0.50\textwidth}
\vspace{-6pt}
\centering
\includegraphics[width=\linewidth]{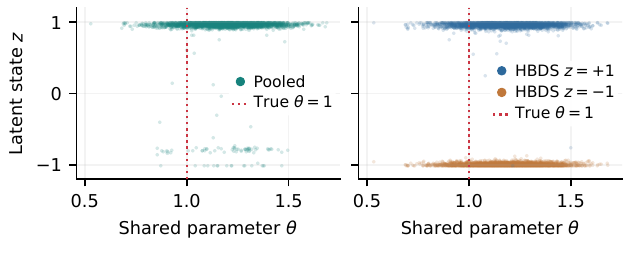}
\captionsetup{font=small,skip=3pt}
\caption{(\emph{Left}) Pooled sampling uses one shared latent. (\emph{Right}) HBDS retains separate group latents with a shared $\theta$. Colors identify the generating groups; plotted latent states are continuous sampler outputs. Dotted lines mark the generating $\theta=1$.}
\label{fig:hbds-toy}
\vspace{-6pt}
\end{wrapfigure}
Here $\mathrm{Rad}(\tfrac12)$ assigns equal probability to the two signs. We compare learned-score samplers using $n=20$ observations generated with $\theta=1$, and ten samples from $z=+1$ and $z=-1$. HBDS permits each group to select its own latent mode, whereas pooled compositional sampling ties all groups to a single latent state. \Cref{fig:hbds-toy} shows pooled sampling cannot represent groups requiring different latent modes, while HBDS maintains separate group-specific states together with a shared $\theta$. The analytic posterior structure and additional score-based diagnostics are given in \cref{sec:hbds-toy}. We further evaluate HBDS on the BMP model in \cref{sec:bmp-application}.

\begin{figure}[!b]
\vspace{-16pt}
\centering
\includegraphics[width=\linewidth]{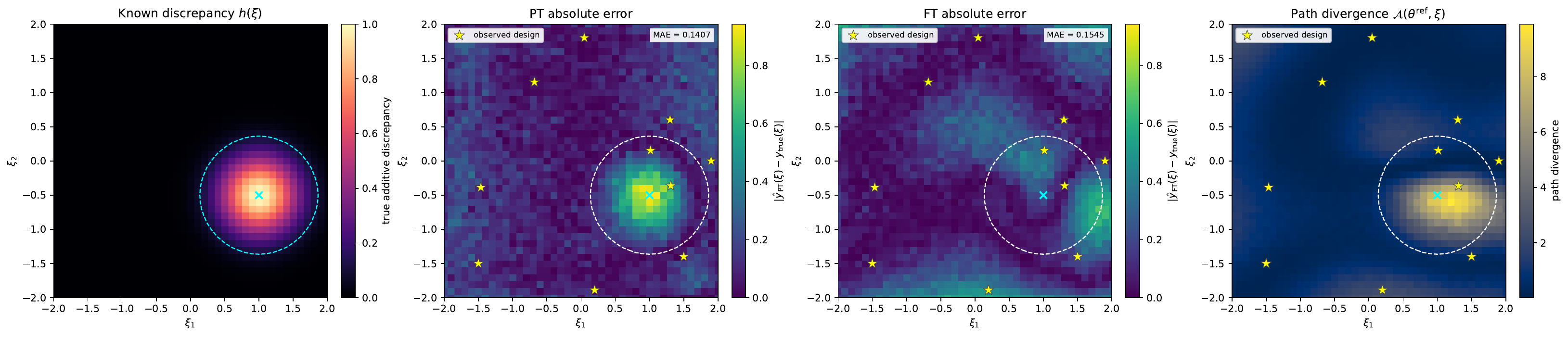}
\captionsetup{skip=4pt}
\caption{Likelihood adaptation and prediction error in the analytic toy at a fixed LSR reference. (\emph{Left to right}) Known discrepancy $h(\xi)$, PT absolute error $e_{\mathrm{PT}}(\xi)$, FT absolute error $e_{\mathrm{FT}}(\xi)$, and path divergence $\mathcal{A}(\theta^\text{ref}, \xi)$.}
\label{fig:design-misspecification-adjustment}
\end{figure}

\subsection{Fine-tuning under simulator misspecification}
\label{sec:imp_of_fine_tune}

\begin{figure}[!t]

\centering
\includegraphics[width=\textwidth]{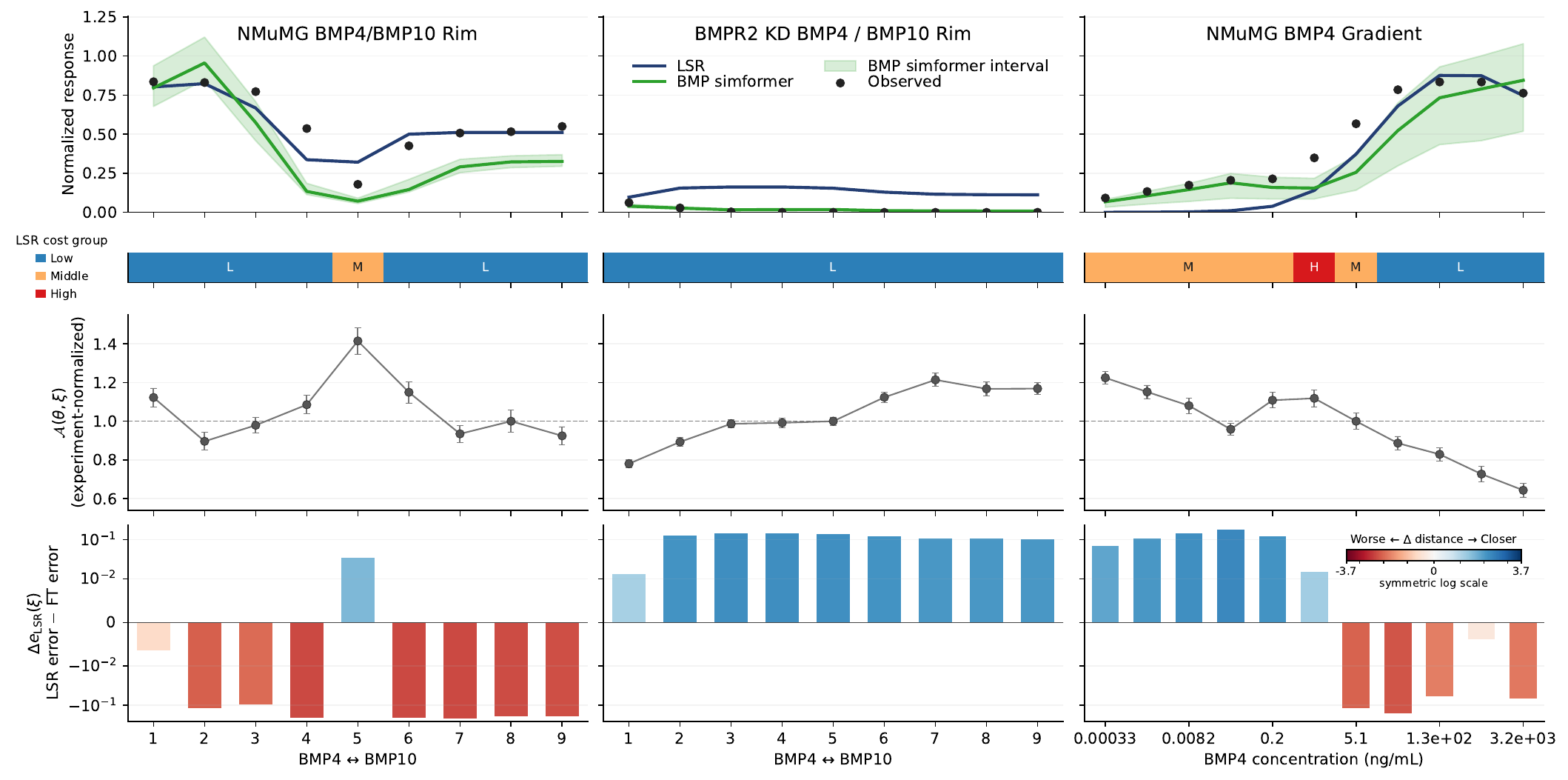}
\vspace{-16pt}
\caption{BMP predictions and design-localized adaptation. (\emph{Left}) BMP4/BMP10 competition in NMuMG. (\emph{Middle}) The same competition in BMPR2 KD. (\emph{Right}) BMP4 titration in NMuMG. From top to bottom: LSR and FT predictions against observations; low (L), middle (M), and high (H) LSR cost groups; experiment-normalized path divergence $\mathcal{A}(\theta,\xi)$; and reduction in absolute prediction error from LSR to the FT bootstrap median.}
\label{fig:rims_comparison}
\vspace{-20pt}
\end{figure}

\paragraph{Fine-tuning and path-divergence localization.} To study how fine-tuning responds to model misspecification and whether path divergence localizes the resulting adaptation, we use a toy linear-Gaussian simulator $y\mid\theta,\xi\sim\mathcal{N}(\xi^\top\theta,\sigma^2)$ with $\theta\sim\mathcal{N}(0,I_2)$, designs $\xi\in[-2,2]^2$, and $\sigma=0.2$. At a fixed generating parameter $\theta^\star$, the noise-free observation response is $y_o(\xi)=\xi^\top\theta^\star+h(\xi)$, where $h$ is a localized Gaussian bump omitted from the simulator. We fine-tune on ten observations with Gaussian noise of variance $\sigma^2$, then assess where the correction helps or hurts across design space. We then estimate the PT and FT likelihood means from 16 samples per design and measure their absolute errors against $y_o(\xi)$, giving the pointwise error maps $e_{\mathrm{PT}}(\xi)$ and $e_{\mathrm{FT}}(\xi)$ in \cref{fig:design-misspecification-adjustment}. We measure mean absolute error (MAE) averages over 1,681 grid points. Fine-tuning reduces error near the bump but adds errors elsewhere; however, employing autoguidance \citep{karras2024autoguidance} to blend the PT \& FT models lowers MAE to $0.12$ (\cref{fig:toy-autoguidance}). Without autoguidance, mean $\mathcal{A}(\theta^{\mathrm{LSR}},\xi)$ inside the misspecified region ($h\geq0.1$) is $4.73$ times that outside and correlates with $h$ (Pearson $r=0.75$). Path divergence and error maps show where fine-tuning helps or over-adapts, a trade-off we explore through regularization and autoguidance sweeps in \cref{sec:design-misspecification-toy}.

\paragraph{SBI misspecification comparison.}
On BMP (\cref{sec:bmp-application}), we compare with Flow Matching Corrected Posterior Estimation (FMCPE), which corrects an NPE using calibration pairs \citep{ruhlmann2026fmcpe}, here constructed from LSR fits. FMCPE uses subset-specific posterior models, whereas FT also adapts the likelihood and supports flexible conditioning, composition, and HBDS. Across the FMCPE observation- and LSR-budget sweep, FT achieves the lowest mean median predictive distance, while FMCPE-LSR has lower best-draw RMSE (\cref{eq:bmp-predictive-metrics,eq:bmp-median-predictive-distance,tab:fmcpe-lsr-budget}). Because both metrics evaluate predictions through the original simulator, the comparison remains subject to its misspecification. We provide the full evaluation protocol alongside PT comparisons and calibration diagnostics in \cref{sec:fmcpe-lsr-details}.

\subsection{BMP application}
\label{sec:bmp-application}

We combine fine-tuning with HBDS to infer shared biophysical parameters and cell-line-specific receptor states from 940 observations across four cell lines, using the model, priors, and training procedures described in \cref{sec:BMP_model,sec:hierarchical_receptors,sec:train_deets,sec:fine_tune_details}. We first hold parameters at the LSR reference to isolate how fine-tuning changes predictions. The bottom row of \cref{fig:rims_comparison} reports $\Delta e_{\mathrm{LSR}}(\xi)$, the LSR simulator's absolute error against $y_o(\xi)$ minus that of the FT bootstrap median, on a symmetric-logarithmic axis that is linear near zero. Positive values favor FT, with improvements throughout the BMPR2-knockdown competition series and at lower BMP4 doses. In the BMP4 titration, these gains generally coincide with larger path divergence, but adaptation in the NMuMG competition series often increases error, showing why the magnitude of adaptation must be interpreted alongside its predictive benefit. The LSR cost-group analysis adds context to these local differences, showing that globally higher-cost fits can perform better at individual designs (\cref{sec:bmp-simformer-misspecification}). When we use the FT model with $\lambda=5\times10^{-4}$ for posterior inference, HBDS yields RMSE $0.34$ and median distance $14.94$, compared with $0.36$ and $66.77$ under pooled sampling. We examine the contributions of the sampler, FiLM, and regularization, together with the aggregate LSR comparison, in \cref{tab:bmp_unified_summary,tab:lambda_sweep,tab:rmse_three_methods_transposed}.

\section{Discussion}
\label{sec:discussion}

Together, these methods enable flexible diffusion SBI across changing observation sets, hierarchical latent structure, and simulator misspecification, with the BMP model providing a large-scale scientific stress test. Rather than requiring a PT SBI model to anticipate this downstream structure, our methods adapt it to the problem at hand.

\paragraph{Limitations and future work.} \textit{Compositional sampling and HBDS} both rely on approximations to intermediate-time posteriors. More accurate intermediate targets and faster sampling with learned flow maps \citep{song2023consistencymodels,boffi2025flowmapmatchingstochastic} could improve compositional inference and HBDS block updates. \textit{Fine-tuning} at a fixed reference parameter $\theta^{\mathrm{ref}}$ can improve misspecified regions while degrading regions that were already well modeled. These results suggest tuning adaptation across experimental designs rather than imposing one global regularization strength, with predictive error and path divergence guiding where to focus updates and where to preserve the PT model. Path-space Fisher geometry over $(\theta,\xi)$ (\cref{sec:fisher-rao-analysis}) further suggests design-dependent adaptation, particle- or geometry-based refinement of $\theta^{\mathrm{ref}}$, and diffusion-based Bayesian experimental design over $\xi$ \citep{iollo2025codiff}.


\section*{Acknowledgements}

We thank the Elowitz Lab for helpful feedback. We also thank UCI's Research Cyberinfrastructure Center (RCIC) for providing and supporting the HPC3 computing resources used in this work \citep{uci2026rcic}. This research was partially funded by the National Institute of General Medical Sciences (NIGMS) of the National Institutes of Health (NIH) under award number 1F31GM145188-01.

\bibliography{uai2026-template}

\newpage
\onecolumn
\startcontents[appendices]
\appendix

\section*{Appendices}
\printcontents[appendices]{}{1}{\section*{Table of Contents}}

\newpage

\section{Compositional sampling details}
\label{sec:proof+derivation}

\subsection{Existence of cumulative transition score}
\label{sec:cum_trans_score}

We begin with a proposition for the existence of a cumulative transition score.

\begin{proposition}[Existence of a cumulative transition score]
Under the reverse--forward consistency assumption in \cref{eq:rev-fwd-consistency}, we construct a cumulative transition $\tilde{p}_k(\theta_{k-1} | \theta_k)$ whose score combines the per-observation reverse scores with the common forward-kernel correction. This gives the compositional transition used below.
\label{prop:cumul_tran_score}
\end{proposition}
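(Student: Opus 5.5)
The plan follows the F-NPSE construction of \citet{geffner23a}. We start from the posterior factorization $p(\theta\mid\mathcal D_n)\propto p(\theta)^{1-n}\prod_{j=1}^n p(\theta\mid y_j,\xi_j)$. At every diffusion step $k$ we define an intermediate target of the same form,
\[
p_k(\theta_k\mid\mathcal D_n)\propto p_k(\theta_k)^{1-n}\prod_{j=1}^n p_k(\theta_k\mid y_j,\xi_j).
\]
We then seek a one-step transition $\tilde p_k(\theta_{k-1}\mid\theta_k,\mathcal D_n)$ built from the same ingredients, namely the per-observation reverse kernels $p(\theta_{k-1}\mid\theta_k,y_j,\xi_j)$ and the prior reverse kernel $p(\theta_{k-1}\mid\theta_k)$. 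The natural candidate is the normalized product
\[
\tilde p_k(\theta_{k-1}\mid\theta_k,\mathcal D_n)\propto p(\theta_{k-1}\mid\theta_k)^{1-n}\prod_{j=1}^n p(\theta_{k-1}\mid\theta_k,y_j,\xi_j).
\]
Its score in $\theta_{k-1}$ is, by construction, the sum of the per-observation reverse scores plus $(1-n)$ times the prior reverse score.

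The key step uses Bayes' rule on each reverse kernel:
\[
p(\theta_{k-1}\mid\theta_k,y_j,\xi_j)=\frac{q(\theta_k\mid\theta_{k-1})\,p_{k-1}(\theta_{k-1}\mid y_j,\xi_j)}{p_k(\theta_k\mid y_j,\xi_j)},
\]
where $q$ is the observation-independent Gaussian forward kernel $\mathcal N(\sqrt{\alpha_k}\theta_{k-1},(1-\alpha_k)I)$. We invoke the reverse--forward consistency assumption (\cref{eq:rev-fwd-consistency}) exactly here. It lets the kernels be treated as arising from one common forward kernel, and lets the marginals at time $k-1$ be replaced by the intermediate targets.

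Substituting into the product, the forward kernel appears $n+(1-n)=1$ times net as a function of $\theta_{k-1}$. The $p_{k-1}$ factors recombine into $p_{k-1}(\theta_{k-1}\mid\mathcal D_n)$ up to normalization, and the denominators depend only on $\theta_k$, so they are absorbed into the constant. The score therefore splits as $\nabla_{\theta_{k-1}}\log q(\theta_k\mid\theta_{k-1})$ plus the composed score at time $k-1$.

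To get the explicit Gaussian form in \cref{eq:geffner-transition-main}, we approximate each reverse kernel by the standard DDPM Gaussian $\mathcal N(\mu_j,(1-\alpha_k)I)$, whose mean comes from the learned score, and do the same for the prior kernel. A product of Gaussian powers with exponents $1,\dots,1,(1-n)$ has precision equal to $1/(1-\alpha_k)$ times the exponent sum, plus the correction from the forward-kernel term. We would verify that this produces $\kappa_k^{(n)}=1+(n-1)(1-\alpha_k)$ together with the combined mean $\mu_k^{\mathrm G}$.

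The main obstacle is justifying normalizability. The factor $p(\theta_{k-1}\mid\theta_k)^{1-n}$ carries a negative exponent, so the product need not integrate. The first thing to try is to show that the net quadratic coefficient stays positive (it equals $\kappa_k^{(n)}>0$), which makes the product a proper Gaussian, and otherwise to restrict to the Gaussian-kernel regime. A second delicate point is how much the consistency assumption really licenses. I would state plainly that it is what makes the $\theta_k$-dependent denominators cancel, and that the result is an approximation to the true intermediate posterior, not an exact identity.
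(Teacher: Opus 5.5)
There is a genuine gap, and it is at the step you defer. Your kernel $p(\theta_{k-1}\mid\theta_k)^{1-n}\prod_j p(\theta_{k-1}\mid\theta_k,y_j,\xi_j)$ has exponents summing to $n+(1-n)=1$. As your own Bayes computation shows, its score therefore contains no forward-kernel correction; it carries a prior-reverse-score correction instead. Suppose you plug in the DDPM Gaussians $\mathcal N(\mu_j,(1-\alpha_k)I)$ for all $n+1$ reverse kernels, writing $s^0$ for the prior score. The precision is then $[n+(1-n)]/(1-\alpha_k)=1/(1-\alpha_k)$, and the mean is $\theta_k/\sqrt{\alpha_k}+\frac{1-\alpha_k}{\sqrt{\alpha_k}}\bigl(\sum_j s_\phi^j+(1-n)s^0\bigr)$. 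That is the ordinary DDPM step with a summed score, i.e.\ $\kappa=1$ rather than $\kappa_k^{(n)}=1+(n-1)(1-\alpha_k)$.

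In the paper the contraction comes from a different object. The target is $\tilde p_k\propto\prod_{j=1}^n p_k^j$, with exponent sum $n$. The kernel is the product of the $n$ per-observation reverse kernels divided by $q_k(\theta_k\mid\theta_{k-1})^{n-1}$, so its score is $\sum_j\nabla\log p_k^j(\theta_{k-1}\mid\theta_k)-(n-1)\nabla\log q_k(\theta_k\mid\theta_{k-1})$. Each power of $q_k$ carries precision $\alpha_k/(1-\alpha_k)$ in $\theta_{k-1}$, so the total precision is $[n-(n-1)\alpha_k]/(1-\alpha_k)=\kappa_k^{(n)}/(1-\alpha_k)$. Putting the prior at exponent $1-n$ inside the product cancels this $q_k^{-(n-1)}$ factor, and the contraction goes with it.

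You also never state or verify the property that makes the object a cumulative transition: $\tilde p_{k-1}(\theta_{k-1})=\int\tilde p_k(\theta_k)\,\tilde p_k(\theta_{k-1}\mid\theta_k)\,d\theta_k$, with $\tilde p_k$ the normalized product of the time-$k$ factors. Your normalized kernel fails this in general. It equals $q_k(\theta_k\mid\theta_{k-1})\tilde p_{k-1}(\theta_{k-1})/\int q_k(\theta_k\mid\theta')\tilde p_{k-1}(\theta')\,d\theta'$, which is the exact reverse kernel for one-step noising of $\tilde p_{k-1}$. It reproduces $\tilde p_{k-1}$ only when integrated against that noised product, not against $\tilde p_k$. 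The two differ because noising does not commute with taking products, which is the whole reason for the construction.

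The paper instead keeps the kernel unnormalized, setting $\tilde p_k(\theta_{k-1}\mid\theta_k)=p_k^1(\theta_{k-1}\mid\theta_k)\frac{Z_k}{Z_{k-1}}\prod_{j\geq2}p_k^j(\theta_{k-1}\mid\theta_k)/q_k(\theta_k\mid\theta_{k-1})$. The consistency assumption is just the per-observation Bayes relation with a common $q_k$; it does not replace time-$(k-1)$ marginals by intermediate targets. Applied to each ratio, it gives $p_{k-1}^j(\theta_{k-1})/p_k^j(\theta_k)$, so $\tilde p_k(\theta_k)\tilde p_k(\theta_{k-1}\mid\theta_k)=Z_{k-1}^{-1}p_k^1(\theta_k)p_k^1(\theta_{k-1}\mid\theta_k)\prod_{j\geq2}p_{k-1}^j(\theta_{k-1})$. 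Integrating out $\theta_k$ with the $j=1$ consistency relation returns $\tilde p_{k-1}(\theta_{k-1})$. Normalization enters only afterwards, in the Gaussian approximation, where $\kappa_k^{(n)}>0$ makes it proper. So your normalizability concern is not where the difficulty lies; the missing pieces are the $q_k^{-(n-1)}$ factor and the composition check.
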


\begin{proof}[Proof (existence of a cumulative transition kernel)]
Fix a discrete step $k\in\{1,\ldots,K\}$. For each observation $y_j$ let $p_k^{\,j}(\theta_k)$ denote the marginal at diffusion time $t_k$ and $q_k(\theta_k|\theta_{k-1})$ the common forward kernel.   Assume the (exact) reverse–forward consistency \begin{equation} p_k^{\,j}(\theta_k)\,p_k^{\,j}(\theta_{k-1}|\theta_k) \;=\; p_{k-1}^{\,j}(\theta_{k-1})\,q_k(\theta_k|\theta_{k-1}) \qquad\text{for each } j \in \{1,\dots,n\}.
\label{eq:rev-fwd-consistency}
\end{equation}

We seek a transition kernel $\tilde p_k(\theta_{k-1}|\theta_k)$ that satisfies the composition condition (cf.\ Eq.~(24) of \citet{geffner23a})
\begin{equation}
\tilde p_{k-1}(\theta_{k-1})
 \;=\; \int \tilde p_k(\theta_k)\,
 \tilde p_k(\theta_{k-1}|\theta_k)\,d\theta_k.
\label{eq:composition-condition}
\end{equation}

where $\tilde p_k(\theta_k)=Z_k^{-1}\prod_j^n p_k^{\,j}(\theta_k)$. Following \citet{geffner23a}, we rewrite \cref{eq:composition-condition} (cf.\ Eqs.~(25)–(26) therein) by using \cref{eq:rev-fwd-consistency} to express ratios of marginals and reverse kernels:
\[
\tilde p_{k-1}(\theta_{k-1})
= \int d\theta_k\, p_k^{\,1}(\theta_k)\,
   \frac{q_k(\theta_k|\theta_{k-1})}{p_k^{\,2}(\theta_{k-1}|\theta_k)}
   \cdots
   \frac{q_k(\theta_k|\theta_{k-1})}{p_k^{\,n}(\theta_{k-1}|\theta_k)}
   \frac{Z_k}{Z_{k-1}}\,
   \tilde p_k(\theta_{k-1}|\theta_k).
\]

One way to satisfy this equality, as noted in \citet{geffner23a}, is to set
\begin{equation}
\tilde p_k(\theta_{k-1}|\theta_k)
= p_k^{\,1}(\theta_{k-1}|\theta_k)\,
   \frac{Z_k}{Z_{k-1}}\,
   \frac{p_k^{\,2}(\theta_{k-1}|\theta_k)}{q_k(\theta_k|\theta_{k-1})}
   \cdots
   \frac{p_k^{\,n}(\theta_{k-1}|\theta_k)}{q_k(\theta_k|\theta_{k-1})}.
\label{eq:cum-transition}
\end{equation}

Hence, under exact reversibility, a cumulative transition kernel of the form exists, establishing the claim. \qedhere
\end{proof}

This derivation follows the compositional construction of \citet{geffner23a}, using the per-observation reverse--forward consistency relation in \cref{eq:rev-fwd-consistency}. We spell out the intermediate algebra to clarify how the individual reverse kernels and common forward kernel enter the cumulative transition, then derive its score under a Gaussian parameterization in \cref{sec:rev_kernel_der}. Our aim is to make the cumulative transition score explicit under these assumptions, rather than to establish an exact reverse kernel for the joint posterior.

\subsection{Derivation of the reverse transition kernel}
\label{sec:rev_kernel_der}

We next derive the analytic form of the cumulative transition score under a Gaussian diffusion parameterization. Starting from the cumulative reverse transition (cf.\ Eq.~\eqref{eq:cum-transition})
\[
\tilde p_k(\theta_{k-1}|\theta_k)
= p_k^{\,1}(\theta_{k-1}|\theta_k)\,
   \frac{Z_k}{Z_{k-1}}\,
   \prod_{j=2}^n \frac{p_k^{\,j}(\theta_{k-1}|\theta_k)}{q_k(\theta_k|\theta_{k-1})},
\]
where $Z_k$ and $Z_{k-1}$ are normalization constants ensuring integration to one. We retain the DDPM notation, with $k$ indexing discrete steps at diffusion times $t_k$ and $\alpha_k$ denoting one-step signal retention, distinct from the continuous-time cumulative factor $\alpha(t)$ used in \cref{sec:forward_kernel_derivation}. The Gaussian kernels are parameterized as
\begin{align}
    p_k^{\,j}(\theta_{k-1}|\theta_k)&=\mathcal{N} \big(\theta_{k-1};\,\mu_k^{\,j},(1-\alpha_k)I\big),
    \label{eq:backward_forward_kernels}\\
    q_k(\theta_k|\theta_{k-1})&=\mathcal{N} \big(\theta_k;\,\sqrt{\alpha_k}\,\theta_{k-1},(1-\alpha_k)I\big),
    \label{eq:gaussian-forward-kernel}
\end{align}
and we differentiate $\log \tilde p_k(\theta_{k-1} | \theta_k)$ with respect to $\theta_{k-1}$ to get
\begin{equation}
\nabla_{\theta_{k-1}}\log \tilde p_k(\theta_{k-1} | \theta_k)
= \sum_{j=1}^n \nabla_{\theta_{k-1}}\log p_k^{\,j}(\theta_{k-1} | \theta_k)
  - (n-1)\,\nabla_{\theta_{k-1}}\log q_k(\theta_k | \theta_{k-1}),
\label{eq:sum-of-reverse-minus-forward}
\end{equation}

where the normalization constants cancel after differentiation. For the reverse Gaussian kernels, the mean can be written in terms of the learned score function as
\begin{equation}
\mu_k^{\,j}
= \frac{1}{\sqrt{\alpha_k}}\,\theta_k
  + \frac{1-\alpha_k}{\sqrt{\alpha_k}}\,s_\phi^{\,j}(\theta_k,t_k),
\label{eq:mu-score-identity}
\end{equation}
where $s_\phi^{\,j}(\theta_k,t_k)$ denotes the estimated score for observation $y_j$. Before substituting the score of the $\log$ of these kernels into \cref{eq:sum-of-reverse-minus-forward}, we recall two Gaussian score identities. For a Gaussian density $\mathcal{N}(x;\mu,\Sigma)$, the score with respect to the argument $x$ is
\[
\nabla_x \log \mathcal{N}(x;\mu,\Sigma) = -\Sigma^{-1}(x-\mu),
\]
while the score with respect to the mean $\mu$ is
\[
\nabla_\mu \log \mathcal{N}(x;\mu,\Sigma) = \Sigma^{-1}(x-\mu).
\] 
Applying these identities gives
\begin{align}
\nabla_{\theta_{k-1}}\log p_k^{\,j}(\theta_{k-1} | \theta_k)
&= \frac{\mu_k^{\,j}-\theta_{k-1}}{1-\alpha_k},
\label{eq:gaussian-score-identities}\\
\nabla_{\theta_{k-1}}\log q_k(\theta_k | \theta_{k-1})
&= \frac{\sqrt{\alpha_k}}{1-\alpha_k}\,\theta_k - \frac{\alpha_k}{1-\alpha_k}\,\theta_{k-1}.
\label{eq:gaussian-forward-score}
\end{align}

We then substitute these into \cref{eq:sum-of-reverse-minus-forward} and group by $\theta_k$ and $\theta_{k-1}$ terms to form the cumulative transition kernel in terms of the mean values at the transition
\begin{align}
\nabla_{\theta_{k-1}}\log \tilde p_k(\theta_{k-1}|\theta_k)
&= \sum_{j=1}^n \frac{\mu_k^{\,j}-\theta_{k-1}}{1-\alpha_k}
   \;-\;(n-1)\left(\frac{\sqrt{\alpha_k}}{1-\alpha_k}\theta_k
                   - \frac{\alpha_k}{1-\alpha_k}\theta_{k-1}\right) \notag \\[6pt]
&= \frac{\sum_{j=1}^n \mu_k^{\,j} - n\,\theta_{k-1}}{1-\alpha_k}
   - \frac{(n-1)\sqrt{\alpha_k}}{1-\alpha_k}\,\theta_k
   + \frac{(n-1)\alpha_k}{1-\alpha_k}\,\theta_{k-1} \notag \\[6pt]
&= \frac{\sum_{j=1}^n \mu_k^{\,j} - (n-1)\sqrt{\alpha_k}\,\theta_k}{1-\alpha_k}
   - \frac{n - \alpha_k(n-1)}{1-\alpha_k}\,\theta_{k-1}.
   \label{eq:grouped-theta}
\end{align}

We can now identify the terms previously used by \citet{geffner23a} for compositional sampling. Matching the grouped score to the Gaussian identity
\[
\nabla_{\theta_{k-1}}\log \mathcal{N}(\theta_{k-1};\mu_k,\sigma_k^2)
= \sigma_k^{-2}\mu_k - \sigma_k^{-2}\theta_{k-1},
\]
we identify the coefficients of $\theta_{k-1}$ and the constant term as
\[
\sigma_k^{-2} = \frac{n-\alpha_k(n-1)}{1-\alpha_k},
\qquad
\sigma_k^{-2}\mu_k = \frac{1}{1-\alpha_k}\sum_{j=1}^n \mu_k^{\,j}
- \frac{(n-1)\sqrt{\alpha_k}}{1-\alpha_k}\,\theta_k.
\]
Therefore,
\begin{align}
\mu_k &= \frac{\sum_{j=1}^n \mu_k^{\,j} - (n-1)\sqrt{\alpha_k}\,\theta_k}
{\,n-\alpha_k(n-1)},
\label{eq:normalized-mu-sigma}\\
\sigma_k^2 &= \frac{1-\alpha_k}{\,n-\alpha_k(n-1)},
\label{eq:normalized-transition-variance}
\end{align}
as originally used by \citet{geffner23a}. We can now substitute approximate scores for a final cumulative score expression, $\nabla_{\theta_{k-1}}\log \tilde p_k(\theta_{k-1} | \theta_k)$. Using the equivalence between diffusion means and scores \eqref{eq:mu-score-identity},
\[
\sum_{j=1}^n \mu_k^{\,j}
= \frac{n}{\sqrt{\alpha_k}}\,\theta_k
  + \frac{1-\alpha_k}{\sqrt{\alpha_k}} \sum_{j=1}^n s_\phi^{\,j}(\theta_k,t_k),
\]
Writing the right-hand side of \cref{eq:grouped-theta} as $\mathrm{RHS}$, we substitute the score parameterization to obtain
\begin{align}
\text{RHS} 
&= \frac{1}{1-\alpha_k}\sum_{j=1}^n \mu_k^{\,j}
   - \frac{(n-1)\sqrt{\alpha_k}}{1-\alpha_k}\,\theta_k
   + \frac{(n-1)\alpha_k - n}{1-\alpha_k}\,\theta_{k-1}
   \notag \\
&= \frac{1}{1-\alpha_k}\left(
      \frac{n}{\sqrt{\alpha_k}}\theta_k
      + \frac{1-\alpha_k}{\sqrt{\alpha_k}}\sum_{j=1}^n s_\phi^{\,j}(\theta_k,t_k)
   \right)
   - \frac{(n-1)\sqrt{\alpha_k}}{1-\alpha_k}\,\theta_k
   + \frac{(n-1)\alpha_k - n}{1-\alpha_k}\,\theta_{k-1} \notag \\[4pt]
&= \frac{1}{\sqrt{\alpha_k}}\sum_{j=1}^n s_\phi^{\,j}(\theta_k,t_k)
   + \left[\frac{n}{(1-\alpha_k)\sqrt{\alpha_k}}
           - \frac{(n-1)\sqrt{\alpha_k}}{1-\alpha_k}\right]\theta_k
   + \frac{(n-1)\alpha_k - n}{1-\alpha_k}\,\theta_{k-1} \notag \\[4pt]
&= \frac{1}{\sqrt{\alpha_k}}\sum_{j=1}^n s_\phi^{\,j}(\theta_k,t_k)
   + \frac{\big(n-(n-1)\alpha_k\big)\theta_k
          + \sqrt{\alpha_k}\,\big((n-1)\alpha_k - n\big)\theta_{k-1}}
          {(1-\alpha_k)\sqrt{\alpha_k}}. 
\label{eq:cumulative_score}
\end{align}

Finally, we arrive at the cumulative transition score. To apply this insight in practice, we need to define the $\mu_k$ and $\sigma_k^2$ terms that will be used in a Gaussian transition. Grouping the $\theta_k$ and $\theta_{k-1}$ terms as before and simplifying we arrive at the mean transition in terms of different scores

\begin{align}
\mu_k
&=
\frac{1}{\sqrt{\alpha_k}}\theta_k
+
\frac{1-\alpha_k}
{\sqrt{\alpha_k}\big(n-\alpha_k(n-1)\big)}
\sum_{j=1}^n s_\phi^j(\theta_k,t_k),
\label{eq:normalized-mu-sigma-final}\\
\sigma_k^2
&=
\frac{1-\alpha_k}{\,n-\alpha_k(n-1)}.
\label{eq:score-transition-variance}
\end{align}

Notice that both the aggregate score contribution and the transition variance contain the common factor $n-\alpha_k(n-1)$, which depends jointly on the number of composed observations and the noise schedule. We denote this quantity by
\begin{equation*}
\kappa_k^{(n)}
:=
n-\alpha_k(n-1),
\end{equation*}
and refer to it as the \emph{compositional precision factor for $n$ composed observations}. It controls how composition rescales the aggregate score and the variance of the normalized reverse transition.

The mean and variance of this Gaussian reverse kernel define a DDPM-style predictor,
\begin{equation}
\theta_{k-1}
= \frac{1}{\sqrt{\alpha_k}}\,\theta_k
\,+\,
\frac{1-\alpha_k}{\sqrt{\alpha_k} \, \kappa_k^{(n)} }
\sum_{j=1}^{n} s_\phi^{\,j}(\theta_k,t_k)
\,+\,
\sqrt{\frac{1-\alpha_k}{ \kappa_k^{(n)} }}\,z_k,
\; z_k \sim \mathcal{N}(0,I).
\label{eq:comp_sde}
\end{equation}
To relate these steps to a continuous VP schedule, take a forward-time grid $0=t_0<\cdots<t_K=1$ and let $\alpha(t)$ denote cumulative squared signal retention, with $\alpha(0)=1$. The one-step factors and their cumulative product satisfy
\begin{align}
  \alpha_k&=\frac{\alpha(t_k)}{\alpha(t_{k-1})},
  \label{eq:ddpm-cumulative-retention}\\
  \prod_{j=1}^{k}\alpha_j&=\alpha(t_k).
  \label{eq:ddpm-retention-product}
\end{align}
The score network is evaluated at the corresponding continuous time $t_k$, while $\alpha_k$ controls only the transition between $t_{k-1}$ and $t_k$.

\subsection{Deriving the continuous-time diffusion coefficient}
\label{sec:forward_kernel_derivation}

We seek a continuous-time counterpart to the discrete predictor in \cref{eq:comp_sde}. Here $t$ denotes continuous diffusion time, $\alpha(t)$ denotes cumulative squared signal retention from the clean state, and a dot denotes differentiation with respect to $t$. The one-step DDPM factors remain $\alpha_k$, with discrete steps indexed by $k$. For a forward process with drift $f(t)\theta$, diffusion coefficient $g(t)$, and corresponding marginal $p_t$, the reverse SDE admits a predictor-corrector (PC) decomposition \citep{song2021scorebased},
 
\begin{equation}
  \mathrm{d}\Theta_t = 
  \underbrace{\big(f(t)\theta_t - \tfrac{1}{2} g(t)^2 \nabla \log p_t(\theta_t)\big)\,\mathrm{d}t}_{\text{Probability Flow Prediction ODE}}
  + 
  \underbrace{\big(-\tfrac{1}{2} g(t)^2 \nabla \log p_t(\theta_t)\,\mathrm{d}t + g(t)\,\mathrm{d}\widetilde{w}_t\big)}_{\text{Langevin Correction SDE}}.
\label{eq:ode_langevin}
\end{equation}

Sampling follows this equation toward decreasing $t$. The probability-flow term advances samples between noise levels, while the Langevin term provides score-driven exploration at the current level. The shared coefficient $g(t)$ determines the strength of both the score-driven motion and the injected noise. For compositional sampling, we seek an observation-count-dependent choice $g(t)=g_n(t)$ to improve the stability and numerical accuracy of these corrections. We examine its effects and the associated exploration trade-offs in \cref{sec:langevin-refinement}.

Here we construct $g_n(t)$ by retaining the ordinary VP mean attenuation and prescribing a Gaussian noising variance that contracts with the observation count $n$, motivated by the discrete variance contraction in \cref{eq:comp_sde}. Specifying this conditional kernel does not require knowing the clean posterior or assuming that it is Gaussian. With the VP drift fixed, the diffusion coefficient then follows from the chosen variance evolution.

\paragraph{Deriving the diffusion coefficient $g_n(t)$.}

Let $x=x_{1:n}$ collect the observed design--response pairs. Suppose we could draw $\theta_0\sim p_0(\theta\mid x)$ from the clean posterior. Ordinary VP noising would perturb this draw according to
\[
\Theta_t\mid\Theta_0=\theta_0
\sim
\mathcal N\!\left(
\sqrt{\alpha(t)}\,\theta_0,\,
[1-\alpha(t)]I
\right).
\]
Here $\alpha(t)$ is the cumulative squared signal retention, with $\alpha(0)=1$ and $\dot\alpha(t)=-\beta(t)\alpha(t)$ for the VP noise-rate schedule $\beta(t)\geq0$. This kernel has accumulated noise variance $1-\alpha(t)$ per coordinate, independent of the number of composed observations $n$.

The discrete compositional update in \cref{eq:comp_sde} contracts its one-step Gaussian reverse-transition variance by $\kappa_k^{(n)}=1+(n-1)(1-\alpha_k)$. We use the same functional form with cumulative retention $\alpha(t)$ to prescribe the conditional noising variance, defining a distinct cumulative contraction factor $\kappa^{(n)}(t)$,
\begin{align}
\operatorname{Cov}(\Theta_t\mid\Theta_0=\theta_0)
&=
V_n(t)I,
\label{eq:compositional-conditional-covariance}\\
V_n(t)
&\coloneqq
\frac{1-\alpha(t)}{\kappa^{(n)}(t)},
\label{eq:compositional-cumulative-variance}\\
\kappa^{(n)}(t)
&\coloneqq
1+(n-1)(1-\alpha(t)).
\label{eq:continuous-compositional-precision}
\end{align}
This cumulative contraction is a modeling prescription, not the infinitesimal limit of the DDPM transition. The factors $\kappa_k^{(n)}$ and $\kappa^{(n)}(t)$ share a functional form but act on one-step and accumulated noise variances, respectively. The prescribed kernel recovers ordinary VP noising at $n=1$ and reduces the noising variance for $n>1$.

With $V_n(t)$ specified, the remaining task is to find the diffusion coefficient $g_n(t)$ that produces it while retaining the VP drift $f_n(t)=-\beta(t)/2$. For the forward SDE $\mathrm d\Theta_t=f_n(t)\Theta_t\,\mathrm dt+g_n(t)\,\mathrm dw_t$, applying It\^o's formula to each coordinate's squared deviation from its conditional mean and taking expectations gives the variance evolution
\begin{equation}
\dot V_n(t)
=
2f_n(t)V_n(t)+g_n(t)^2,
\qquad
V_n(0)=0.
\label{eq:variance-evolution}
\end{equation}
The first term comes from differentiating the squared deviation under the drift. With the chosen VP drift, $2f_n(t)V_n(t)=-\beta(t)V_n(t)$, so this contribution contracts the existing variance. The second term is It\^o's additional contribution from the noise increment's variance $g_n(t)^2\,\mathrm dt$, which the ordinary chain rule would miss. The remaining stochastic term has zero expectation. Consequently, the required diffusion variance rate is
\begin{equation}
g_n(t)^2
=
\dot V_n(t)-2f_n(t)V_n(t)
=
\dot V_n(t)+\beta(t)V_n(t).
\label{eq:diffusion-from-variance}
\end{equation}
Thus, $g_n(t)^2$ must both compensate for contraction under the
drift and produce the prescribed increase in cumulative variance;
it is not simply $V_n(t)$ or $\dot V_n(t)$.

To evaluate this expression, we first differentiate the prescribed
variance with respect to cumulative retention, then apply the chain
rule to obtain its time derivative. Holding $n$ fixed and temporarily
writing $\alpha$ for $\alpha(t)$, the numerator $1-\alpha$ has
derivative $-1$, while the denominator
$n-(n-1)\alpha$ has derivative $-(n-1)$. The quotient rule gives
\[
\begin{aligned}
\frac{\mathrm d}{\mathrm d\alpha}
\left(
\frac{1-\alpha}{n-(n-1)\alpha}
\right)
&=
\frac{
(-1)\bigl[n-(n-1)\alpha\bigr]
-
(1-\alpha)\bigl[-(n-1)\bigr]
}{
[n-(n-1)\alpha]^2
}
\\
&=
\frac{
-n+(n-1)\alpha+(n-1)-(n-1)\alpha
}{
[n-(n-1)\alpha]^2
}
\\
&=
-\frac{1}{[n-(n-1)\alpha]^2}.
\end{aligned}
\]
The terms proportional to $\alpha$ cancel, leaving
$-n+(n-1)=-1$ in the numerator. Restoring the time dependence
and applying the chain rule yields
\begin{equation}
\begin{aligned}
\dot V_n(t)
&=
\left.
\frac{\mathrm d}{\mathrm d\alpha}
\left(
\frac{1-\alpha}{n-(n-1)\alpha}
\right)
\right|_{\alpha=\alpha(t)}
\dot\alpha(t)
\\
&=
-\frac{\dot\alpha(t)}{[\kappa^{(n)}(t)]^2}
=
\frac{\beta(t)\alpha(t)}{[\kappa^{(n)}(t)]^2},
\end{aligned}
\label{eq:compositional-variance-derivative}
\end{equation}
where the final equality uses
$\dot\alpha(t)=-\beta(t)\alpha(t)$.

Substituting this derivative and the prescribed variance
$V_n(t)=(1-\alpha(t))/\kappa^{(n)}(t)$ into
\cref{eq:diffusion-from-variance} gives
\[
\begin{aligned}
g_n(t)^2
&=
\frac{\beta(t)\alpha(t)}{[\kappa^{(n)}(t)]^2}
+
\beta(t)\frac{1-\alpha(t)}{\kappa^{(n)}(t)}
\\
&=
\beta(t)
\frac{
\alpha(t)+(1-\alpha(t))\kappa^{(n)}(t)
}{
[\kappa^{(n)}(t)]^2
}.
\end{aligned}
\]
To simplify the numerator, substitute
$\kappa^{(n)}(t)=1+(n-1)(1-\alpha(t))$:
\[
\begin{aligned}
\alpha(t)+(1-\alpha(t))\kappa^{(n)}(t)
&=
\alpha(t)
+
(1-\alpha(t))
\bigl[1+(n-1)(1-\alpha(t))\bigr]
\\
&=
\alpha(t)+(1-\alpha(t))
+(n-1)(1-\alpha(t))^2
\\
&=
1+(n-1)(1-\alpha(t))^2.
\end{aligned}
\]
Consequently, the diffusion variance rate is
\begin{equation*}
g_n(t)^2
=
-\frac{\dot\alpha(t)}{\alpha(t)}
\frac{
1+(n-1)(1-\alpha(t))^2
}{
[\kappa^{(n)}(t)]^2
}.
\end{equation*}
Equivalently, in terms of the VP noise-rate schedule,
\begin{equation}
g_n(t)^2
=
\beta(t)
\frac{
1+(n-1)(1-\alpha(t))^2
}{
[1+(n-1)(1-\alpha(t))]^2
}.
\label{eq:compositional-diffusion-beta}
\end{equation}

For $n=1$, $\kappa^{(1)}(t)=1$, so the prescribed cumulative variance reduces to $V_1(t)=1-\alpha(t)$ and the diffusion coefficient satisfies $g_1(t)^2=\beta(t)$, recovering the ordinary VP-SDE. For general $n$, these coefficients realize the chosen linear Gaussian forward process with ordinary VP mean attenuation and reduced cumulative noising variance $V_n(t)$.

The coefficient satisfies $g_n(t)^2\leq\beta(t)$ and depends only on diffusion time and observation count. It therefore provides a composition-dependent sampling scale without score Jacobians or auxiliary covariance estimates. This construction specifies a chosen forward noising process, rather than establishing an exact reverse process for the product bridge.

\subsection{Langevin refinement and numerical accuracy}
\label{sec:langevin-refinement}

Discretizing the Langevin term in \cref{eq:ode_langevin} gives an unadjusted Langevin algorithm (ULA) correction between predictor steps. We now examine how the choice $g(t)=g_n(t)$ affects the stability and accuracy of these updates. We assess how efficiently these corrections explore the compositional marginals $\widetilde p_t$ introduced in \cref{sec:comp_samp}.

\paragraph{Mixing at fixed diffusion time.} The predictor advances between noise levels, while Langevin corrections help particles explore each intermediate target before moving on. Their effectiveness depends on both the amount of exploration and the numerical accuracy of the updates. \textit{Strongly curved or anisotropic marginals} can make these corrections sensitive, e.g. rapidly contracting directions may force small steps even when exploration in other directions requires much longer trajectories -- a form of numerical stiffness. Starting from the predictor output $\theta_t^{(0)}$ and writing $\widetilde s_t=\nabla_\theta\log\widetilde p_t$, one ULA correction is
\begin{equation}
  \theta_t^{(\ell+1)}
  =\theta_t^{(\ell)}
   +\underbrace{\frac{\eta g(t)^2}{2}\,\widetilde s_t(\theta_t^{(\ell)})}_{\text{drift update}}
   +\underbrace{\sqrt{\eta}\,g(t)\,\varepsilon_\ell}_{\text{diffusion noise}},
  \qquad \varepsilon_\ell\overset{\mathrm{iid}}{\sim}\mathcal N(0,I_d).
\label{eq:ula-corrector}
\end{equation}
Here $L$ is the number of corrections, $\ell=0,\ldots,L-1$ indexes them, and $d$ is the parameter dimension. The Langevin correction step size $\eta>0$ controls the size of each update while the diffusion noise level $t$ remains unchanged. This update uses the coefficient $g(t)=g_n(t)$ from \cref{eq:ode_langevin}. Its score-driven drift $\tfrac12 g(t)^2\widetilde s_t$ moves toward higher density, while its noise covariance $\eta g(t)^2I_d$ supplies exploration. The term $f(t)\theta$ belongs to the predictor and is not part of this within-level correction. ``Unadjusted'' means there is no Metropolis accept/reject step. Consequently, for a fixed nonzero step size, a stable ULA chain generally has an invariant distribution that differs from $\widetilde p_t$, introducing discretization bias \citep{roberts1996langevin,durmus2017ula}.

\paragraph{Numerical accuracy of Langevin correction.} As a one-dimensional Gaussian illustration of this step-size restriction, consider the target $\mathcal N(m,v)$ with mean $m$ and variance $v$. Substituting its score $\widetilde s_t(\theta)=-(\theta-m)/v$ into \cref{eq:ula-corrector} gives
\begin{equation}
  \theta_t^{(\ell+1)}-m
  =\left(1-\frac{\eta g(t)^2}{2v}\right)(\theta_t^{(\ell)}-m)
   +\sqrt{\eta}\,g(t)\,\varepsilon_\ell,
  \qquad \varepsilon_\ell\sim\mathcal N(0,1).
\label{eq:ula-gaussian-contraction}
\end{equation}
For example, if $\eta g(t)^2=6v$, the deterministic part takes a displacement $\delta$ from the mean to $-2\delta$. It crosses the mean and ends twice as far away. More generally, two trajectories receiving identical noise have their separation multiplied by $1-\eta g(t)^2/(2v)$, which amplifies its magnitude when $\eta g(t)^2>4v$. Here the score changes at rate $-1/v$ with the parameter, so a narrower target produces a more rapidly varying restoring drift. \textit{Stability therefore depends on the diffusion coefficient and correction step size relative to the target variance, not on score magnitude alone.}

\textit{Numerical stability does not by itself imply accurate sampling.} Even in the stable range $0<\eta g(t)^2<4v$, this chain has stationary variance $v/[1-\eta g(t)^2/(4v)]$, which exceeds the target variance $v$. Reducing $g(t)^2$ or $\eta$ reduces this discretization bias but also slows exploration, potentially requiring more corrections. Increasing $L$ alone does not remove the bias.

In this illustration, using $g_n(t)^2\leq\beta(t)$ reduces the update scale relative to $v$ and, within the stable range, the stationary-variance inflation at a given $\eta$. Because the schedule depends on observation count rather than local curvature, the step size and correction count still need to be chosen appropriately.

\paragraph{Choosing correction step sizes and counts.} \citet[\S3]{touron2026annealed} give rules for choosing the step size and number of Langevin steps together to meet a desired Wasserstein error tolerance in annealed Langevin sampling. Their guarantees require smooth, strongly log-concave intermediate densities and controlled score error. In their bound, more steps reduce mixing error but do not remove the residual error from the finite step size and score approximation. The required number of steps also depends on how much the target changes between noise levels, so the correction count need not be the same throughout sampling.

\paragraph{Choosing diffusion noise levels.} Choosing the noise levels themselves is a separate optimization problem. For example, \citet{williams2024score} construct score-optimal schedules by estimating the cost of moving between successive marginals and redistributing the time points to approximately equalize that cost. This requires an additional cost-estimation and grid-adaptation procedure, distinct from choosing the number of ULA corrections.

\subsection{Experimental evaluation}
\label{sec:compositional-coefficient-evaluation}

\paragraph{Gaussian benchmark.} We used the correlated Gaussian model of \citet{linhart2026tall}, with parameter dimension $d=10$ and observation-noise correlation $\rho=0.8$. The prior and conditionally independent observations are
\begin{align}
  \theta&\sim\mathcal N(0,I_d),
  \label{eq:gaussian-benchmark-prior}\\
  x_j\mid\theta&\overset{\mathrm{iid}}{\sim}\mathcal N(\theta,\Sigma),
  \label{eq:gaussian-benchmark-model}\\
  \Sigma&=(1-\rho)I_d+\rho\mathbf 1\mathbf 1^\top,
  \label{eq:gaussian-benchmark-covariance}
\end{align}
for $j=1,\ldots,n$, where $\mathbf 1\in\mathbb R^d$ is the all-ones vector. The exact posterior is $p(\theta\mid x_{1:n})=\mathcal N(\theta;m_0,C_0)$, with mean $m_0$ and covariance $C_0$ before diffusion noise is added, given by
\[
  C_0=(I_d+n\Sigma^{-1})^{-1},\qquad
  m_0=C_0\Sigma^{-1}\sum_{j=1}^n x_j.
\]

\paragraph{Controlled Gaussian setup.} We compared the derived diffusion variance rate $g_n(t)^2$ with the ordinary VP rate $g_{\mathrm{VP}}(t)^2=\beta(t)$ on the same prescribed Gaussian path. We examined numerical stability, covariance tracking, and posterior-sampling accuracy as $n$ increases. Covariance tracking compares the sample covariance with its analytic target at each noise level. It checks whether the samples have the intended spread and correlations, which numerical stability alone does not guarantee. Exact scores let us isolate these effects without score-estimation error, and each paired comparison held the terminal particles, random noise draws, time grid, and correction count fixed. For Euler--Maruyama sampling, we used $\beta(t)=0.05+19.95t$, $\alpha(t)=\exp(-0.05t-9.975t^2)$, and the forward process from \cref{eq:compositional-cumulative-variance}. At diffusion time $t$, this gives a Gaussian marginal with mean $m_t$, covariance $C_t$, and exact score $s_t^{\mathrm G}$, the gradient of its log density,
\[
  m_t=\sqrt{\alpha(t)}\,m_0,\qquad
  C_t=\alpha(t)C_0+\frac{1-\alpha(t)}{\kappa^{(n)}(t)}I_d,\qquad
  s_t^{\mathrm G}(\theta)=-C_t^{-1}(\theta-m_t).
\]
Both coefficient choices, $c(t)=g_n^2(t)$ and $c(t)=\beta(t)$, retained this score, the linear reverse drift $\beta(t)\theta/2$, and the terminal law $\mathcal N(m_1,C_1)$. Each run used 1,000 particles and 400 equal reverse-time steps from $1$ to $0$, with step size $h=1/400$ and no final Tweedie correction. We replaced the diffusion coefficient in both the predictor and ULA correctors while keeping the score and initialization unchanged. This isolates the effect of using $g_n(t)^2$ rather than $\beta(t)$ for the prescribed compositional noise path.

\begin{figure*}[!t]
  \centering
  \includegraphics[width=0.98\textwidth]{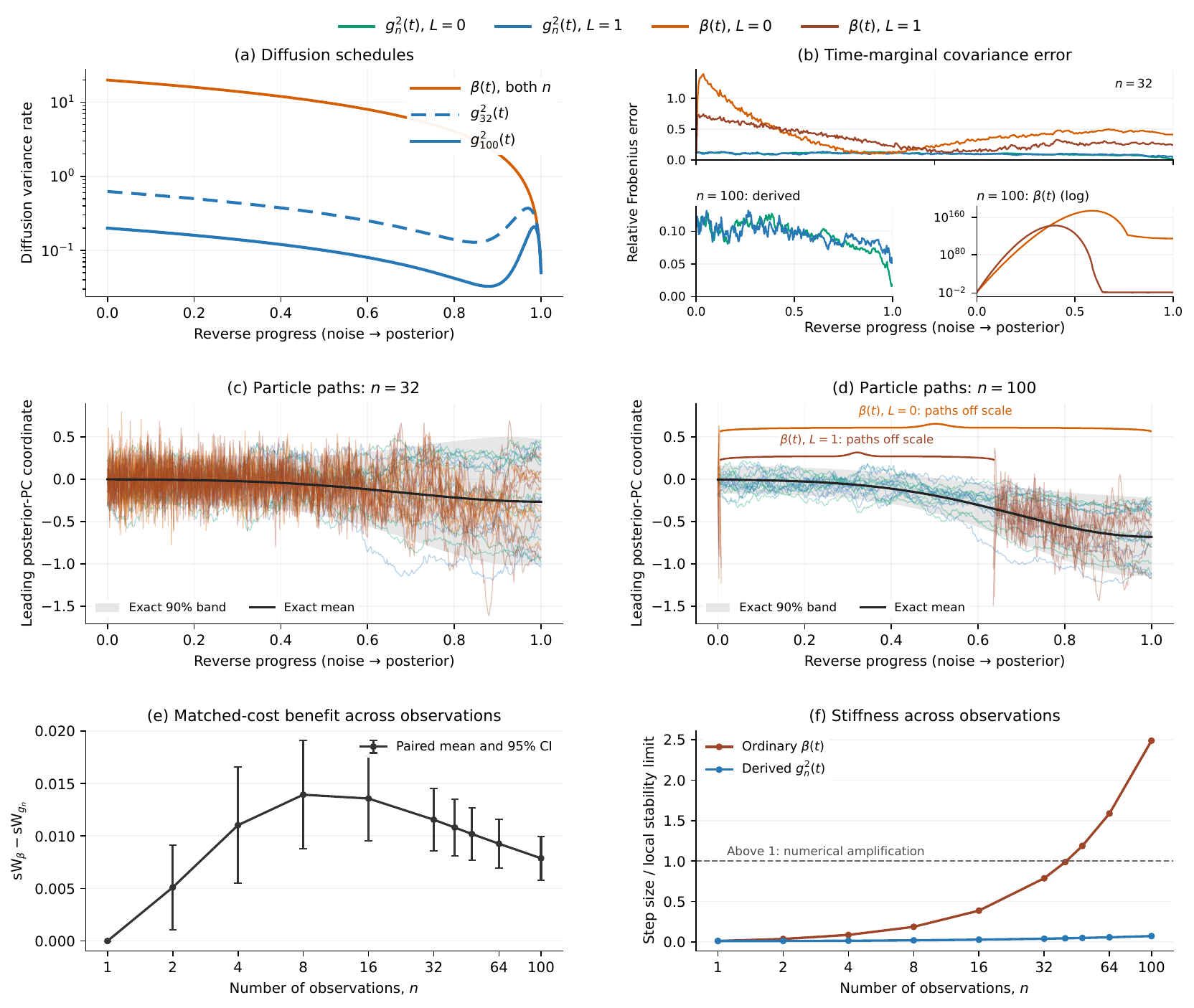}
  \caption{Compositional diffusion improves finite-budget Gaussian sampling. (\emph{a--d}) Diffusion schedules, covariance tracking, and matched particle paths. At $n=100$, ordinary-VP covariance errors require a logarithmic scale; path gaps and braces mark off-scale excursions. (\emph{e}) Endpoint sW reduction at matched cost, with positive values favoring $g_n^2(t)$; bars are pointwise 95\% intervals over five paired seeds. (\emph{f}) Step size relative to the local contraction limit. Ordinary VP crosses $1$, indicating numerical amplification, while the derived coefficient stays below it. Accuracy improves even below this threshold.}
  \label{fig:pooled-sde-comparison}
  \label{fig:derived-vs-beta-n32}
  \label{fig:pooled-generation-n100}
\end{figure*}

\paragraph{Stiffness and step-size sensitivity.} Here the predictor advances between noise levels using an Euler--Maruyama step of the full reverse SDE in \cref{eq:ode_langevin}, before any additional ULA corrections. To assess stability, imagine two nearby particles receiving the same Gaussian noise. Their noise increments cancel in the difference, isolating how the drift changes their separation. This depends on the spatial derivative of the drift, its Jacobian, rather than on the drift's magnitude alone. Let $h>0$ denote the sampling step size, so each predictor advances from diffusion time $t$ to $t-h$. Combining the two score terms in \cref{eq:ode_langevin} gives the sampling drift $b_t$ and its Jacobian in this reverse-time direction,
\begin{align}
  b_t(\theta)
  &=\frac{\beta(t)}{2}\theta+c(t)s_t^{\mathrm G}(\theta)
   =\frac{\beta(t)}{2}\theta-c(t)C_t^{-1}(\theta-m_t),
  \label{eq:gaussian-predictor-drift}\\
  J_t\coloneqq\frac{\partial b_t}{\partial\theta}
  &=\frac{\beta(t)}{2}I_d-c(t)C_t^{-1}.
  \label{eq:gaussian-predictor-jacobian}
\end{align}
At a given noise level, $m_t$ and $C_t$ are fixed with respect to $\theta$, so differentiating the Gaussian score contributes $-C_t^{-1}$. Writing $\delta$ for the separation between the two particles, one predictor step gives $\delta_{\mathrm{next}}=(I_d+hJ_t)\delta$. To check whether discretization turns contraction into expansion, we examine this update one eigendirection at a time. Here $J_t$ is symmetric, so its eigenvectors form an orthogonal basis and any separation can be decomposed into components along these directions. Along each eigenvector, applying the Jacobian is exactly multiplication by its eigenvalue. This reduces the matrix update to a scalar calculation for each component, without changing the dynamics. For a negative eigenvalue $-r$, its magnitude $r>0$ is the local contraction rate, so the continuous drift brings nearby particles closer along that direction. Taking $\delta$ along this eigenvector gives $J_t\delta=-r\delta$. Euler adds $h$ times this rate of change to the current separation, giving
\[
  \delta_{\mathrm{next}}
  =\delta+hJ_t\delta
  =\delta-hr\delta
  =(1-hr)\delta.
\]
When $hr>2$, the multiplier $1-hr$ is less than $-1$. Its negative sign means that the particles exchange their ordering along this direction, while its absolute value exceeding one means that their distance increases. For example, $hr=3$ gives $\delta_{\mathrm{next}}=-2\delta$, so the particles overshoot each other and end twice as far apart. This is numerical amplification in a direction where the continuous drift contracts. Checking all negative eigenvalues identifies the directions most sensitive to this overshoot.

We can summarize this sensitivity by asking how close each update comes to its local contraction limit. Along a predictor direction with contraction rate $r$, this limit is $2/r$, so the ratio $h/(2/r)=hr/2$ reaches one at the boundary between contraction and amplification. The same reasoning extends to a ULA corrector of duration $h$, whose contraction limit along a direction with variance $v$ is $4v/c(t)$, as in \cref{eq:ula-gaussian-contraction}. The corresponding ratio $hc(t)/(4v)$ is largest in the narrowest direction of the Gaussian, where the covariance eigenvalue is smallest. To capture the most restrictive update encountered during sampling, we define $R(n)$ as the largest of these ratios across both update types, their contracting directions, and all sampling steps for $n$ observations. With one correction per predictor step ($L=1$) on the grid $t_k=kh$, the predictor uses time $t_k$ and the subsequent corrector uses $t_{k-1}$, giving
\begin{equation}
  R(n)=\max_{1\leq k\leq400}\left\{
    \underbrace{\frac{h}{2}\max_i[-\lambda_i(J_{t_k})]_+}_{\text{predictor}},\;
    \underbrace{\frac{h\,c(t_{k-1})}{4\lambda_{\min}(C_{t_{k-1}})}}_{\text{corrector}}
  \right\},
  \label{eq:compositional-step-sensitivity}
\end{equation}
where $[a]_+=\max(a,0)$ and $\lambda_i$ denotes an eigenvalue. We exclude positive predictor eigenvalues because they describe expansion already present in the continuous drift, rather than amplification introduced by discretization. Panel (f) of \cref{fig:pooled-sde-comparison} reports $R(n)$ for each coefficient choice, showing how the same sampling step size fares as more observations are composed. Values below one mean that every tested contracting direction remains within its local step-size limit, whereas values above one identify at least one update that amplifies separation where the continuous drift would contract. A smaller $R(n)$ therefore leaves more room below these limits, although it does not by itself imply faster mixing or better posterior accuracy. It remains a local diagnostic rather than a guarantee of global stability or a measure of accumulated sampling error.

\paragraph{Assessing sampling accuracy.} Avoiding numerical amplification does not ensure that the particles accurately represent the target distribution. For a Gaussian target, the mean and covariance together determine the distribution, so samples can be centered correctly yet misrepresent uncertainty by being too concentrated, too dispersed, or incorrectly correlated. We therefore tracked whether the particles reproduce the intended spread and correlations throughout sampling by comparing their empirical covariance with the analytic target $C_t$, using the relative covariance error $\|\widehat{\mathrm{Cov}}(\theta_t)-C_t\|_F/\|C_t\|_F$, where $\|\cdot\|_F$ is the Frobenius norm. The plotted paths show eight matched particles projected onto the leading posterior principal component, against the exact mean and pointwise 90\% band. For panels (e,f), we fixed $L=1$ and swept $n\in\{1,2,4,8,16,32,40,48,64,100\}$. Each method used 800 full compositional-score calls per run, each processing all particles. We measured empirical sliced 2-Wasserstein distance (sW) against 1,000 exact-posterior draws using 512 random unit directions, taking the square root of the mean squared difference between sorted projections. At each $n$, five paired seeds shared observations, terminal particles, predictor/corrector noise, reference draws, and projection directions across methods. Panel (e) reports the mean paired difference $\mathrm{sW}_{\beta}-\mathrm{sW}_{g_n}$, with pointwise 95\% Student-$t$ intervals using four degrees of freedom.

\paragraph{Effect of the diffusion coefficient.} The two coefficients give identical samples at $n=1$, while the derived coefficient lowers sW in all 45 paired comparisons with $n>1$. At $n=100$, mean sW falls from $0.0153$ to $0.0074$, a reduction of approximately 52\%. The absolute improvement peaks near $n=8$, rather than increasing monotonically. Ordinary VP crosses $R=1$ between the tested values $n=40$ and $n=48$; at $n=100$, $R$ is $2.49$ for ordinary VP versus $0.07$ for the derived coefficient. Finite ordinary-VP endpoints with $L=1$ can conceal transient relative covariance errors of order $10^{142}$; these runs remain in the accuracy comparison. The coefficient therefore improves finite-budget accuracy and reduces step-size sensitivity, but the accuracy gains already appear below the local instability threshold. Changing the coefficient while retaining the score and linear drift also changes compatibility with the prescribed continuous-time marginals, so numerical stiffness alone does not explain the difference. These controlled Gaussian results do not establish the same behavior for learned-score SLCP or BMP sampling.

\subsection{Exact-score Gaussian benchmark}
\label{sec:gaussian-benchmark-details}

We evaluated the complete observation-count sweep $n\in\{1,2,4,8,16,32,64,100\}$ using the exact compositional score and posterior of the 10-dimensional correlated Gaussian model defined in \cref{sec:compositional-coefficient-evaluation}. For every $n$, each method used five common data seeds, 1,000 posterior particles, and $T=400$ reverse steps. The F-NPSE-SDE comparisons used $L\in\{1,5\}$ Langevin corrections per predictor. This paired design separates changes caused by the compositional sampler from variation in observations or reference samples.


\subsection{SLCP benchmark}
\label{sec:slcp-benchmark-details}

We evaluated $n\in\{1,14,30\}$ across three independently trained single-observation score networks and 25 common SLCP test instances per network seed. Each network was trained on simulator-generated SLCP pairs. Within each seed and test instance, all compositional samplers used the same frozen score network and reference posterior, so the comparison isolates the sampling method. Following \citet{linhart2026tall}, JAC and Langevin bounded standardized iterates coordinatewise to $[-3,3]$ for numerical stability. We report sliced Wasserstein distance, MMD, and C2ST using the same definitions as in \cref{tab:fnpse-sde-benchmarks}.


\subsection{Computational cost of compositional samplers}
\label{sec:compositional-cost-details}

Consider sampling $P$ posterior particles, or $P$ simultaneous reverse-diffusion chains, over $d$ unknown parameters conditioned on $n$ observations using $T$ reverse steps. Let $C_s$ denote the architecture-dependent cost of one unbatched score-network evaluation for one particle-observation pair at one diffusion time. \textsc{F-NPSE-SDE} evaluates $n$ scores for every particle in each predictor and repeats these evaluations in each of $L$ ULA corrector steps. Its compute cost is therefore $\mathcal O(T(L+1)PnC_s)$. Because scores can be accumulated over observations and ULA steps run sequentially, its working state requires $\mathcal O(Pd)$ memory beyond the score network's minibatch activations. Thus, increasing $L$ increases runtime but not the asymptotic peak state memory.

JAC forms a full $d\times d$ score Jacobian for every particle-observation pair at every reverse step. Computing these Jacobians requires roughly $\mathcal O(TPndC_s)$ work, corresponding to approximately $d$ reverse-mode passes per score, in addition to dense covariance inversions and solves. A fully vectorized implementation stores $\mathcal O(Pnd^2)$ Jacobian values, an $nd$-fold increase over the $\mathcal O(Pd)$ particle state of \textsc{F-NPSE-SDE}. For pooled BMP inference, $n=940$ and $d=65$, so this auxiliary state is approximately $6.1\times10^4$ times larger. Streaming over observations can instead reduce peak Jacobian storage to $\mathcal O(Pd^2)$, but serializes the same computation and retains the additional autodifferentiation and dense linear-algebra costs.

GAUSS avoids online Jacobians but estimates a covariance for each observation from $M$ auxiliary samples generated by $S$-step DDIM rollouts. For fixed conditioning, its preprocessing cost is $\mathcal O(nMSC_s)$. A straightforward batched implementation uses $\mathcal O(nMd+nd^2)$ storage for auxiliary samples and per-observation covariances. Online covariance accumulation can remove the $nMd$ term, and processing contexts sequentially can further reduce peak storage, but each context still requires the rollout computation and estimation of a $d\times d$ covariance. Sampling then requires $\mathcal O(TPnC_s)$ score evaluations and dense precision operations.

These costs are amplified by the conditional block updates in HBDS. Each reverse-time level updates four five-dimensional receptor blocks conditional on the current 60-dimensional global state, then updates the global state conditional on the refined receptors. JAC consequently requires four sets of local Jacobians and a global $60\times60$ Jacobian family. If fully vectorized, the global family alone is approximately $n(60)^2/(60+4\times5)\approx4.2\times10^4$ times larger than the complete \textsc{F-NPSE-SDE} HBDS particle state. Streaming reduces peak memory but repeats this Jacobian computation serially within every block update. For GAUSS, the relevant conditional covariances change with the current global and receptor states, so its DDIM rollouts become nested inside the block refinements rather than being amortized once. By contrast, \textsc{F-NPSE-SDE} stores only the HBDS particle state, $\mathcal O(P(60+4\times5))$, beyond common score-network activations. HBDS therefore adds sequential score evaluations for \textsc{F-NPSE-SDE} without requiring these Jacobian or covariance calculations.

\paragraph{BMP runtime and memory.} In our BMP experiments, JAC increased HBDS sampling from approximately one hour to more than ten hours, while GAUSS exhausted GPU memory during context-specific covariance estimation across 940 designs. We therefore omit both from the subsequent BMP evaluation.

\subsection{Compositional sampling related works}
\label{sec:compositional-related-work}

\paragraph{Factorized posterior sampling.} Our construction builds on F-NPSE \citep{geffner23a}, which combines single-observation posterior scores with a prior correction for inference from multiple observations. We retain this factorization and use its discrete transition to motivate a continuous-time Gaussian surrogate. The contribution is the resulting variance prescription and compatible diffusion coefficient, rather than score composition itself; \citet{arruda2025diffusiontutorial} review this broader family of SBI methods.

\paragraph{Inference-time schedule adjustments.} Our scalar approximation is closely related to the sampling-time modifications of \citet{arruda2026compositional}, who stabilize composition through score damping and an adjustable constant shift of the log signal-to-noise ratio (SNR), alongside minibatch score estimation to reduce memory requirements. For our Gaussian surrogate, define $\mathrm{SNR}_n(t)=\alpha(t)/V_n(t)$ and $\mathrm{SNR}_{\mathrm{VP}}(t)=\alpha(t)/(1-\alpha(t))$ at times with $0<\alpha(t)<1$. The prescribed variance in \cref{eq:compositional-cumulative-variance} gives
\begin{equation*}
\log\mathrm{SNR}_n(t)
=
\log\mathrm{SNR}_{\mathrm{VP}}(t)
+\log\kappa^{(n)}(t).
\end{equation*}
Thus, $\kappa^{(n)}(t)$ induces a time- and observation-count-dependent log-SNR shift. We obtain this adjustment by prescribing a composition-dependent cumulative variance and deriving its compatible diffusion coefficient while retaining the ordinary VP mean attenuation. Since the mean attenuation is unchanged while the variance is reduced, this is not simply a reparameterization of the ordinary VP schedule, nor is it equivalent to multiplying the composed score by a damping factor. The derivation realizes the chosen Gaussian surrogate but does not establish an exact reverse process for the full compositional posterior.

\paragraph{Covariance-aware sampling.} The samplers of \citet{linhart2026tall} address errors in composing diffusion-time scores through score Jacobians (JAC) or auxiliary posterior covariance estimates (GAUSS). We compare these methods directly on Gaussian and SLCP benchmarks in \cref{sec:validate_comp_diffusion}; their additional computations become more expensive when repeated inside HBDS block updates. In our BMP implementation, JAC increased HBDS sampling from approximately one hour to more than ten hours, while GAUSS exhausted GPU memory during covariance estimation across 940 designs (\cref{sec:compositional-cost-details}). These observations concern the tested implementation, not an impossibility of adapting either method. Our score-only compositional sampler makes the repeated block updates practical in this setting. The hierarchical inference construction is discussed separately in \cref{sec:hbds-related-work}.

\section{Hierarchical blockwise diffusion sampling details}
\label{sec:hbda_details}

\paragraph{Grouped posterior target.}
Suppose there are $K$ experimental groups, each with design $\xi^k$, observation $y^k$, and local latent state $R^k$, together with global parameters $\theta$. Assuming condition-specific independent receptor priors, as in our BMP model, the grouped posterior factorizes as
\begin{equation}
p(\theta,R^{1:K}\mid y^{1:K},\xi^{1:K})
\propto
p(\theta)\prod_{k=1}^{K}p(R^k\mid\xi^k)\,p(y^k\mid R^k,\theta,\xi^k).
\label{eq:hbds-grouped-posterior}
\end{equation}
A pooled model restricts this family by tying $R^1=\cdots=R^K$. If two groups require different local states on a set with positive posterior mass, that restriction cannot represent the grouped posterior; \cref{app:em_simulator} gives an analytic example.

\paragraph{Blockwise intermediate target.} HBDS uses conditional scores to approximate the unavailable noised joint posterior $p_t(\theta,R^{1:K}\mid y^{1:K},\xi^{1:K})$. Over each diffusion interval, it advances and refines the local receptor blocks before updating the shared parameters using all groups. Additional block rounds repeat this interval after re-noising the updated states, rather than adding another reverse transition after the predictor--corrector updates.

\begin{algorithm}[!t]
  \caption{Hierarchical Blockwise Diffusion Sampling (HBDS)}
  \label{alg:dem-a2e}
  \DontPrintSemicolon
  \LinesNotNumbered

  \KwIn{Groups $\{(\xi^k,y^k)\}_{k=1}^K$; score model $s_\phi$; sampling schedule $(f,g_n)$; time grid $t_T>\cdots>t_0=0$; block rounds $M\geq1$; corrector steps $L\geq0$.}
  $\mathrm{PC}_{t\to s}$ combines one predictor step $t\to s$ with $L$ Langevin corrections at $s$.\;
  \KwInit{$\theta_{t_T}\sim q_{\mathrm{init}}^\theta$; $R_{t_T}^k\sim q_{\mathrm{init}}^{R^k}$ for $k=1,\dots,K$.}

  \For{$i=T,T-1,\dots,1$}{
    $(t,s)\leftarrow(t_i,t_{i-1})$\;
    \For{$m=1,\dots,M$}{
      \For{$k=1$ \KwTo $K$}{
        $R_s^k\leftarrow\mathrm{PC}_{t\to s}^{R}\!\left(R_t^k\mid\theta_t,\xi^k,y^k\right)$\;
      }
      $\mathcal G_s\leftarrow\{(R_s^k,\xi^k,y^k)\}_{k=1}^K$\;
      $\theta_s\leftarrow\mathrm{PC}_{t\to s}^{\theta}\!\left(\theta_t\mid\mathcal G_s\right)$\;
      \If{$m<M$}{
        Re-noise $(\theta_s,R_s^{1:K})$ to obtain $(\theta_t,R_t^{1:K})$ at noise level $t$\;
      }
    }
  }

  \KwOut{$(\theta_0,\{R_0^k\}_{k=1}^K)$ conditioned on $\{(\xi^k,y^k)\}_{k=1}^K$.}
\end{algorithm}

\Cref{alg:dem-a2e} uses $\mathrm{PC}_{t\to s}^{B}$ to denote a predictor--corrector update that advances block $B$ from $t$ to $s$ and applies $L$ Langevin corrections at $s$. In the continuous implementation, the predictor uses the probability-flow part of \cref{eq:ode_langevin} and is deterministic. Each correction follows \cref{eq:ula-corrector} with noise covariance $\eta g_{n_B}(s)^2I$, where $\eta=(t-s)/L$ for $L>0$. Here $n_B$ is the observation count for that block, namely the size of group $k$ for $R^k$ and the total count for $\theta$. Setting $L=0$ skips corrections, which are also omitted on the final diffusion interval. Thus, no additional Gaussian reverse step is needed after these updates. The initialization laws $q_{\mathrm{init}}$ use Gaussian noise for $\theta$ and, in the BMP experiments, the receptor prior in \cref{sec:hierarchical_receptors}.

\subsection{Toy simulator model example}
\label{app:em_simulator}
\label{sec:hbds-toy}

We consider a hierarchical model with a discrete latent \(k\in\{-1,+1\}\), a continuous latent \(\theta\in\mathbb{R}\), and an observation \(x\in\mathbb{R}\). The generative process is
\begin{align*}
k &\sim \mathrm{Rad}\!\left(\tfrac12\right),
\qquad \text{(Rademacher: }\mathbb{P}[k=+1]=\mathbb{P}[k=-1]=\tfrac12\text{)}, \\
\theta &\sim \mathcal{N}(0,1), \\
x | \theta,k &\sim \mathcal{N}\!\big(5k+\theta,\,1\big),
\quad\text{i.e., } x=5k+\theta+\varepsilon,\;\varepsilon\sim\mathcal{N}(0,1)\ \text{indep.} 
\end{align*}

The joint density factorizes as
\begin{equation*}
p(\theta,k,x)
= p(k)p(\theta)p(x | \theta,k)
= \tfrac12\,\mathcal{N}(\theta;0,1)\,\mathcal{N}\!\big(x;5k+\theta,1\big).
\end{equation*}

Thus, the likelihood is \(p(x | \theta,k)=\mathcal{N}(x;5k+\theta,1)\), with prior \(p(\theta)=\mathcal{N}(0,1)\) and \(p(k)=\tfrac12\). The corresponding posterior distributions are analytically tractable. The discrete latent variable has
\begin{align}
p(k=+1 | x)
  &= \frac{\mathcal{N}(x;5,\,2)}
         {\mathcal{N}(x;5,\,2)+\mathcal{N}(x;-5,\,2)},
\label{eq:post-k}\\
p(k=-1 | x)&=1-p(k=+1 | x),
\label{eq:post-k-negative}
\end{align}
while the continuous latent variable conditioned on \(x\) and \(k\) follows
\begin{equation}
p(\theta | x,k)
  = \mathcal{N}\!\big(\theta;\tfrac12(x-5k),\,\tfrac12\big).
\label{eq:post-theta}
\end{equation}
Together these define the exact posterior for a single observation,
\begin{equation*}
p(\theta,k | x)
  = p(\theta | x,k)\,p(k | x),
\end{equation*}
which serves as the analytic ground truth for single-sample posterior evaluation.

\paragraph{Pooled versus grouped posterior structure.}
\begin{figure}[!t]
\centering
\includegraphics[width=\linewidth]{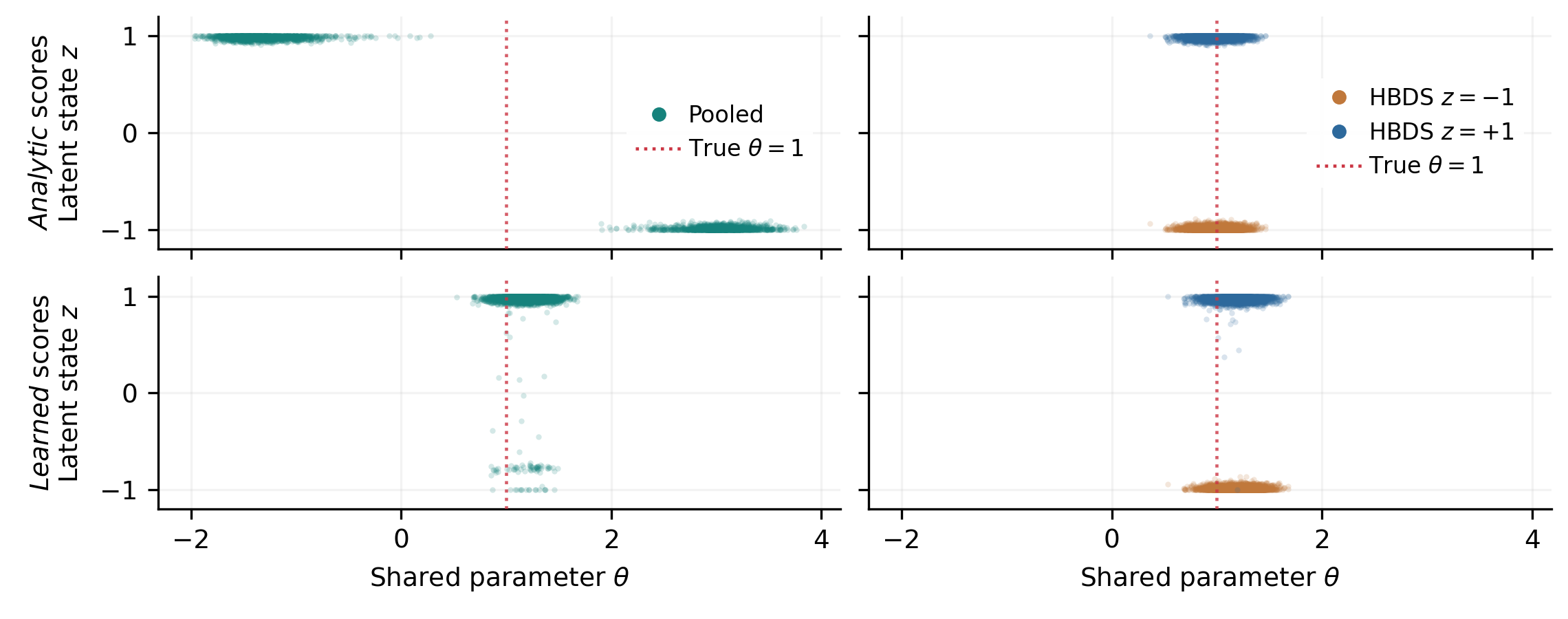}
\caption{Joint-state diagnostic for the hierarchical toy with 20 observations across two groups and no ULA correctors. (\emph{Left}) Pooled sampling with one shared latent. (\emph{Right}) HBDS with separate group latents. Rows use analytic (top) and learned (bottom) scores; dotted lines mark the generating $\theta=1$. This oracle diagnostic conditions HBDS latent updates on the true $\theta$ and parameter updates on the true group signs; pooled sampling receives no such information.}
\label{fig:hbds-toy-joint-oracle}
\end{figure}

For $K$ experimental groups, we introduce one group-specific discrete latent $z_g\in\{-1,+1\}$ per group while sharing the continuous parameter $\theta$:
\[
z_g\sim \mathrm{Rad}\!\left(\tfrac12\right),
\qquad
\theta\sim\mathcal N(0,1),
\qquad
x_g\mid \theta,z_g\sim \mathcal N(5z_g+\theta,1).
\]
The hierarchical posterior is
\begin{equation}
p(\theta,z_{1:K}\mid x_{1:K})
\propto
p(\theta)\prod_{g=1}^K p(z_g)p(x_g\mid \theta,z_g),
\label{eq:hbds-toy-grouped-posterior}
\end{equation}
which allows each group to select its own latent mode. A pooled model instead ties all groups to a single latent $z$,
\begin{equation}
p_{\mathrm{pool}}(\theta,z\mid x_{1:K})
\propto
p(\theta)p(z)\prod_{g=1}^K p(x_g\mid \theta,z).
\label{eq:hbds-toy-pooled-posterior}
\end{equation}
Thus, whenever different groups are better explained by different latent modes, the hierarchical posterior can represent the oracle structure while the pooled posterior cannot. For example, observations near $+5$ and $-5$ are naturally explained by different group-specific latents with $\theta\approx 0$, whereas the pooled model must force both groups to share the same latent explanation.

\paragraph{Learned-score illustration.} The main-text comparison in \cref{fig:hbds-toy} used 20 observations, ten per group, generated with $\theta=1$, signs $z_1=-1$ and $z_2=+1$, and unit observation-noise standard deviation. Both samplers reused a frozen single-observation score network, with 1,000 particles per sampling seed across seeds 0, 1, and 2, 200 predictor intervals, and no ULA correctors. HBDS conditioned its block updates on cumulative-VP Tweedie estimates, without access to the generating parameters or signs. The plot shows all saved paired particles without jitter or latent rounding; continuous latent states are sampler approximations to the discrete target.

\subsection{Hierarchical SBI related works}
\label{sec:hbds-related-work}

\paragraph{Hierarchical posterior estimators.} Hierarchical SBI explicitly distinguishes shared parameters from local latent variables. \citet{heinrich2024hierarchical} learn dataset-wide representations of event ensembles and separate global and local posterior estimators, supporting varying ensemble sizes. \citet{habermann2025multilevel} introduce multilevel neural posterior estimation (ML-NPE), pairing hierarchical summary networks with global and local posterior estimators. Their framework supports covariates, shared parameters, and varying numbers of groups and observations per group within the training range, with validation through simulation-based calibration and comparisons to Stan. \citet{charles2026tokenised} combine group-aware tokenization with flow matching, using a per-site likelihood surrogate to generate synthetic multi-site datasets for training a separate hierarchical posterior estimator. These approaches build the hierarchy into the learned inference model.

\paragraph{Hierarchical score models.} \citet{arruda2026compositional} train global and local score estimators on individual groups, compose the global scores across groups, and then sample local parameters conditional on the sampled globals. This supports hierarchical inference without simulating the full hierarchy for each training example. HBDS instead reuses the conditional queries of one joint score model and alternates local and global updates within the reverse-diffusion process, rather than training separate estimators for the hierarchical posterior factorization. The distinction is conditional-model reuse and blockwise sampling, not hierarchical composition itself. Their compositional sampling modifications are discussed in \cref{sec:compositional-related-work}.

\paragraph{Nested, design-dependent observations.} In BMP, each of four cell lines has one receptor vector shared across 235 distinct ligand-dose observations, while biophysical parameters are shared across all cell lines (\cref{sec:BMP_model}). Assigning an independent local latent to every scalar observation would discard this within-cell-line sharing. Conversely, treating a cell line as one group requires representing its panel of design--response pairs within the group-level inference model. ML-NPE already accommodates nested observations and changing dataset sizes \citep{habermann2025multilevel}; HBDS instead performs the aggregation through score composition at sampling time. It retains the single-observation representation and composes scores within receptor blocks and across groups for the shared parameters. Observation subsets can therefore change without training another hierarchical estimator, provided the underlying model has learned the required conditionals. The same sampler can also reuse the joint model adapted through likelihood fine-tuning (\cref{sec:fine-tune}).

\section{BMP signaling pathway mathematical model}
\label{sec:BMP_model}

The BMP dataset contains $n=940$ steady-state cell response measurements from $C=4$ cell lines, each profiled under 235 distinct ligand-dose conditions. We write $y=(y_1,\dots,y_n)$ for these observations. The parent and three receptor-knockdown lines form four groups, each with a distinct receptor state $R^k\in\mathbb{R}^5$ shared across its ligand-dose experiments. These groups share the simulator's biochemical affinities and catalytic efficiencies, collected in $\theta\in\mathbb{R}^{60}$, giving the hierarchical structure used by HBDS in \cref{sec:em_diffusion}. Each single-observation latent state $(\theta,R^k)$ therefore contains 65 quantities.

BMPs are homodimeric and heterodimeric ligands that transmit information by assembling signaling complexes with two classes of serine/threonine‐kinase receptors: type~I (\(A_i\)) and type~II (\(B_k\)). In a parsimonious “one-step’’ mass–action kinetics model \citep{su2022ligand}, each ligand \(L_j\) binds one receptor of each class in a single equilibrium step
\begin{equation*}
    A_i + B_k + L_j
    \;\xrightleftharpoons{K_{ijk}}\;
    T_{ijk},
    \quad
    \varepsilon_{ijk}\,T_{ijk}\;\longrightarrow\; S,
\end{equation*}
where \(K_{ijk}\) is an effective affinity for forming the active trimer \(T_{ijk}\) and \(\varepsilon_{ijk}\) is its catalytic efficiency in phosphorylating SMAD1/5/8, producing a transcriptional signal \(S\). Interestingly, despite treating the 10 known ligands and \(4\times3=12\) canonical receptors as effectively promiscuous, the model recovers dose–response data and explains how receptor context rewires downstream transcriptional programs during development or oncogenesis \citep{antebi2017combinatorial, su2022ligand}. Here, we model a subset of the known ligands and canonical receptors where \(I=3\), \(J=5\), and \(K=2\) denote the numbers of type‑I receptors, ligands, and type‑II receptors, respectively, so that \(i=1,\dots,I\), \(j=1,\dots,J\), and \(k=1,\dots,K\). \Cref{tab:bmp_variables} summarizes how the SBI variables map onto this BMP model. The receptor concentrations \(A\) and \(B\) can also be treated as fixed design variables, as in perturbation-modeling settings where biological covariates are specified interventions \citep{dimitrov2026interpretation}, giving \( (L_j , A_i , B_k) \in \Xi^{J+I+K}\) and \((\varepsilon_{ijk} , K_{ijk}) \in\Theta_{>0}^{2IJK}\), but since receptor measurements are noisy, we treat them as latent variables throughout the experiments.

\begin{table}[htbp]
\centering
\caption{BMP-to-SBI mapping and dimensions.}
\label{tab:bmp_variables}
\begingroup
\renewcommand{\arraystretch}{1.15}
\small
\begin{tabular}{@{}L{0.12\textwidth}L{0.53\textwidth}L{0.27\textwidth}@{}}
\toprule
\textbf{SBI symbol} & \textbf{BMP meaning} & \textbf{Dimension} \\
\midrule
\(\xi\) & Experimental design: ligand doses \(L_{1:J}\) & \(J=5\) \\
\(\theta\) & Shared affinities \(K_{ijk}\) and efficiencies \(\varepsilon_{ijk}\) & \(2IJK=60\) \\
\(R^k\) & Cell-line receptor state \(A_{1:I},B_{1:K}\) & \(I+K=5\) \\
\(y\) & Observation: steady-state SMAD response \(S\) & \(1\) \\
\bottomrule
\end{tabular}
\endgroup
\end{table}

\section{Cell receptor hierarchical modeling}
\label{sec:hierarchical_receptors}

We implement diffusion guidance by constructing a probabilistic prior over receptor expression informed by quantitative polymerase chain reaction (qPCR) measurements and then using the resulting (marginal) log-density as an energy function for guidance during sampling. The goal is to bias samples toward receptor configurations consistent with known KD conditions while explicitly accounting for the substantial uncertainty inherent to qPCR.

\paragraph{Indices and observations.} Let $s \in \{0,1,2,3\}$ index the four experimental cell-line conditions (one wild-type and three KD conditions). Let $j \in \{0,1,2,3,4\}$ index the five BMP-pathway receptors considered in this work (ACVR1, BMPR1A, ACVR2A, ACVR2B, BMPR2). For each condition $s$ and receptor $j$ we observe a positive relative qPCR readout $x_{s,j} > 0$, interpreted as a noisy proxy for receptor abundance (relative mRNA expression). Because qPCR variability is multiplicative, we model observations in log space, $y_{s,j} \triangleq \log x_{s,j}$.

\paragraph{Knockdown targets.} For each condition $s$, the KD target index is $k_s \in \{-1,0,1,2,3,4\}$, where $k_s=-1$ denotes no knockdown (wild-type) and $k_s=j$ denotes that receptor $j$ is knocked down in condition $s$. We define an indicator
\begin{equation*}
\mathbb{I}_{s,j} \triangleq \mathbb{I}[\,k_s=j\,],
\end{equation*}
which is $1$ iff receptor $j$ is the KD target in condition $s$ and $0$ otherwise.

\paragraph{Hierarchical model (baseline receptor distribution).} We posit receptor-specific baseline log-means and log-standard-deviations that are partially pooled across receptors via global hyperparameters. Let $\mu_j$ denote the baseline log-mean for receptor $j$, and let $\sigma_j$ denote the baseline log-standard-deviation for receptor $j$. We introduce global hyperparameters $(\mu_0, \kappa)$ controlling the population distribution of receptor means, and $(\ell_0, \eta)$ controlling the population distribution of receptor log-variances. In standard hierarchical form,
\begin{align*}
\mu_0 &\sim \mathcal{N}(m_0, v_0), \\
\kappa &\sim \mathrm{HalfNormal}(\lambda_\kappa), \\
\mu_j \mid \mu_0, \kappa &\sim \mathcal{N}(\mu_0, \kappa^2), \qquad j=0,\dots,4, \\
\ell_0 &\sim \mathcal{N}(m_\ell, v_\ell), \\
\eta &\sim \mathrm{HalfNormal}(\lambda_\eta), \\
\log \sigma_j \mid \ell_0, \eta &\sim \mathcal{N}(\ell_0, \eta^2), \qquad j=0,\dots,4.
\end{align*}
This hierarchy induces partial pooling: when data are weak or noisy, $\{\mu_j,\sigma_j\}$ are shrunk toward the global centers $(\mu_0,\ell_0)$, while still allowing receptor-specific deviations.

\paragraph{Condition-level variability.} To account for inter-condition variability (e.g., biological differences between cell lines and experimental perturbations not explained by the KD), we add a condition-specific random effect $\delta_{s,j}$ for each receptor:
\begin{align*}
\tau &\sim \mathrm{HalfNormal}(\lambda_\tau), \\
\delta_{s,j} \mid \tau &\sim \mathcal{N}(0,\tau^2), \qquad s=0,\dots,3,\;\; j=0,\dots,4.
\end{align*}
This term captures the fact that receptor expression can vary appreciably across conditions even in the absence of knockdown.

\paragraph{Knockdown effect model (multiplicative fold-change).} Knockdown acts multiplicatively in the original receptor domain, which corresponds to an additive shift in log space. We therefore model a KD fold-change via a log-shift $\nu_j \le 0$ and additional KD uncertainty $\omega_j \ge 0$:
\begin{align*}
\nu_j &\sim \mathcal{N}(m_{\nu,j}, v_{\nu,j}) \;\;\text{with support on } (-\infty,0], \\
\omega_j &\sim \mathrm{HalfNormal}(\lambda_{\omega,j}),
\end{align*}
and incorporate KD by modifying the conditional distribution of $y_{s,j}$. Specifically, the KD target receptor ($\mathbb{I}_{s,j}=1$) receives a negative mean shift and an additional variance term:
\begin{equation*}
y_{s,j} \mid \mu_j,\sigma_j,\delta_{s,j}, \nu_j,\omega_j
\;\sim\;
\mathcal{N}\!\Big(\mu_j + \delta_{s,j} + \mathbb{I}_{s,j}\,\nu_j,\;\;
\sigma_j^2 + \mathbb{I}_{s,j}\,\omega_j^2\Big).
\label{eq:kd_normal_log}
\end{equation*}
Non-target receptors ($\mathbb{I}_{s,j}=0$) follow the baseline distribution.

\paragraph{Observation model in the original receptor domain.} Let $r_{s,j} \triangleq \exp(y_{s,j})$ denote the receptor expression in the original (positive) domain. In our implementation we further impose a physically plausible upper bound $h>0$ and use a truncated LogNormal likelihood with support $r_{s,j} \in (0,h)$:
\begin{equation*}
r_{s,j} \mid \mu_j,\sigma_j,\delta_{s,j},\nu_j,\omega_j
\sim
\mathrm{TruncLogNormal}\!\Big(\mu_j + \delta_{s,j} + \mathbb{I}_{s,j}\,\nu_j,\;
\sqrt{\sigma_j^2 + \mathbb{I}_{s,j}\,\omega_j^2};\; h\Big).
\end{equation*}
Equivalently, this is a LogNormal distribution in $r_{s,j}$ with truncation at $h$, which provides numerical stability and reflects the bounded receptor regime used in simulation.

\begin{figure*}[!t]
  \centering
  \includegraphics[width=\textwidth]{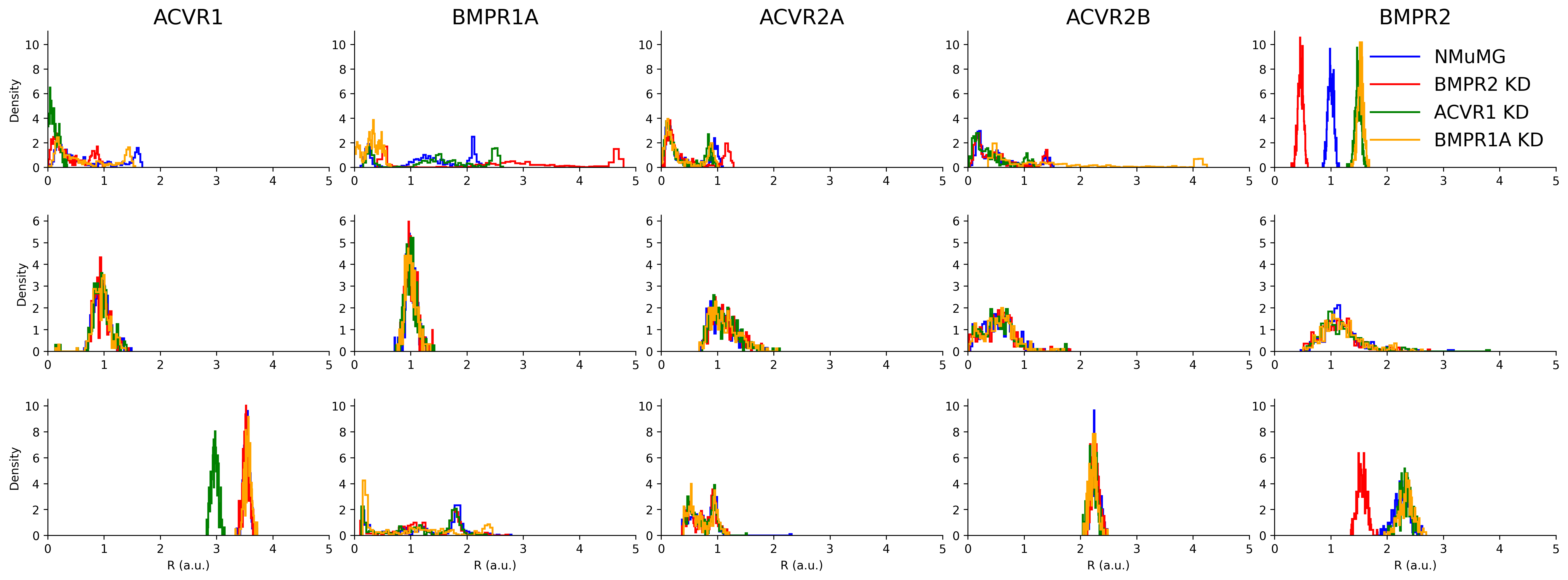}
  \vspace{-15pt}
  \caption{Comparison between BMP receptor posterior samples (columns) across LSR best-fit $\theta^{\text{LSR}}$ (\emph{top}), applying a truncated lognormal prior to the BMP receptors' distributions (\emph{middle}), and applying the marginal hierarchical receptor priors \cref{sec:hierarchical_receptors} (\emph{bottom}). Model identifiability in the $\theta^{\text{LSR}}$ parameter set  can be seen in the multi-modal distributions of the $\theta^{\text{LSR}}$ fits as well as the case of BMPR2, where $\theta^{\text{LSR}}$ only works if the ACVR1 KD and BMPR1A KD conditions are 1.5$\times$ the amount of the \emph{parent} cell line. Imposing a diffusion prior on the receptors in the BMP simformer avoids this clash of parameters. }
  \label{fig:three_tier_r}
  \vspace{-6pt}
\end{figure*}

\paragraph{Using the model for guidance.} The hierarchical model defines a receptor prior density $p(r \mid s)$ for each condition $s$. During diffusion sampling, we compute a denoised receptor estimate and apply guidance by adding the gradient of the (marginal) log-prior,
\begin{equation*}
g(r,s) \;\triangleq\; \nabla_{r}\log p(r \mid s),
\end{equation*}
scaled by a time-dependent factor (analogous to classifier-free guidance when used as a prior). In practice, marginalization over latent variables (e.g., $\mu_0,\kappa,\ell_0,\eta,\tau$) can be performed either by a tractable approximation (e.g., moment-matching) or by mixture density networks \citep{bishop1994mixture}; both yield a differentiable log-density suitable for guidance.

\section{Additional BMP experimental results}

\subsection{Calibrated receptor diffusion}
\label{sec:cal_receptors}

The LSR receptor estimates in \cref{fig:three_tier_r} illustrate potential identifiability issues. Some knockdown (KD) conditions have higher fitted basal receptor levels than the parent cell line, contrary to the intended knockdown. Low predictive error alone therefore does not establish physical plausibility. The receptor priors in \cref{sec:hierarchical_receptors} allow posterior sampling to incorporate these biological constraints.

\paragraph{LSR fit and receptor sensitivity.}
\label{sec:lsr_bench}
We compared FT posterior-predictive fit with the historical least-squares regression estimate $\theta^{\mathrm{LSR}}$ and a sensitivity analysis in which its ACVR1 estimate was reduced by $10\%$ (\cref{tab:rmse_three_methods_transposed}). The FT model in this comparison has three transformer layers and a feed-forward widening factor of two, with $\lambda=5\times10^{-4}$. Its all-data row used HBDS on 940 observations, whereas each cell-line row used an independent pooled posterior conditioned on that cell's 235 observations. These rows are not subsets of a single all-data posterior. Each run used 250 draws, 100 diffusion steps, five Langevin corrections per step, and sampling seed 42. The all-data FT RMSE is $0.26$. The historical LSR errors were retained without recomputation, and their all-data value is not the root-mean-square of the four cell-wise values, so they should not be treated as a consistently aggregated baseline until the original evaluation protocol is reconciled. The receptor perturbation illustrates local fit sensitivity.


\subsection{BMP posterior-sampling ablations}
\label{sec:bmp-sampling-ablations}

We compared pooled sampling and HBDS on 940 BMP observations, with 235 observations from each of four cell lines. Pooled sampling infers the 60 shared biophysical parameters and one shared five-dimensional receptor vector, whereas HBDS maintains a separate receptor state for each cell line. Both samplers used the derived diffusion coefficient, 100 diffusion steps, five Langevin corrections per step, 250 posterior draws, and sampling seed 42. HBDS used three block-refinement rounds with Tweedie conditioning for both blocks. Pooled sampling used three resampling passes in one sequential round, which are not hierarchical block updates. These are single-seed results, with no between-seed uncertainty estimates.

\paragraph{Models and predictive metrics.} The PT and FT models in \cref{tab:bmp_unified_summary,tab:lambda_sweep} have four transformer layers and a feed-forward widening factor of four. FT started from the matched PT model and used $\lambda=5\times10^{-4}$ for the sampler comparison. The no-FiLM model is a separately pretrained and fine-tuned architectural ablation that retains mask-state embeddings, not the same model with FiLM switched off at sampling time. It uses the same depth and width as the FiLM model. For posterior draw $k$, let $d_k=\|\mathbf y_k-\mathbf y_o\|_2$, where $\mathbf y_k$ is evaluated through the original mechanistic simulator over $N$ observations. We report $\mathrm{RMSE}=\min_k d_k/\sqrt{N}$ and median distance $\operatorname{median}_k d_k$. Thus, RMSE measures the best predictive draw rather than the posterior mean, while median distance summarizes response-space error across draws.

\paragraph{Calibration.}
\label{sec:bmp-calibration-status}
We report the logged local classifier two-sample test ($\ell$-C2ST) \citep{linhart2023c2st} statistics separately for parameters and receptors. The tests used an MLP with three folds and two ensemble members, classifier seed zero, and 100 permutation-based null trials. These are single-run statistics, not averages across independent sampling seeds. 

\begin{table}[!t]
\centering
\begin{minipage}[t]{0.42\linewidth}
\centering
\captionsetup{font=small,skip=4pt}
\caption{RMSE comparison of historical LSR, ACVR1 $-10\%$ LSR ($\theta^{\mathrm{LSR}\dagger}$), and FT predictions. FT uses HBDS for all data and independent pooled inference for each cell line. Lower values are better.}
\label{tab:rmse_three_methods_transposed}
\setlength{\tabcolsep}{2pt}
\renewcommand{\arraystretch}{1.10}
\small
\begin{tabular*}{\linewidth}{@{\extracolsep{\fill}}lrrr@{}}
\toprule
\textbf{Condition} & $\theta^{\mathrm{LSR}}$ & $\theta^{\mathrm{LSR}\dagger}$ & \textbf{FT} \\
\midrule
All data   & 0.08 & 1.02 & 0.26 \\
NMuMG      & 0.08 & 0.08 & 0.48 \\
BMPR1A KD  & 0.07 & 0.06 & 0.31 \\
BMPR2 KD   & 0.09 & 0.06 & 0.13 \\
ACVR1 KD   & 0.06 & 2.04 & 0.22 \\
\bottomrule
\end{tabular*}
\end{minipage}\hfill
\begin{minipage}[t]{0.56\linewidth}
\centering
\captionsetup{font=small,skip=4pt}
\caption{Pooled/HBDS comparison for PT, FT, and separately trained FT without FiLM ($\neg$FiLM), using 250 draws and one sampling seed.}
\label{tab:bmp_unified_summary}
\setlength{\tabcolsep}{2pt}
\renewcommand{\arraystretch}{1.10}
\small
\begin{tabular*}{\linewidth}{@{\extracolsep{\fill}}lrrrr@{}}
\toprule
& & & \multicolumn{2}{c}{$\boldsymbol{\ell}$\textbf{-C2ST}} \\
\cmidrule(lr){4-5}
\textbf{Method} & \textbf{RMSE} $\downarrow$ & \textbf{Median} $\downarrow$ & $\boldsymbol{\theta}$ & $\boldsymbol{R}$ \\
\midrule
PT Pooled & 0.58 & 1,594.55 & 0.22 & 0.06 \\
FT Pooled & 0.36 & 66.77 & 0.18 & 0.02 \\
$\neg$FiLM Pooled & 12.30 & 4,538.82 & 0.13 & 0.03 \\
\midrule
PT HBDS & 0.78 & 194.01 & 0.05 & 0.04 \\
FT HBDS & \textbf{0.34} & \textbf{14.94} & 0.02 & 0.07 \\
$\neg$FiLM HBDS & 0.91 & 274.61 & 0.09 & 0.01 \\
\bottomrule
\end{tabular*}
\end{minipage}
\par\vspace{-8pt} 
\end{table}

Fine-tuning improves both predictive-fit metrics over its matched PT model under pooled sampling and HBDS (\cref{tab:bmp_unified_summary}). For each FT architecture, HBDS gives lower errors than pooled sampling, and FT with FiLM has the lowest reported values among the completed comparisons. The three-layer results in \cref{tab:rmse_three_methods_transposed} use a different model family and are not a controlled comparison of network size.

\section{Pretraining details}
\label{sec:train_deets}

\textbf{Transformer architecture.} The simformer uses value, ID, condition, and attention dimensions of 32, with three attention heads of size 32. The model used for the cell-line and LSR comparison in \cref{tab:rmse_three_methods_transposed} has three layers and a feed-forward widening factor of two. The model/sampler and regularization ablations in \cref{tab:bmp_unified_summary,tab:lambda_sweep} instead used four layers and a widening factor of four. Rather than summing token embeddings (ID, condition, and value), we concatenate them prior to projection, which empirically produced more stable training and better downstream posterior quality.

We trained the simformer using 1,000 prior parameter draws evaluated across 940 experimental designs. For each draw $\theta\sim p(\theta)$ and design $\xi$, we generated expert-model outputs $y\sim p_{\text{eval}}(y\mid\theta,\xi)$ to form training examples $(\theta,\xi,y)$. While preliminary results suggest that increasing the number of prior parameter draws improves downstream posterior quality, we restricted this study to 1,000 draws for computational simplicity.

During training we employed two forms of masking. For \emph{edge masking}, we applied masks corresponding to marginals, the full joint, and the expert model graph, each with equal probability ($\tfrac{1}{3}$ of the time). For \emph{condition masking}, we used four different masks: (i) joint masking (noise all variables), (ii) posterior conditioning, (iii) likelihood conditioning, and (iv) random conditioning on 30\% of tokens in arbitrary groupings. These were applied with probabilities of 20\% for the joint, posterior, and likelihood masks, and 40\% for the random conditioning. We found that heavier use of posterior and likelihood masks tended to produce overconfident models, whereas increasing the joint masking improved generalization.

\paragraph{Pretraining pipeline.} We trained for 100 epochs with a batch size of $2{,}048$. Evaluating these parameter draws across all 940 designs produced approximately $1{,}000\times940=940{,}000$ simulator outputs. While this represents a large training set, the dimensionality of the parameter space ($\theta \in \mathbb{R}^{60}$) implies a trade-off between the number of prior parameter draws available and the coverage of experimental conditions. In practice, this dataset size was sufficient for demonstrating the model's capabilities within available compute, but there is room for future work studying scaling laws of the number of prior parameter draws and the number of parameters of the architecture.

\textbf{Compute resources.} All models were trained on NVIDIA A100 GPUs. Sampling and smaller ablation runs were also performed on NVIDIA A30 GPUs when memory constraints allowed.

\section{Fine-tuning and path-space details}
\label{sec:fine_tune_details}

\subsection{Fine-tuning objective and optimization}
\label{sec:fine-tuning-objective-optimization}

\paragraph{MSE reward.} We use the negative-MSE reward $R(y;y^o)=-\|y-y^o\|_2^2$ from \cref{sec:fine-tune} to measure agreement between conditional likelihood samples and observed responses. The fixed conditioning reference $\theta^{\mathrm{ref}}$ combines the biophysical parameters $\theta$ and receptor states $R$ in the BMP model. For an observation $y^o$ under design $\xi$ and $n$ Monte Carlo response samples $\{y_j\}_{j=1}^n$ drawn from the diffusion sampler at this fixed condition,
\[
\{y_j\}_{j=1}^n \sim \pi^\star(\phi;\theta^{\mathrm{ref}},\xi),
\]
we estimate the expected reward by
\[
\mathbb E_{y\sim p_\phi(\cdot\mid\theta^{\mathrm{ref}},\xi)}[R(y;y^o)]
\;\approx\;
-\frac{1}{n}\sum_{j=1}^n \bigl\|y_j - y^o\bigr\|_2^2.
\]
We aggregate these per-condition rewards across a minibatch during fine-tuning. At fixed conditioning, $\mathbb{E}\|Y-y^{\mathrm o}\|_2^2=\|\mathbb{E}Y-y^{\mathrm o}\|_2^2+\operatorname{tr}\operatorname{Cov}(Y)$, so the corresponding sample-level loss penalizes both mean prediction error and predictive variance.

\paragraph{Adjoint gradients.} The diffusion sampler induces the mapping $\pi^\star(\phi;\theta^{\mathrm{ref}},\xi)$ implicitly through the reverse-time SDE. At fixed $(\theta^{\mathrm{ref}},\xi)$, the likelihood score defines the sampling drift $\mu_\phi^y$, with dynamics integrated from $t=T$ to $t=0$,
\begin{align}
\mu_\phi^y(y,t) &\coloneqq f(y,t)-g(t)^2s_\phi^y(y,t;\theta^{\mathrm{ref}},\xi),
\label{eq:likelihood-sampling-drift}\\
dY_t &= \mu_\phi^y(Y_t,t)\,dt + g(t)\,d\overset{\scriptscriptstyle\leftarrow}{w}_t.
\label{eq:likelihood-sampling-sde}
\end{align}
Here $f$ and $g$ are the forward-SDE coefficients from \cref{sec:score-based-diffusion}. The superscript $y$ distinguishes this likelihood drift from the compositional transition mean $\mu_k^{\mathrm G}$ in \cref{eq:geffner-transition-main}. The diffusion coefficient and starting noise law are independent of $\phi$. We assume the drift is differentiable in $y$ and $\phi$, with integrable sensitivities permitting differentiation under the expectation. Holding the sampled noise realization fixed, the adjoint starts from the reward gradient at $Y_0$ and follows the same trajectory from $t=0$ to $t=T$. The system of \citet[Appendix A.3, Eq.\,14]{marion2025implicitdiffusionefficientoptimization}, expressed in this clock with column-vector gradients, becomes
\begin{align*}
A_0 &= \nabla_y R(Y_0;y^o), &
\frac{dA_t}{dt} &= -\bigl[\partial_y \mu_\phi^y(Y_t,t)\bigr]^\top A_t,\\
G^\phi_0 &= 0, &
\frac{dG^\phi_t}{dt} &= -\bigl[\partial_\phi \mu_\phi^y(Y_t,t)\bigr]^\top A_t.
\end{align*}
Here $A_t$ propagates the endpoint reward sensitivity along the sampled path, while $G^\phi_t$ accumulates the parameter gradient. The minus signs account for expressing the adjoint in the original diffusion clock. Taking the expectation of $G^\phi_T$ over paths yields the endpoint reward gradient with respect to $\phi$, which enters the minimized objective with a minus sign. To include the path penalty, we differentiate the full sampled trajectory cost,
\[
-R(Y_0;y^o)
+\frac{\lambda}{2}\int_0^T g(t)^2
\bigl\|s_\phi^y(Y_t,t)-s_{\phi_0}^y(Y_t,t)\bigr\|_2^2\,dt.
\]
Averaging this cost over FT paths gives the SOC objective in \cref{eq:soc-ft-objective}, with the integrated penalty contributing $\lambda\mathcal A(\theta^{\mathrm{ref}},\xi)$ from \cref{eq:score-control-energy}. Beyond the reward-only adjoint above, differentiation includes the integral's dependence on both the sampled states $Y_t$ and the parameters $\phi$. Our implementation uses checkpointed backpropagation through the discretized rollout to compute discrete-adjoint gradients of both the endpoint reward and the integrated penalty (\cref{sec:diffusion-fine-tuning-related-work}).

For BMP, we fixed $\theta^{\mathrm{ref}}=(\theta^{\mathrm{LSR}},R^{\mathrm{LSR}})$ at a best-fit least-squares regression (LSR) solution. We note that the choice of reference is an important hyperparameter during fine-tuning, similar to the choice of a known pair in \citet{wehenkel2025addressing}. In \cref{sec:fisher-rao-analysis}, we propose alternating Fisher--Rao-guided updates of posterior proposal particles with likelihood fine-tuning, allowing the mechanistic reference to evolve alongside the model.


\begin{table}[!t]
\centering
\caption{BMP regularization and Langevin-correction sweep. Each setting used 250 HBDS draws and one sampling seed. $L$ is the number of corrector steps per diffusion step.}
\label{tab:lambda_sweep}
\begingroup
\setlength{\tabcolsep}{4pt}
\renewcommand{\arraystretch}{1.10}
\small
\begin{tabular*}{\linewidth}{@{\extracolsep{\fill}}lrrrrrrrr@{}}
\toprule
& \multicolumn{2}{c}{$L=0$} & \multicolumn{2}{c}{$L=1$} & \multicolumn{2}{c}{$L=3$} & \multicolumn{2}{c}{$L=5$} \\
\cmidrule(lr){2-3}\cmidrule(lr){4-5}\cmidrule(lr){6-7}\cmidrule(lr){8-9}
$\boldsymbol{\lambda}$ & \textbf{RMSE} $\downarrow$ & \textbf{Median} $\downarrow$ & \textbf{RMSE} $\downarrow$ & \textbf{Median} $\downarrow$ & \textbf{RMSE} $\downarrow$ & \textbf{Median} $\downarrow$ & \textbf{RMSE} $\downarrow$ & \textbf{Median} $\downarrow$ \\
\midrule
$0$ & 0.29 & \textbf{12.16} & 0.36 & \textbf{12.96} & 0.36 & \textbf{12.97} & 0.36 & \textbf{12.97} \\
$10^{-4}$ & 0.30 & 28.15 & \textbf{0.32} & 89.55 & \textbf{0.32} & 90.63 & \textbf{0.32} & 90.87 \\
$5\times10^{-4}$ & \textbf{0.28} & 14.30 & 0.34 & 14.89 & 0.33 & 14.62 & 0.34 & 14.94 \\
$10^{-3}$ & 0.34 & 29.97 & 0.36 & 41.20 & 0.36 & 41.36 & 0.36 & 41.86 \\
$10^{-2}$ & 0.33 & 45.26 & 0.40 & 93.51 & 0.42 & 94.78 & 0.41 & 99.30 \\
$10^{-1}$ & 0.30 & 52.17 & 0.52 & 157.55 & 0.53 & 153.48 & 0.48 & 159.86 \\
$1$ & 0.30 & 47.00 & 0.40 & 130.73 & 0.41 & 134.90 & 0.41 & 139.27 \\
\bottomrule
\end{tabular*}
\endgroup
\end{table}

\paragraph{Regularization and corrector sweeps.} We compared seven regularization strengths and $L\in\{0,1,3,5\}$ Langevin corrections per diffusion step using the four-layer FiLM model (\cref{tab:lambda_sweep}). Each setting used HBDS with 100 diffusion steps, three block-refinement rounds, Tweedie conditioning for both blocks, 250 posterior draws, and sampling seed 42. All settings retained the derived diffusion coefficient, including the predictor-only $L=0$ case. At $L=5$, $\lambda=10^{-4}$ gives the lowest RMSE among positive penalties, while $\lambda=5\times10^{-4}$ gives the lowest median distance. The unregularized model has the lowest median distance overall. In this single-seed sweep, omitting corrections yields lower errors than adding them for every $\lambda$, showing that more corrector steps need not improve predictive fit at a fixed discretization. This is a correction-count ablation, not a comparison with the ordinary VP coefficient.

\subsection{Path-space formulation and Girsanov's theorem}
\label{sec:girsanov_background}

We briefly state the change-of-measure result used in \cref{sec:fine-tune,sec:path-divergence}. Let $\Omega=C([0,T];\mathbb R^d)$ denote the space of continuous trajectories. Consider two SDEs evolving from $t=0$ to $t=T$ with the same initial law and scalar diffusion coefficient,
\begin{align*}
    dX_t &= b_0(X_t,t)\,dt + g(t)\,dw_t,
    \qquad X_0\sim p_0,\\
    dX_t &= b_1(X_t,t)\,dt + g(t)\,d\widetilde w_t,
    \qquad X_0\sim p_0,
\end{align*}
and let $\mathbb{P}_0$ and $\mathbb{P}_1$ denote their induced probability measures on $\Omega$. We assume $g(t)>0$ and that solutions exist, remain finite on $[0,T]$, and have uniquely determined path laws. Here $w$ and $\widetilde w$ are Wiener processes under $\mathbb{P}_0$ and $\mathbb{P}_1$, respectively. It is convenient to express the drift perturbation in diffusion-normalized coordinates,
\[
u_t(x)
\coloneqq
\frac{b_1(x,t)-b_0(x,t)}{g(t)}.
\]
In stochastic optimal control (SOC) this normalized drift perturbation is conventionally interpreted as the control; we return to this connection in \cref{sec:soc-connection}. We further assume that $u_t$ is sufficiently integrable for the associated stochastic exponential to be a true martingale under $\mathbb P_0$ (Novikov's condition is a standard sufficient condition). This ensures that the trajectory weights have expectation one under the reference law and define a normalized probability measure. Under these assumptions, Girsanov's theorem \citep{girsanov1960transforming} expresses the change from drift $b_0$ to $b_1$ through the following Radon--Nikodym derivative, which reweights reference trajectories to obtain the modified path law,
\begin{equation}
    \frac{d\mathbb{P}_1}{d\mathbb{P}_0}(X)
    =
    \exp\!\left(
        \int_0^T u_t(X_t)^\top dw_t
        -
        \frac{1}{2}
        \int_0^T \|u_t(X_t)\|_2^2\,dt
    \right).
\label{eq:girsanov_rn}
\end{equation}
Here the stochastic integral is taken under the reference law $\mathbb{P}_0$, induced by drift $b_0$. The Radon--Nikodym derivative reweights these trajectories to recover expectations under $\mathbb{P}_1$, the law induced by the changed drift $b_1=b_0+gu$ with the same diffusion coefficient and initial distribution. Consequently, provided the control energy is integrable under $\mathbb{P}_1$,
\begin{equation}
    \mathrm{KL}\!\left(
        \mathbb{P}_1\,\middle\|\,\mathbb{P}_0
    \right)
    =
    \frac{1}{2}
    \mathbb{E}_{\mathbb{P}_1}
    \left[
        \int_0^T
        \|u_t(X_t)\|_2^2\,dt
    \right].
\label{eq:girsanov_kl}
\end{equation}

For the PT and FT reverse SDEs in our setting, the shared starting law is the noise distribution at $t=T$ and the drift difference is $\mu_\phi^y-\mu_{\phi_0}^y=-g(t)^2(s_{\phi}^{y}-s_{\phi_0}^{y})$. The normalized reverse-drift change is therefore $u_t=-g(t)(s_{\phi}^{y}-s_{\phi_0}^{y})$. Rewriting the processes in increasing sampling time reverses this sign but leaves the squared control energy unchanged, yielding \cref{eq:score-control-energy}.

\subsection{Stochastic optimal control connection}
\label{sec:soc-connection}

\paragraph{Likelihood fine-tuning as drift control.}
We fix $(\theta^{\mathrm{ref}},\xi)$ and suppress these conditioning arguments. To connect the Implicit Diffusion notation in \cref{sec:fine-tuning-objective-optimization} to the SOC notation, we write the same drifts as $b_\phi=\mu_\phi^y$ and $b_0=b_{\phi_0}=\mu_{\phi_0}^y$, where the subscript $0$ denotes the PT reference, not a time. We retain the reverse-SDE convention in which diffusion time $t$ decreases from $T$ to $0$, so sampling ends at $t=0$. Let $Y_t^{\phi_0}$ and $Y_t^\phi$ denote the PT and FT reverse-SDE trajectories, with path laws $\mathbb P_{\phi_0}$ and $\mathbb P_\phi$, respectively, and dynamics
\[
\begin{aligned}
    dY_t^{\phi_0} &= b_0(Y_t^{\phi_0},t)\,dt + g(t)\,d\overset{\scriptscriptstyle\leftarrow}{w}_t, \\
    dY_t^\phi &= \bigl[b_0(Y_t^\phi,t)+g(t)u_\phi(Y_t^\phi,t)\bigr]\,dt + g(t)\,d\overset{\scriptscriptstyle\leftarrow}{\widetilde w}_t,
\end{aligned}
\]
where
\begin{equation}
u_\phi(y,t)
:=
g(t)^{-1}\bigl[b_\phi(y,t)-b_0(y,t)\bigr]
=
-g(t)\bigl[s_\phi^y(y,t)-s_{\phi_0}^y(y,t)\bigr].
\label{eq:score-induced-control}
\end{equation}
Thus, fine-tuning changes the reverse drift from $b_0$ to $b_\phi=b_0+g u_\phi$, while preserving the diffusion coefficient $g(t)$ and the common starting noise law at $t=T$. Let $\pi_\phi=\pi^\star(\phi;\theta^{\mathrm{ref}},\xi)$ denote the corresponding endpoint distribution. Extracting the generated response $Y_0$ is a measurable projection of the full trajectory, so the data-processing inequality and the control-energy identity in \cref{eq:score-control-energy} give
\begin{equation}
\mathrm{KL}(\pi_\phi\|\pi_{\phi_0})
\leq
\mathrm{KL}(\mathbb{P}_\phi\|\mathbb{P}_{\phi_0})
=
\frac12\mathbb{E}_{\mathbb{P}_\phi}
\int_0^T\|u_\phi(Y_t,t)\|_2^2\,dt,
\label{eq:endpoint-path-kl-bound}
\end{equation}
under the assumptions in \cref{sec:girsanov_background}. Replacing the endpoint penalty in \cref{eq:fine-tune-objective} by this path penalty gives the parameterized SOC cost
\begin{equation}
\begin{aligned}
\mathcal{J}_\lambda(\phi)
&:=-\mathbb{E}_{y\sim\pi_\phi}[R(y;y^o)]
+\lambda\,\mathrm{KL}(\mathbb{P}_\phi\|\mathbb{P}_{\phi_0}) \\
&=\mathbb{E}_{\mathbb{P}_\phi}\!\left[-R(Y_0;y^o)
+\frac{\lambda}{2}\int_0^T\|u_\phi(Y_t,t)\|_2^2\,dt\right],
\end{aligned}
\label{eq:soc-ft-objective}
\end{equation}
This is KL-regularized SOC within the score-network parameterization, with observation loss $-R$ as the sampling-endpoint cost and $\lambda$ pricing control effort \citep{uehara2024entropycontrol}. Since the expected reward is unchanged and $\lambda\geq0$, data processing gives $\mathcal{F}(\pi^\star(\phi))\leq\mathcal{J}_\lambda(\phi)$. The SOC cost is therefore a path-regularized upper bound on the endpoint objective, not an identical rewrite of it. We optimize this cost through the score-network parameters using the checkpointed rollout differentiation described in \cref{sec:fine-tuning-objective-optimization}, without learning a separate control network or value function. Rewriting the sampler in increasing sampling time reverses the drift and control signs but leaves the quadratic cost unchanged.

\paragraph{Path-space trust regions.} The same penalty admits a constrained interpretation,
\begin{equation}
\min_\phi\;-\mathbb{E}_{\mathbb{P}_\phi}[R(Y_0;y^o)]
\qquad\text{subject to}\qquad
\mathrm{KL}(\mathbb{P}_\phi\|\mathbb{P}_{\phi_0})\leq\rho,
\label{eq:soc-path-trust-region}
\end{equation}
where $\rho\geq0$ is a divergence budget. Its Lagrangian is $\mathcal{J}_\lambda(\phi)-\lambda\rho$ with multiplier $\lambda\geq0$, so fixing $\lambda$ recovers the penalized optimization over $\phi$. This motivates choosing the penalty through a divergence budget rather than treating it only as a tuning constant, in the spirit of path-space trust-region methods \citep{blessing2025trustregion}. For multiple observed designs, constraints $\mathrm{KL}(\mathbb{P}_\phi^{\theta^{\mathrm{ref}},\xi}\|\mathbb{P}_{\phi_0}^{\theta^{\mathrm{ref}},\xi})\leq\rho(\xi)$ would introduce design-specific multipliers $\lambda_\xi$. Adjusting these against their budgets could regulate how strongly each condition is adapted, alongside a curriculum that selects which designs to revisit. This is a prospective extension, not an implemented adaptive optimizer or a claim of strong duality for the neural parameterization.

\subsection{Fisher--Rao optimization analysis}
\label{sec:fisher-rao-analysis}

\noindent\begin{minipage}[t]{0.56\linewidth}
\vspace{0pt}
Let $z=(\theta,\xi)$ collect the mechanistic parameters and experimental design. Under the regularity assumptions in \cref{sec:path-divergence}, the conditional reverse-SDE path laws $\{\mathbb{P}_{\phi}^{z}:z\in\Theta\times\Xi\}$ can be viewed locally as a statistical manifold. Its Fisher--Rao geometry is Riemannian on neighborhoods where the Fisher metric is nonsingular, and degenerate where it is singular. The path-space Fisher metric in \cref{eq:path-fisher-main} can be partitioned according to the two types of conditioning variables (\cref{fig:joint-fisher-geometry}),
\begin{equation}
\mathcal{I}_{\phi}^{\mathrm{path}}(z)
=
\begin{bmatrix}
I_{\theta\theta}(z) & I_{\theta\xi}(z) \\
I_{\xi\theta}(z) & I_{\xi\xi}(z)
\end{bmatrix},
\label{eq:path-fisher-blocks}
\end{equation}
\end{minipage}\hfill
\begin{minipage}[t]{0.40\linewidth}
\vspace{0pt}
\centering
\captionsetup{type=figure,font=small,skip=3pt}
\vspace{-3pt}
\includegraphics[width=\linewidth]{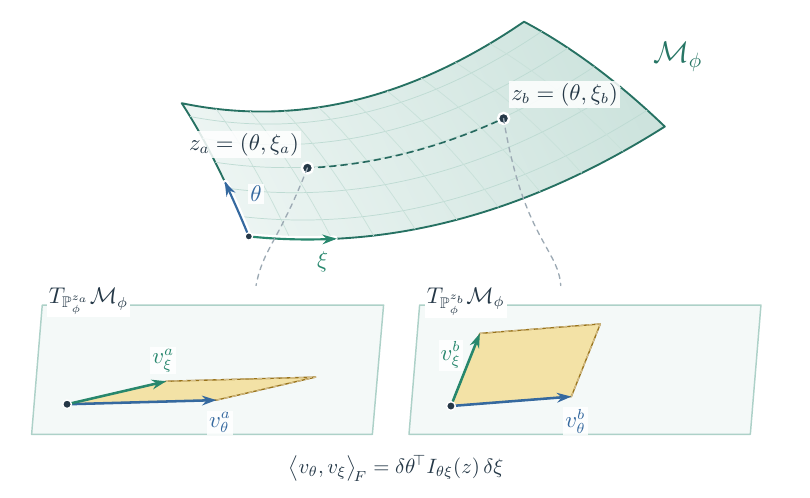}
\caption{Joint parameter Fisher geometry.}
\label{fig:joint-fisher-geometry}
\end{minipage}
\par\medskip

For example, the parameter block of \cref{eq:path-fisher-main} is
\begin{equation*}
I_{\theta\theta}(z)
=
\mathbb{E}_{Y\sim\mathbb{P}_{\phi}^{z}}
\left[
\int_0^T
g(t)^2
J_{\theta}s_{\phi}^{y}(Y_t,t;z)^{\top}
J_{\theta}s_{\phi}^{y}(Y_t,t;z)\,dt
\right].
\end{equation*}
We first consider how this geometry could refine a population of mechanistic reference parameters during fine-tuning. We then examine design sensitivity and parameter--design coupling, before propagating reference uncertainty into path-divergence diagnostics.

\paragraph{Refining the posterior proposal.}
The block $I_{\theta\theta}$ measures how perturbations of the mechanistic parameters change the conditional path law while holding the experimental design fixed. As a proposed extension, the fixed reference $\theta^{\mathrm{ref}}$ could be replaced by a posterior proposal distribution $q(\theta)$ that is updated during fine-tuning. Consider one observed design--response pair $(\xi,y^o)$ and $m$ proposal particles $\theta^{(i)}\sim q$. For each particle, we draw $n$ predictive response samples $y_j^{(i)}\sim p_\phi(\cdot\mid\theta^{(i)},\xi)$ at this same design. We use the same reward $R(y;y^o)$ as in \cref{sec:fine-tuning-objective-optimization} and write $\mathbb E_\phi[\cdot\mid\theta,\xi]$ for expectation over responses from this conditional likelihood. Averaging over proposal particles and predictive responses gives
\begin{equation*}
\mathbb{E}_{\theta\sim q}\mathbb E_\phi[R(y;y^o)\mid\theta,\xi]
\approx
-\frac{1}{mn}\sum_{i=1}^m\sum_{j=1}^n\|y_j^{(i)}-y^o\|_2^2.
\end{equation*}

At fixed $\phi$, the same reward can be used to tilt the posterior proposal toward parameter values whose conditional likelihood better predicts the observation. Specifically, consider
\begin{align}
q_\phi^\star
&=
\arg\min_q
\left\{
-\mathbb E_{\theta\sim q}
\left[
\mathbb E_\phi[R(y;y^o)\mid\theta,\xi]
\right]
+
\tau\,\mathrm{KL}(q\|p)
\right\},
\label{eq:reference-variational-objective}\\
q_\phi^\star(\theta)
&\propto
p(\theta)
\exp\!\left[
\frac{
\mathbb E_\phi[R(y;y^o)\mid\theta,\xi]
}{\tau}
\right],
\label{eq:reward-tilted-references}
\end{align}
where $p(\theta)$ is the prior and $\tau>0$ controls the strength of the reward tilt, so predictive agreement reweights the prior toward a posterior proposal concentrated on parameter values that better explain the observation. Recent concurrent work similarly contrasts KL reward tilting with transport-based adaptation, moving samples toward higher reward under a geometry induced by the pretrained generative dynamics \citep{mammadov2026wtf}. Motivated by a related transport view in mechanistic parameter space, we instead use the Fisher geometry of the conditional diffusion path law to move proposal particles in $\theta$.

The Fisher block $I_{\theta\theta}(\theta,\xi)$ provides a local geometry for reward-driven motion of the posterior proposal particles. At iteration $k$, with $\phi_k$ fixed, we propose the particle update
\begin{align}
\theta_{k+1}^{(i)}
&=
\theta_k^{(i)}+\Delta\theta_k^{(i)},
\label{eq:theta-particle-update}\\
\Delta\theta_k^{(i)}
&=
\eta\,I_{\theta\theta}(\theta_k^{(i)},\xi)^\dagger
\left.
\nabla_\theta
\mathbb E_{\phi_k}[R(y;y^o)\mid\theta,\xi]
\right|_{\theta=\theta_k^{(i)}}\notag\\
&\quad+
\eta\tau\,\nabla_\theta\log p(\theta_k^{(i)})
+
\sqrt{2\eta\tau}\,\varepsilon_k^{(i)},
\qquad
\varepsilon_k^{(i)}\sim\mathcal N(0,I).
\label{eq:theta-natural-gradient}
\end{align}
Here $\eta>0$ is the particle-update step size, and $\dagger$ denotes the Moore--Penrose pseudoinverse. For a positive-semidefinite Fisher matrix, it inverts positive eigenvalues while leaving zero eigenvalues at zero, reducing to the ordinary inverse when the matrix is nonsingular. The first term moves each proposal particle toward improved predictive agreement using the learned path geometry. In Fisher-null directions this term vanishes, while the prior drift and stochastic term remain active, allowing the proposal population to explore mechanistic variation that is not locally resolved by the conditional path law.

The reward-tilted distribution $q_{\phi_k}^\star$ expresses an ideal balance between predictive agreement and proximity to the prior. Motivated by this balance, the updates above use predictive sensitivities, prior drift, and stochastic exploration to refine an evolving proposal population, without assuming that its particles are distributed according to $q_{\phi_k}^\star$. After updating all $m$ particles, we represent the next proposal by the empirical measure
\begin{equation*}
q_{k+1}(\theta)
=
\frac{1}{m}\sum_{i=1}^m
\delta_{\theta_{k+1}^{(i)}}(\theta).
\end{equation*}
Holding these proposal particles fixed, we update the likelihood parameters by averaging the fine-tuning objective in \cref{eq:fine-tune-objective} across them, using the path-space penalty $\mathcal{A}$ from \cref{eq:score-control-energy} in place of the endpoint KL as in the main text,
\begin{equation}
\phi_{k+1}
\in
\arg\min_\phi
\frac{1}{m}\sum_{i=1}^m
\left[
-\mathbb E_\phi\!\left[R(y;y^o)\mid\theta_{k+1}^{(i)},\xi\right]
+\lambda\,\mathcal{A}\!\left(\theta_{k+1}^{(i)},\xi\right)
\right],
\label{eq:alternating-reference-fine-tuning}
\end{equation}
Here $\mathcal{A}(\theta,\xi)$ is evaluated for the candidate model $\phi$ relative to the fixed PT model $\phi_0$, so each summand is the conditional SOC cost $\mathcal{J}_\lambda$ in \cref{eq:soc-ft-objective}. The procedure therefore alternates between updating the posterior proposal under the current likelihood and fine-tuning the likelihood across the resulting proposal particles.

\paragraph{Design sensitivity and active simulation acquisition.}
The block $I_{\xi\xi}$ measures how strongly the conditional path law changes under local perturbations of the experimental design. For a small $\delta\xi$,
\begin{equation*}
\mathrm{KL}
\left(
\mathbb{P}_{\phi}^{\theta,\xi+\delta\xi}
\middle\|
\mathbb{P}_{\phi}^{\theta,\xi}
\right)
\approx
\frac12
\delta\xi^{\top}
I_{\xi\xi}(\theta,\xi)
\delta\xi.
\end{equation*}

\textit{PT model.} Large $I_{\xi\xi}$ identifies regions where the learned simulator surrogate changes rapidly with design. Scalar summaries such as $\mathrm{tr}\,I_{\xi\xi}$ or $\log\det(I_{\xi\xi}+\epsilon I)$ could therefore prioritize additional simulator calls where denser coverage is most useful, rather than expanding the pretraining set uniformly. Related work uses diffusion models to acquire simulation parameters from surrogate loss or uncertainty in online PDE-surrogate training \citep{cesar2026learning}, whereas our proposed signal is derived from local path-space sensitivity.

\textit{FT model.} $I_{\xi\xi}$ instead measures sensitivity of the adapted likelihood across experimental conditions. Designs with large sensitivity may warrant greater weight in a fine-tuning curriculum, particularly when accompanied by high predictive error or large PT--FT path divergence. Sensitivity alone does not imply misspecification, since additional PT simulations improve fidelity to the simulator whereas $\mathcal{A}(\theta,\xi)$ measures departure from the simulator toward observations.

\textit{Experimental design.} The same geometry could also guide the choice of experiments for learning mechanistic parameters. At a candidate design $\xi$, large $I_{\theta\theta}(\theta,\xi)$ indicates strong local sensitivity of the predictive path law to $\theta$. An objective such as
\begin{equation*}
\log\det
\left(
I_{\theta\theta}(\theta,\xi)+\epsilon I
\right)
\end{equation*}
could therefore prioritize experimental conditions that are locally informative about the mechanistic parameters.

\paragraph{Parameter--design coupling for proposal refinement.}
Beyond selecting new designs, the joint geometry could help refine posterior proposals using the designs already observed.

\textit{Parameter--design overlap.}
The cross block $I_{\theta\xi}$ measures whether perturbations of $\theta$ and $\xi$ induce similar first-order changes in the conditional path law. Removing their respective local scales gives the normalized coupling
\begin{equation}
C(\theta,\xi)
=
I_{\theta\theta}^{\dagger/2}
I_{\theta\xi}
I_{\xi\xi}^{\dagger/2},
\label{eq:theta-xi-coupling}
\end{equation}
with $I^{\dagger/2}=(I^\dagger)^{1/2}$. The singular vectors of $C$ identify parameter and design directions with aligned effects. Larger singular values indicate parameter-induced changes that can be closely reproduced by local design perturbations, while smaller values indicate more distinct effects.

\textit{Residual parameter geometry.}
Removing this design-associated component from the parameter block gives the Schur complement
\begin{equation}
I_{\theta\mid\xi}^{\mathrm{eff}}
=
I_{\theta\theta}
-
I_{\theta\xi}
I_{\xi\xi}^{\dagger}
I_{\xi\theta}.
\label{eq:fisher-schur}
\end{equation}
Geometrically, $I_{\theta\mid\xi}^{\mathrm{eff}}$ retains the component of parameter-induced path-law variation that cannot be reproduced locally by changing the design.

\textit{Proposal refinement across observed designs.}
For a collection of observed designs, a non-negative weighted average of
$I_{\theta\mid\xi}^{\mathrm{eff}}(\theta,\xi)$ could replace
$I_{\theta\theta}$ in the proposal-particle update in
\cref{eq:theta-natural-gradient}. The resulting geometry emphasizes
mechanistic directions whose effects remain distinguishable across the
fine-tuning curriculum after removing locally design-mimickable variation.

The same effective information could also be used prospectively to favor new designs at which weakly identified parameter directions become distinguishable from nearby design perturbations.

\paragraph{Uncertainty-aware path divergence.} A distribution over reference parameters also yields an uncertainty-aware version of the adaptation diagnostic. In place of evaluating the path divergence $\mathcal{A}(\theta,\xi)$ from \cref{eq:score-control-energy} at one $\theta^{\mathrm{ref}}$, define
\begin{align}
\overline{\mathcal{A}}_{q}(\xi)
&=
\mathbb{E}_{\theta\sim q}
\left[
\mathcal{A}(\theta,\xi)
\right],
\label{eq:uncertain-adaptation}\\
V_{\mathcal{A},q}(\xi)
&=
\mathrm{Var}_{\theta\sim q}
\left[
\mathcal{A}(\theta,\xi)
\right].
\label{eq:adaptation-reference-variance}
\end{align}
The first quantity identifies designs at which substantial adaptation is required across plausible mechanistic parameterizations, whereas the second identifies designs whose apparent misspecification depends strongly on parameter uncertainty. Here $V_{\mathcal{A},q}(\xi)$ quantifies how sensitive the inferred adaptation is to uncertainty in the mechanistic reference $\theta$. This is distinct from Monte Carlo uncertainty arising from the finite trajectory samples used to estimate each $\mathcal{A}(\theta,\xi)$.

\paragraph{Future work.} Although we do not develop these extensions here, they suggest promising ways to improve both PT and FT models in future SBI applications. For example, path-space geometry could guide refinement and particle transport of posterior proposals over mechanistic parameters, while incorporating parameter uncertainty into adaptation diagnostics (\cref{eq:uncertain-adaptation,eq:adaptation-reference-variance}). It could also inform simulation acquisition for pretraining, complementing diffusion-based active sampling \citep{cesar2026learning}, while supporting design-adaptive fine-tuning curricula and local parameter--design identifiability analysis (\cref{eq:theta-xi-coupling,eq:fisher-schur}).

\subsection{Related work on simulator misspecification in SBI}
\label{sec:sbi-misspecification-related-work}

Simulation-based inference (SBI) treats the expert model simulator as an implicit, or intractable, likelihood model from which one can sample, \(y\sim p(y | \theta)\), but whose density cannot be evaluated. Starting with samples from a prior \(\theta \sim p(\theta)\), synthetic pairs \((y,\theta)\) are generated to fit a neural density estimator for the likelihood \(p_\phi(y | \theta)\) \citep{papamakarios2019sequentialneurallikelihoodfast, lueckmann2018likelihood}, posterior \(p_\phi(\theta | y)\) \citep{Papamakarios2016, greenberg2019automatic, lueckmann2017flexible, sharrock2022sequential}, or likelihood-to-evidence ratio \citep{miller_contrastive_2022, Durkan2020} using observed data \(y_o\), enabling parameter inference without explicit likelihoods. Thus, SBI is well-suited for biological simulators that use stochastic differential equations or simulation by convex optimization \citep{dirks2007, su2022ligand}. SBI relies on a simulator that accurately reflects the true, and unknown, data-generating process of the observed data, i.e., $p^\star(y_o) \coloneqq \int p^\star(\theta)p^\star(y_o | \theta) d\theta$, which is typically unavailable and can be the cause of poor performance of SBI models \citep{Cannon2022, schmitt2022detectingmodelmisspec}. This has resulted in a variety of techniques to improve robustness of SBI inference to model misspecification \citep{ward2022robust, huang2023learning, gao2023generalized, wehenkel2025addressing, mishra2025robust} but there is a gap in methods that \emph{transfer} improvement in predictive accuracy to posterior inference.

\subsection{Related work on diffusion fine-tuning}
\label{sec:diffusion-fine-tuning-related-work}

Diffusion fine-tuning is a rapidly growing field with diverse optimization and regularization strategies. We review a selection of approaches relevant to SBI, summarized in \cref{tab:diffusion_ft_related}, rather than attempting an exhaustive survey. The comparison is intended to highlight the main tradeoffs among methods, including what feedback they require, how that signal is converted into model updates, whether gradients must pass through sampling, what auxiliary quantities must be learned, and how strongly the adapted model is tied to its pretrained reference. We then motivate our use of Implicit Diffusion for the present SBI setting while highlighting alternative strategies that may be preferable under different feedback, computational, and regularization requirements.

\begin{table*}[!t]
\centering
\caption{
Representative approaches to reward- and constraint-based generative-model adaptation, compared by the feedback they require, how model updates are obtained, whether gradients propagate through sample generation, any additional learned objects, and how deviation from a reference model is controlled. In the feedback column, $R$ denotes reward values without sample derivatives, $\nabla_y R$ denotes differentiable reward or loss feedback, $(h,h^\star)$ denotes constraint statistics and target moments, and $\nabla_y E$ denotes gradients of a differentiable unnormalized target energy $E(y)$, with $p^\star(y)\propto\exp[-E(y)]$; when an energy is instead defined relative to a reference distribution $p_0$, the equivalent tilted target is $p^\star(y)\propto p_0(y)\exp[-E(y)]$. In the sampler-gradient column, ``Full'' denotes differentiation through the complete sampled trajectory and explicit step labels indicate partial differentiation. In the learned-auxiliary column, ``--'' means that no additional function or parameter is fitted beyond the adapted generator or control and any supplied reward model; computed adjoint states, trajectory queues, and replay buffers are not counted. PT denotes a pretrained model. For Implicit Diffusion, \textbf{Full}$^\ast$ denotes full finite-horizon sampler differentiation. Our implementation uses checkpointed backpropagation through a complete discretized rollout rather than the original queued joint sampling--optimization scheme, so that each gradient update corresponds to a complete rollout under a single selected design.
}
\label{tab:diffusion_ft_related}
\small
\setlength{\tabcolsep}{3pt}
\renewcommand{\arraystretch}{1.12}
\begin{tabular}{@{}L{\dimexpr0.19\linewidth-5pt\relax}L{\dimexpr0.105\linewidth-5pt\relax}L{\dimexpr0.215\linewidth-5pt\relax}L{\dimexpr0.12\linewidth-5pt\relax}L{\dimexpr0.18\linewidth-5pt\relax}L{\dimexpr0.19\linewidth-5pt\relax}@{}}
\toprule
\textbf{Method} & \textbf{Feedback} & \textbf{Update mechanism} & \textbf{Sampler gradients} & \textbf{Learned auxiliaries} & \textbf{Reference regularization} \\
\midrule
DPOK \citep{fan2023dpok}
& $R$ & Policy gradient & -- & Optional value baseline & Discrete path KL \\
\addlinespace[3pt]
DDPO \citep{black2024ddpo}
& $R$ & Policy gradient & -- & -- & No explicit PT KL \\
\addlinespace[3pt]
ReFL \citep{xu2023imagereward}
& $\nabla_y R$ & Intermediate clean-prediction gradient & 1 selected step & -- & Pretraining denoising loss \\
\addlinespace[3pt]
AlignProp \citep{prabhudesai2023alignprop}
& $\nabla_y R$ & Randomly truncated rollout backpropagation & Random trailing steps & -- & No explicit PT KL \\
\addlinespace[3pt]
DRaFT-$K$ \citep{clark2024draft}
& $\nabla_y R$ & Truncated rollout backpropagation & Last $K$ steps & -- & LoRA weight decay (implementation) \\
\addlinespace[3pt]
ELEGANT \citep{uehara2024entropycontrol}
& $\nabla_y R$ & SOC objective via rollout gradients & Full & Initial-time value function and initial-law sampler & Drift-control cost and initial-law KL \\
\addlinespace[3pt]
Adjoint Matching \citep{domingoenrich2025adjointmatchingfinetuningflow}
& $\nabla_y R$ & Regression on detached adjoint-derived control targets & -- & -- & SDE path KL (SOC) \\
\addlinespace[3pt]
TR-SOCM \citep{blessing2025trustregion}
& $\nabla_y R$ & Reweighted adjoint regression & -- & -- & Inter-iterate path-KL constraint \\
\addlinespace[3pt]
Adjoint Sampling \citep{havens2025adjointsamplinghighlyscalable}
& $\nabla_y E$ & Adjoint regression with reference bridges & -- & -- & Control cost, energy target \\
\addlinespace[3pt]
Implicit Diffusion \citep{marion2025implicitdiffusionefficientoptimization}
& $\nabla_y R$ & Joint sampling--optimization with adjoint gradients & Full$^\ast$ & -- & Optional SDE path KL \\
\addlinespace[3pt]
CGM-relax / CGM-reward \citep{smith2026calibrating}
& $(h,h^\star)$ & Likelihood-ratio gradients & -- & Tilt coefficients (reward only) & KL to PT (path KL for SDEs) \\
\addlinespace[3pt]
Value Matching \citep{jensen2026value}
& $R$ & Online value-function learning & -- & Value function & Control-based \\
\bottomrule
\end{tabular}
\end{table*}

\paragraph{Reward optimization and gradient estimators.} Reward-based diffusion fine-tuning encompasses policy-gradient and direct-differentiation methods, with different requirements on reward access \citep{uehara2024tutorial}. DPOK \citep{fan2023dpok} and DDPO \citep{black2024ddpo} treat denoising as a sequential decision process, allowing optimization without differentiating the reward. DPOK explicitly regularizes deviation from the PT model through a KL penalty and optionally learns a value baseline to reduce gradient variance. When rewards are differentiable, ReFL \citep{xu2023imagereward} backpropagates through a clean prediction from one selected late denoising step, holding earlier states fixed. AlignProp \citep{prabhudesai2023alignprop} uses randomly truncated rollout backpropagation, with low-rank adaptation and gradient checkpointing to reduce memory requirements. DRaFT-$K$ \citep{clark2024draft} restricts backpropagation to the last $K$ denoising steps, and DRaFT-LV reduces gradient variance for $K=1$. These approaches trade access to reward gradients and the cost of differentiating long trajectories against the sampling variance of policy-gradient estimates. Preference-based alternatives such as Diffusion-DPO optimize an ELBO-derived objective from paired comparisons rather than a supplied scalar reward \citep{wallace2024diffusiondpo}. We focus here on methods compatible with observation-based rewards and distributional constraints.

\paragraph{Stochastic optimal control.} SOC treats fine-tuning as learning a change to the PT drift that improves terminal reward while penalizing the accumulated control effort. With a shared initial law and diffusion coefficient, Girsanov's theorem expresses path-space KL as a quadratic cost of this drift change (\cref{sec:girsanov_background}). This gives a trajectory-level notion of staying close to the PT model, rather than a penalty on network weights. ELEGANT \citep{uehara2024entropycontrol} uses this formulation to optimize both the denoising drift and the initial noise distribution, regularizing both changes. Its implementation estimates the value function at the initial diffusion time and learns an auxiliary process for sampling the adapted initial distribution. Adjoint Matching \citep{domingoenrich2025adjointmatchingfinetuningflow} takes a different route, fitting the control to targets obtained from a simplified backward adjoint equation. Trajectories and targets are held fixed during each regression update, avoiding differentiation through the controlled rollout, although reward and base-drift derivatives are still needed to construct the targets. Its reward fine-tuning formulation uses a memoryless noise schedule to recover the intended reward-tilted output distribution without separately learning the initial noise law. TR-SOCM \citep{blessing2025trustregion} fits controls using reweighted adjoint targets with a path-KL constraint relative to the preceding iterate. A scalar dual multiplier controls the trust region, while buffered trajectories support reuse across updates. Adjoint Sampling \citep{havens2025adjointsamplinghighlyscalable} instead applies adjoint regression to unnormalized energy targets, using reference bridges to reuse terminal samples and energy gradients across multiple parameter updates. It learns an energy-targeting sampler rather than preserving a PT data distribution. These options differ in the auxiliary quantities they learn, their sampling requirements, and their target distributions.

\paragraph{Implicit Diffusion and sampler gradients.} Implicit Diffusion \citep{marion2025implicitdiffusionefficientoptimization} separates gradient estimation through a parameterized sampler from the algorithm used to interleave sampling and optimization. Its finite-horizon adjoint formulation propagates terminal-loss sensitivity through the sampling dynamics and can incorporate a Girsanov path-KL penalty \citep[Appendix A.3]{marion2025implicitdiffusionefficientoptimization}. Algorithm 3 additionally maintains a queue of trajectories that advance while model parameters change, coupling sampling with parameter updates. Our implementation instead completes reverse-SDE rollouts at fixed parameters and uses checkpointed backpropagation, computing discrete-adjoint gradients while trading recomputation for memory. We therefore use the finite-horizon sampler-gradient formulation, not the queued optimization algorithm. This shares the pathwise differentiation principle of DRaFT \citep{clark2024draft}, whereas Adjoint Matching holds trajectories and adjoint targets fixed during control regression. Neither checkpointing nor adjoint differentiation itself specifies the regularizer.

\paragraph{Controlling departure from the pretrained model.}
Fine-tuning methods differ not only in how they obtain parameter updates, but also in how they constrain departure from the pretrained (PT) model. DRaFT uses AdamW weight decay on its LoRA factors, shrinking the learned update toward the PT parameters \citep[Appendix A.1]{clark2024draft}. This is parameter-space regularization rather than a KL between sampling laws; a separate ablation instead penalizes PT--FT noise-prediction differences at the final denoising step \citep[Appendix B.8]{clark2024draft}. DPOK directly regularizes the sampling distribution by summing transition KLs over the discrete denoising chain, yielding a path-space KL that also bounds endpoint divergence \citep{fan2023dpok}. ReFL instead retains a pretraining denoising loss \citep{xu2023imagereward}. For continuous SDEs with shared diffusion and initial laws, the corresponding path KL is the integrated drift-control energy in \cref{eq:girsanov_kl}; this is the regularizer we pair with Implicit Diffusion. ELEGANT additionally regularizes changes to the initial noise law. Adjoint Sampling uses a control cost to reach an energy-defined target, so its regularization serves target transport rather than preservation of a PT data distribution.

\paragraph{Feedback and distributional constraints.} Beyond the gradient estimator, the reliability and availability of feedback determine which adaptations are useful. \citet{uehara2024feedback} study online fine-tuning with limited reward queries, while BRAID \citep{uehara2024braid} uses conservative reward estimates to limit overoptimization outside the support of offline data. Calibrating Generative Models (CGM) \citep{smith2026calibrating} seeks the KL-closest model satisfying moment constraints. CGM-relax penalizes constraint violations, whereas CGM-reward fits exponential-tilt coefficients that define a reward. Both differentiate density ratios with sampled outputs held fixed, avoiding derivatives of the constraint functions and backpropagation through sample generation. For continuous diffusions, these are path-measure ratios evaluated using Girsanov's theorem, so tractable endpoint likelihoods are not required. A response-mean constraint also differs from our expected sample-wise MSE, which penalizes predictive variance as well as mean error (\cref{sec:fine-tuning-objective-optimization}). Flow Density Control \citep{desanti2025flowdensitycontrol} extends optimization to broader utilities and divergences through mirror-flow updates, and Value Matching \citep{jensen2026value} studies gradient-free reward-guided flow adaptation. These works motivate considering both the computational cost of fine-tuning and whether its objective remains informative away from the observed data.

\paragraph{Why this fine-tuning formulation for SBI?} The challenge in our SBI setting is to match a finite set of observed design--response pairs $(\xi,y_o)$ without access to the true data-generating process for additional samples or queries at unmeasured designs. We adapt the conditional likelihood at a simulator-aligned $\theta^{\mathrm{ref}}$, using discrepancy from these observations as the learning signal and the PT path law as a regularizing reference. We optimize this path-regularized objective using checkpointed backpropagation through the discretized reverse-SDE sampler. This computes pathwise parameter gradients through a discrete adjoint, following the finite-horizon sampler-gradient formulation of Implicit Diffusion \citep{marion2025implicitdiffusionefficientoptimization} and sharing the differentiation principle of direct reward fine-tuning \citep{clark2024draft}. This lets us differentiate the observation-based loss directly without fitting a separate value function or replacing it with a control-regression objective. The SBI contribution is the adaptation at mechanistically grounded conditioning values, its transfer to posterior queries through shared FiLM-modulated embeddings (\cref{sec:rep_learn}), and the interpretation of conditional path divergence alongside predictive error, rather than a new gradient estimator. Because the required correction varies with $\xi$, choosing which designs to revisit, how to weight their losses, and how strongly to regularize raises a \textit{curriculum} problem \citep{bengio2009curriculum}. Our toy illustrates how updates that correct a misspecified region can degrade regions that were already accurate (\cref{sec:design-misspecification-toy}). Interpreting $\mathcal A(\theta,\xi)$ alongside predictive error reveals these heterogeneous effects and motivates adaptive curricula and design-specific trust regions, with weights $\lambda_\xi$ adjusted to control departure from the PT path law \citep{blessing2025trustregion}. Such design-dependent penalties remain a proposed extension. Adjoint Matching, TR-SOCM, and Adjoint Sampling provide alternative optimization routes worth investigating for this conditional likelihood setting.

\subsection{Embedding-based transfer learning}
\label{sec:rep_learn}

Fine-tuning a joint simformer through its likelihood requires the learned correction to transfer to posterior queries, but the original architecture of \citet{gloeckler24a} zeros condition inputs for latent tokens and leaves this transfer largely to the transformer. We instead apply token-aware Feature-wise Linear Modulation (FiLM) \citep{perez2017filmvisualreasoninggeneral} before the transformer. For token $i$, let $e_i^{\mathrm{val}}$, $e_i^{\mathrm{id}}$, and $e_i^{\mathrm{cond}}$ denote its value, identity, and latent-or-conditioned state embeddings, and define $b_i=[e_i^{\mathrm{val}},e_i^{\mathrm{id}}]$. A linear projection of $[e_i^{\mathrm{id}},e_i^{\mathrm{cond}}]$ produces $[\tilde\gamma_i,\tilde\beta_i]$, which we bound as $\gamma_i=1+\varepsilon\tanh(\tilde\gamma_i)$ and $\beta_i=\varepsilon\tanh(\tilde\beta_i)$ before forming the transformer input $z_i=[\gamma_i\odot b_i+\beta_i,e_i^{\mathrm{cond}}]$. Thus, the condition state both enters the transformer explicitly and modulates the value--identity representation, while $\varepsilon$ limits how far latent and conditioned embeddings can separate; likelihood fine-tuning can therefore update a representation that posterior queries reuse.

\subsection{FMCPE-LSR baseline}
\label{sec:fmcpe-lsr-details}

\begin{table}[!b]
\centering
\captionsetup{font=small,skip=4pt}
\caption{Predictive errors through the original BMP simulator on all 940 measurements, using 250 parameter draws per run. Values are mean $\pm$ SE over three FMCPE training seeds or three sampling seeds of each fixed PT or FT simformer. FMCPE $\ell$-C2ST uses separate shared-normalization correction refits and a joint 80-dimensional parameter target.}
\label{tab:fmcpe-lsr-budget}
\setlength{\tabcolsep}{2pt}
\renewcommand{\arraystretch}{1.05}
\footnotesize
\begin{tabular*}{\linewidth}{@{\extracolsep{\fill}}lrrccc@{}}
\toprule
Method & Observations & LSR fits & RMSE $\downarrow$ & Median distance $\downarrow$ & $\ell$-C2ST $\downarrow$ \\
\midrule
FMCPE-LSR & 235 & 1,204 & $0.242 \pm 0.004$ & $203.661 \pm 38.307$ & $0.198 \pm 0.023$ \\
FMCPE-LSR & 235 & 2,408 & $0.231 \pm 0.026$ & $203.451 \pm 70.019$ & $0.150 \pm 0.039$ \\
FMCPE-LSR & 235 & 3,612 & $0.184 \pm 0.010$ & $77.887 \pm 19.521$ & $0.194 \pm 0.050$ \\
FMCPE-LSR & 235 & 4,816 & $0.168 \pm 0.011$ & $136.460 \pm 77.855$ & $0.157 \pm 0.049$ \\
\midrule
FMCPE-LSR & 470 & 1,204 & $0.256 \pm 0.008$ & $283.935 \pm 26.484$ & $0.137 \pm 0.020$ \\
FMCPE-LSR & 470 & 2,408 & $0.202 \pm 0.006$ & $108.803 \pm 19.193$ & $0.081 \pm 0.013$ \\
FMCPE-LSR & 470 & 3,612 & $0.195 \pm 0.003$ & $154.748 \pm 46.537$ & $0.095 \pm 0.019$ \\
FMCPE-LSR & 470 & 4,816 & $0.177 \pm 0.003$ & $31.505 \pm 10.206$ & $0.065 \pm 0.004$ \\
\midrule
FMCPE-LSR & 705 & 1,204 & $0.246 \pm 0.003$ & $296.322 \pm 88.535$ & $0.074 \pm 0.027$ \\
FMCPE-LSR & 705 & 2,408 & $0.197 \pm 0.014$ & $161.322 \pm 41.237$ & $0.099 \pm 0.022$ \\
FMCPE-LSR & 705 & 3,612 & $0.193 \pm 0.002$ & $98.697 \pm 18.042$ & $0.124 \pm 0.046$ \\
FMCPE-LSR & 705 & 4,816 & $0.168 \pm 0.007$ & $50.355 \pm 16.256$ & $0.072 \pm 0.036$ \\
\midrule
FMCPE-LSR & 940 & 1,204 & $0.270 \pm 0.012$ & $203.950 \pm 20.409$ & $0.083 \pm 0.017$ \\
FMCPE-LSR & 940 & 2,408 & $0.210 \pm 0.025$ & $188.117 \pm 73.589$ & $0.117 \pm 0.029$ \\
FMCPE-LSR & 940 & 3,612 & $0.186 \pm 0.023$ & $62.011 \pm 26.766$ & $0.129 \pm 0.010$ \\
FMCPE-LSR & 940 & 4,816 & $\mathbf{0.175 \pm 0.005}$ & $31.051 \pm 8.460$ & $0.086 \pm 0.017$ \\
\midrule
PT BMP Simformer & 940 & 0 & $0.252 \pm 0.003$ & $11.111 \pm 0.116$ & $0.157 \pm 0.010$ \\
FT BMP Simformer & 940 & 1 & $0.245 \pm 0.003$ & $\mathbf{10.376 \pm 0.058}$ & $0.189 \pm 0.019$ \\
\bottomrule
\end{tabular*}
\end{table}

\paragraph{Protocol.} FMCPE first trains a neural posterior estimator (NPE) on simulator-generated pairs and then uses flow matching to correct its predictions with a smaller calibration set \citep{ruhlmann2026fmcpe}. For BMP, we used the authors' Lampe/Zuko NPE implementation and treated the $4{,}816$ retained LSR fits as pseudo-calibration parameters. They are estimates from the observed BMP dataset, not ground-truth parameters from a high-fidelity data-generating process. The complete sweep contains $48$ runs: four nested observation budgets ($235$, $470$, $705$, and $940$ measurements), four nested LSR budgets ($1{,}204$, $2{,}408$, $3{,}612$, and $4{,}816$ fits), and three seeds ($33$, $43$, and $53$). Observation subsets were balanced across cell lines. For each condition, we trained a dedicated NPE and correction flows, drew $1{,}024$ posterior samples, and evaluated the first $250$ through the original one-step BMP simulator over all $940$ measurements.

For the simformer comparison, we evaluated the FT model and the PT model from which it was initialized, each with sampling seeds $42$, $43$ and $44$. Every run used $250$ joint draws of the $60$ biophysical parameters and $20$ cell-line receptor concentrations, $20$ predictor steps, five corrector steps, three EM steps, hierarchical receptor sampling, and Tweedie conditioning in both parameter blocks. We computed the predictive metrics using the original BMP simulator over all $940$ measurements.

\paragraph{Shared predictive metrics.} Across our BMP posterior-inference experiments, we evaluate inferred parameters by running them through the original simulator and comparing its predictions with the observed responses. RMSE and median distance summarize this agreement, capturing the best fit among posterior draws and the typical fit across draws, respectively. For the full-data comparison here, let $\hat{\mathbf y}^{(s)}$ be the simulator's response vector for posterior draw $s$, let $\mathbf y^o$ contain the $940$ observed responses, and let $r_y=\max_i y_i^o-\min_i y_i^o$. We report
\begin{align}
    \operatorname{RMSE}_{\min}
    &=\min_s\frac{1}{r_y}\sqrt{\frac{1}{940}\sum_{i=1}^{940}
      \left(\hat y_i^{(s)}-y_i^o\right)^2},
\label{eq:bmp-predictive-metrics}\\
    d_{\mathrm{med}}&=\operatorname{median}_{s}
      \left\|\hat{\mathbf y}^{(s)}-\mathbf y^o\right\|_2.
\label{eq:bmp-median-predictive-distance}
\end{align}
This same RMSE definition is used for every method, with the FMCPE entries taken from its saved $250$-draw prediction banks. Both metrics are in response space.

\paragraph{Results.} FMCPE-LSR can achieve lower best-draw RMSE than either simformer, particularly with larger LSR budgets (\cref{tab:fmcpe-lsr-budget}). With all $940$ observations and $4{,}816$ LSR fits, its RMSE is $0.175$, compared with $0.252$ for PT and $0.245$ for FT. This advantage does not extend to median distance, which is $31.05$ for full-budget FMCPE-LSR versus $11.11$ for PT and $10.38$ for FT. Neither simformer is surpassed on mean median distance by any FMCPE-LSR configuration in the sweep. For each observation subset, our FMCPE-LSR implementation required training a dedicated NPE and two correction flows, whereas the PT and FT joint models reuse their weights across subsets through masking and compositional sampling. The FMCPE networks contained $0.87$--$1.28$ million parameters in total, including the NPE frozen during correction, compared with approximately $0.48$ million for each PT/FT simformer. Thus, the lower best-draw error came with subset-specific training and a larger combined model, without a corresponding improvement in typical predictive fit.

\subsection{Design-dependent misspecification toy}
\label{sec:design-misspecification-toy}

\paragraph{Analytic model.} The simulator and observation model in \cref{sec:imp_of_fine_tune} share the prior $\theta\sim\mathcal{N}(0,I_2)$ and noise standard deviation $\sigma=0.2$, but differ by the additive discrepancy
\[
    h(\xi)=\exp\!\left(-\frac{\|\xi-c\|_2^2}{2w^2}\right),
    \qquad c=(1,-0.5)^\top,\quad w=0.4,\quad \xi\in[-2,2]^2.
\]
Pretraining drew designs uniformly over this square. Each observation panel contains $n=10$ responses generated at $\theta^\star=(0.8,-1.2)^\top$; the localization run used a separate, manually specified design panel. Fine-tuning conditioned on an LSR estimate fitted under the simulator model, not on $\theta^\star$. We distinguish the misspecified region $h(\xi)\geq0.1$ from its complement.

The true posterior is available analytically. Let $X$ have rows $\xi_i^\top$, and let $\mathbf{y}^o$ and $\mathbf{h}$ collect the observations and discrepancies $h(\xi_i)$. Then
\[
    p^\star(\theta\mid\mathbf{y}^o,X)=\mathcal{N}(\mu_\star,\Sigma),
    \qquad \Sigma=(I_2+\sigma^{-2}X^\top X)^{-1},
    \qquad \mu_\star=\sigma^{-2}\Sigma X^\top(\mathbf{y}^o-\mathbf{h}).
\]
The simulator-implied posterior has the same covariance but omits $\mathbf{h}$ from its mean. This gives an exact reference for evaluating posterior correction.

\begin{table}[!b]
\centering
\caption{Fine-tuning on the analytic design-misspecification toy. Posterior errors are measured against the exact true posterior; likelihood RMSE is evaluated in correctly specified and misspecified regions of the design space. The improvement factor is the PT error divided by the FT error, so values above $1\times$ favor fine-tuning and values below $1\times$ indicate degradation.}
\label{tab:design-misspecification-toy}
\small
\begin{tabular}{lccc}
\toprule
\textbf{Metric} & \textbf{PT} & \textbf{FT} & \textbf{Improvement (PT/FT)} \\
\midrule
Posterior $W_2$ $\downarrow$ & $0.09$ & \textbf{$0.07$} & $1.35\times$ \\
Posterior mean error $\downarrow$ & $0.08$ & \textbf{$0.03$} & $2.88\times$ \\
Correct-region likelihood RMSE $\downarrow$ & \textbf{$0.05$} & $0.26$ & $0.18\times$ \\
Misspecified-region likelihood RMSE $\downarrow$ & $0.45$ & \textbf{$0.30$} & $1.49\times$ \\
\bottomrule
\end{tabular}
\end{table}

The toy exposes a trade-off hidden by posterior metrics alone: adaptation can move the posterior toward the truth while degrading a likelihood that was already accurate away from the discrepancy. More aggressive full-network fine-tuning reduces posterior $W_2$ further to $0.03$, but raises correct-region likelihood RMSE to $0.32$.

\paragraph{Design-localized error and adaptation.} \Cref{fig:design-misspecification-adjustment} uses a separate full-network fine-tuning run with joint two-dimensional RBF design embeddings and $\lambda=0.1$. We estimated the PT and FT likelihood means using 16 draws per design at the same fixed LSR reference and evaluated their absolute errors against the known noise-free response at $\theta^\star$. The displayed $e_{\mathrm{PT}}$ and $e_{\mathrm{FT}}$ maps share a color scale, and their annotations average these pointwise errors over all 1,681 design-grid points. This run is distinct from the posterior benchmark above and supplies the $\lambda=0.1$ setting in the regularization sweep below.

MAE decreases from $0.31$ to $0.20$ inside the misspecified region ($h\geq0.1$), but increases from $0.11$ to $0.15$ outside it. Path divergence has a Pearson correlation of $0.75$ with $h$, and its mean inside the misspecified region is $4.73$ times that outside. Thus, concentrated adaptation can coexist with both local correction and deterioration elsewhere; path divergence alone does not determine whether predictions improve. This analysis used one training seed, one dataset of 10 observations, and one generating parameter vector, so it is a controlled diagnostic rather than a multi-seed benchmark.

\paragraph{Regularization trade-off.}
\Cref{fig:toy-lambda-sweep} varies $\lambda$ for the joint-RBF model while holding the PT model, observation panel, LSR reference, and remaining training settings fixed. Strong regularization reduces response adjustment and path divergence, retaining more of the PT model, but also weakens correction near the omitted bump. Increasing $\lambda$ from $1$ to $10$ reduces RMSE outside the misspecified region by about $3\%$, while increasing it inside from $0.25$ to $0.29$. None of the tested FT models improves full-grid RMSE over PT. Here prediction error compares the FT mean at $\theta^{\mathrm{LSR}}$ with the noiseless true response at $\theta^\star$, so it includes reference-parameter error. This trade-off motivates a design-dependent penalty $\lambda(\xi)$ that preserves well-modeled regions while permitting correction where needed; testing that extension remains future work.

\begin{figure}[!b]
\centering
\includegraphics[width=\linewidth]{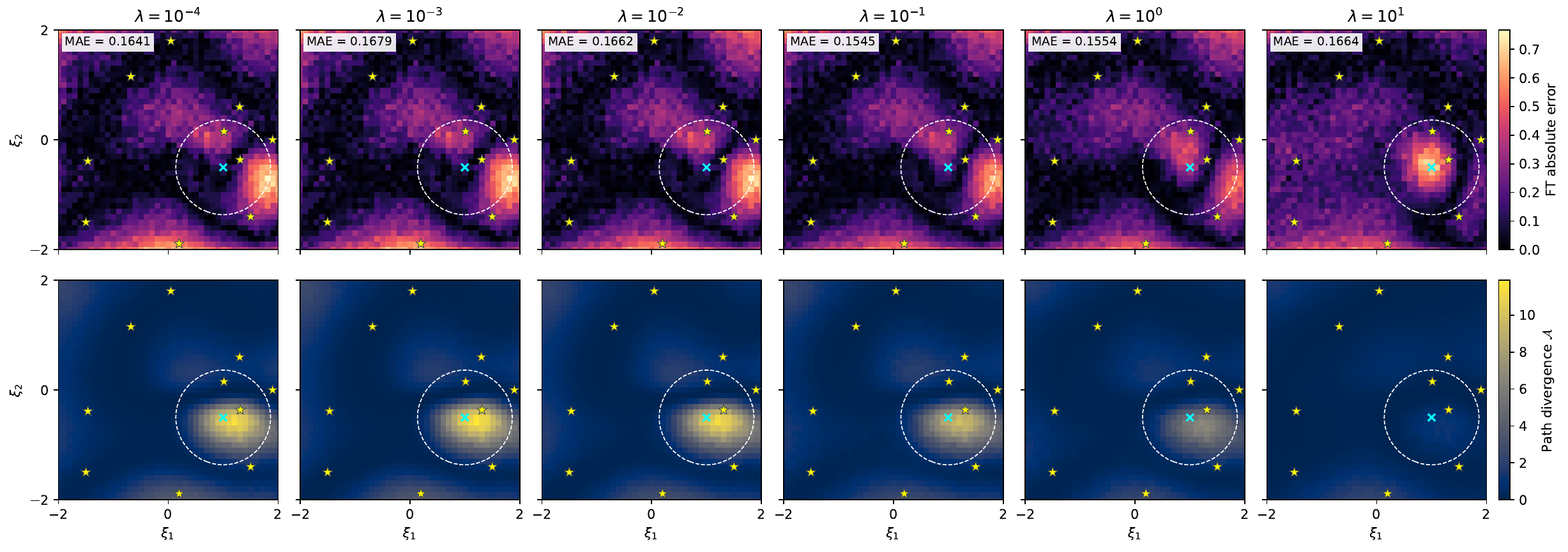}
\caption{Regularization sweep on the joint-RBF misspecification toy. Columns increase $\lambda$ from $10^{-4}$ to $10$. (\emph{Top}) FT absolute prediction error $e_{\mathrm{FT}}(\xi)$, with full-grid MAE annotated. (\emph{Bottom}) Path divergence $\mathcal{A}(\xi)$. Stars mark observed designs; dashed contours enclose $h(\xi)\geq0.1$. Color scales are shared within rows. The sweep used one training seed and 16 Monte Carlo draws per design to estimate response means.}
\label{fig:toy-lambda-sweep}
\end{figure}

\paragraph{Sampling-time autoguidance.} Inspired by autoguidance \citep{karras2024autoguidance}, we tested score blending as a sampling-time alternative to changing the training penalty. Using the frozen PT and FT models from \cref{fig:design-misspecification-adjustment}, we sampled with $s_w^y=s_{\mathrm{PT}}^y+w(s_{\mathrm{FT}}^y-s_{\mathrm{PT}}^y)$ at the same LSR reference, where $w=0$ recovers pretraining and $w=1$ recovers fine-tuning. Here the endpoints differ through misspecification correction, rather than training quality alone. \Cref{fig:toy-autoguidance} shows that intermediate weights can retain correction near the discrepancy while attenuating some errors elsewhere. With 64 response draws per design, 100 reverse steps, and matched sampling noise across weights, $w=0.5$ gives full-grid MAE $0.12$, compared with $0.14$ for PT and $0.15$ for FT. The full sweep also tested $w\in\{1.25,1.5,2\}$, which increase full-grid error beyond FT. The lower row estimates PT-relative path divergence along each weight's own reverse trajectories, using 64 paths per design rather than rescaling the FT map; this discretized estimate excludes the final Tweedie step. The best weight was selected retrospectively in this single-seed comparison, and improvement is not uniform across designs. These results suggest a sampling-time control for over-adaptation. 

\begin{figure}[htbp]
\centering
\includegraphics[width=\linewidth]{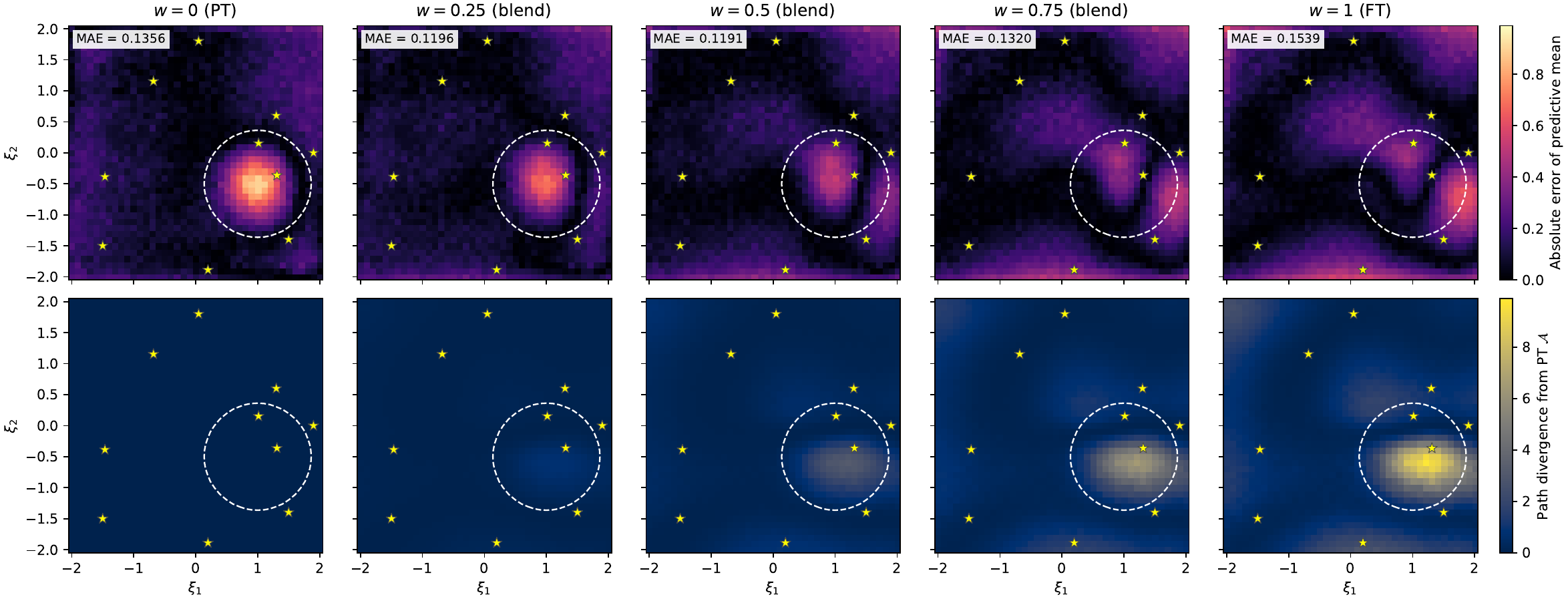}
\caption{PT--FT score blending on the misspecification toy. Columns vary the sampling weight $w$ from PT ($0$) to FT ($1$), without retraining. (\emph{Top}) Absolute error of the predictive mean against the noise-free response, with full-grid MAE annotated. (\emph{Bottom}) Estimated path divergence from PT. Stars mark observed designs; color scales are shared within rows. Intermediate weights reduce average error while limiting departure from PT.}
\label{fig:toy-autoguidance}
\end{figure}

\begin{table}[H]
\centering
\caption{Interpretation of design-localized fine-tuning. Path divergence $\mathcal{A}(\theta,\xi)$ measures departure from the PT model at fixed $\theta$, while predictive error change is evaluated against observations at the same design $\xi$.}
\label{tab:fine-tuning-diagnostic}
\small
\begin{tabular}{@{}L{0.18\textwidth}L{0.36\textwidth}L{0.36\textwidth}@{}}
\toprule
$\mathcal{A}(\theta,\xi)$ &
\textbf{Error improves} &
\textbf{Error does not improve} \\
\midrule
Small & Adequate simulator or mild correction & Under-correction \\
Large & Corrected misspecification & Over-correction or forgetting \\
\bottomrule
\end{tabular}
\end{table}

\Cref{tab:fine-tuning-diagnostic} relates the magnitude of adaptation to its predictive benefit at each design. Large path divergence with reduced error is consistent with misspecification correction, whereas large divergence without improvement suggests unnecessary adaptation or forgetting. Small divergence with persistent error indicates under-correction, while small divergence where predictions are already accurate suggests that little correction is needed.

\subsection{BMP misspecification evaluation}
\label{sec:bmp-simformer-misspecification}

\setcounter{topnumber}{1}
\suppressfloats[t]
\begin{figure}[t]
\centering
\includegraphics[width=0.49\linewidth]{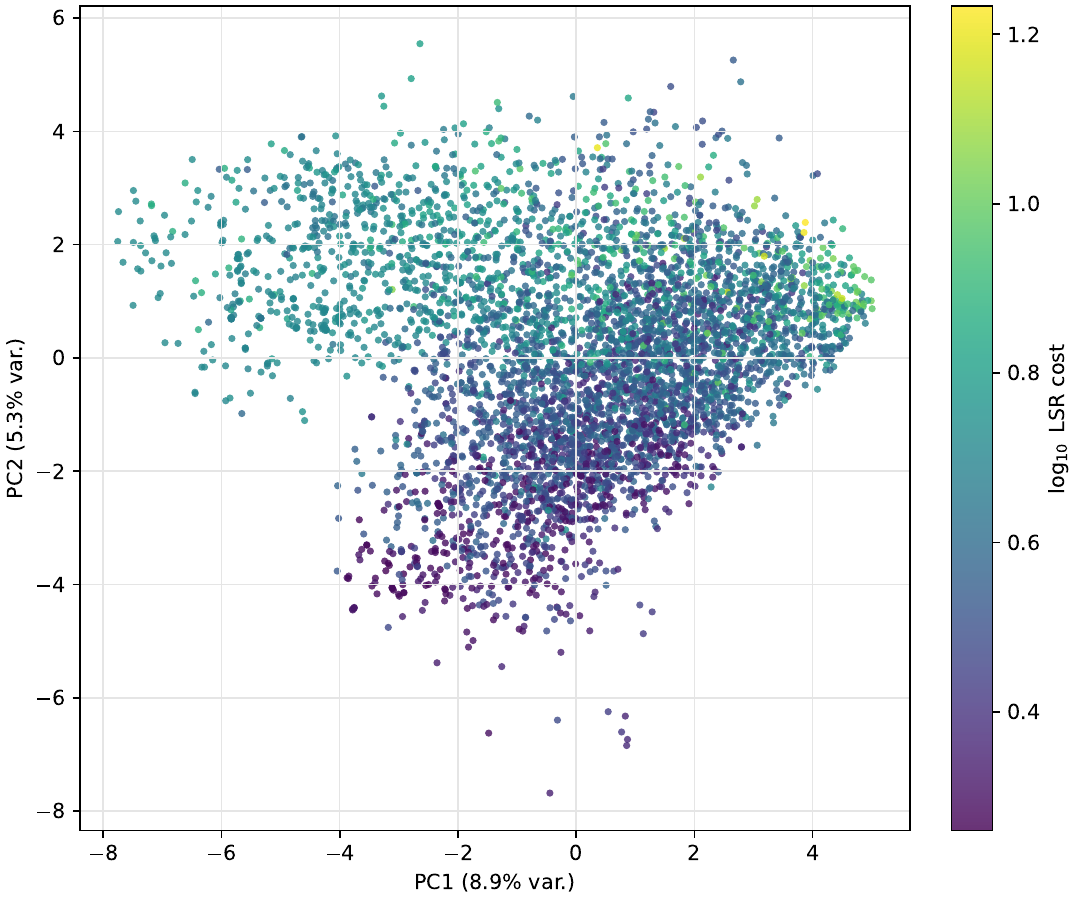}\hfill
\includegraphics[width=0.49\linewidth]{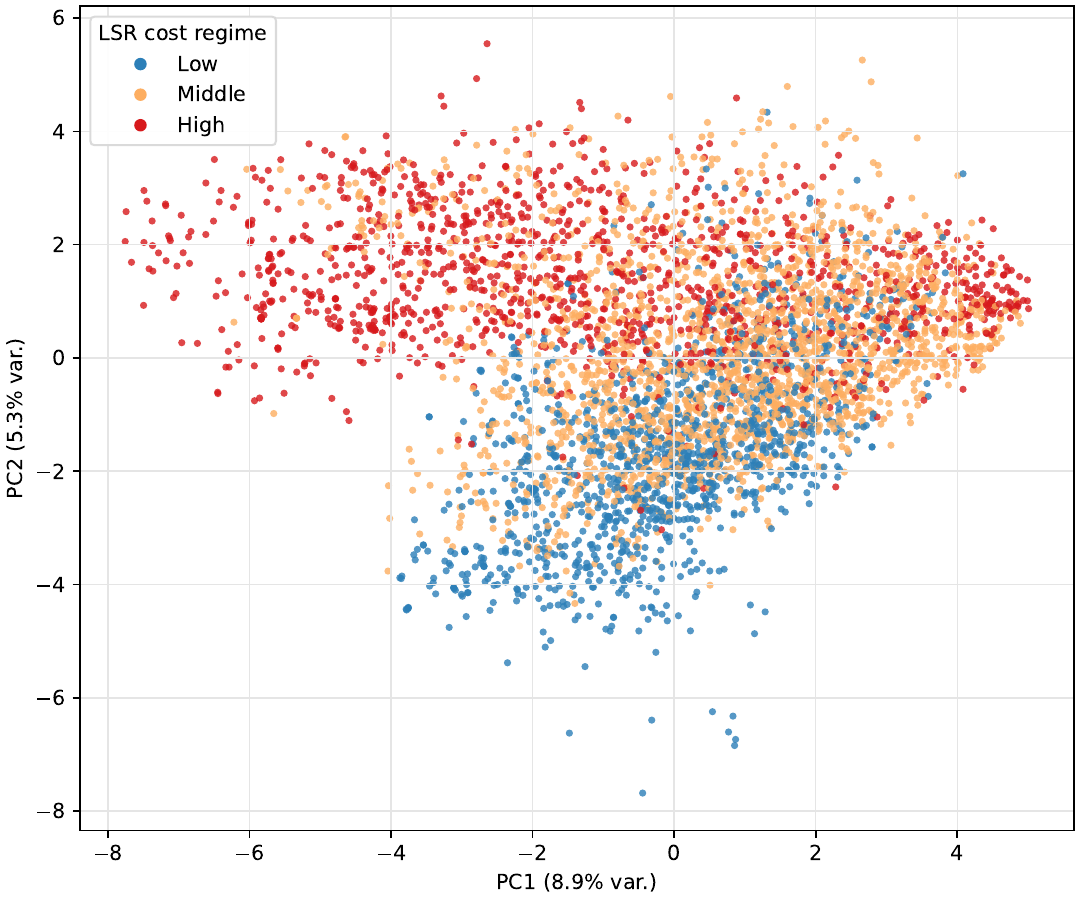}
\captionsetup{skip=4pt}
\caption{LSR solution structure and cost groups across $4{,}816$ retained fits. (\emph{Left}) PCA coordinates colored by $\log_{10}$ total LSR cost. (\emph{Right}) The same projection colored by low, middle, and high cost groups, containing $1{,}296$, $2{,}245$, and $1{,}275$ fits. PC1 and PC2 explain $8.9\%$ and $5.3\%$ of the standardized feature variance. Grouping also uses fitting cost, so overlap in this two-dimensional projection is expected.}
\label{fig:lsr-pca-cost-groups}
\vspace{-14pt}
\end{figure}

\paragraph{LSR cost groups.} The LSR ensemble was generated by fitting the expert simulator to the same BMP observations across four cell lines in $6{,}000$ bounded least-squares runs with different random initializations. Each run minimized squared prediction error and yielded a candidate parameter set, providing alternative fits to the same data. We used the $4{,}816$ fits retained in the original filtered bank provided by \cite{klumpe2022context}. Each fit contains 60 biophysical parameters and five receptor multiplicative factors, giving 65 features. We transformed these features to model-normalized coordinates, z-scored each feature across fits, and computed PCA (\cref{fig:lsr-pca-cost-groups}). To group solutions by parameter structure and fit quality, we applied $k$-means with nine clusters to standardized PC1, PC2, and $\log_{10}C$, where $C$ is the total LSR fitting cost, multiplying the standardized cost coordinate by eight. We ordered clusters by their mean log-cost and merged clusters 1--3, 4--6, and 7--9 into low (L), middle (M), and high (H) cost groups, respectively. These labels summarize global fit quality, not performance at every design. For the local group assignments in \cref{fig:rims_comparison,fig:bmp-misspecification-regimes}, we computed MSE across member fits at each design and selected the group with the lowest value. Thus, a globally higher-cost group can still fit a particular experimental condition best, as explicitly shown in \cref{fig:bmp-misspecification-regimes}A.

\paragraph{Ligand-competition and titration conditions.}
\label{sec:comparing_bmps}
The three response curves in \cref{fig:rims_comparison} probe competition for receptors and dose-dependent signaling. The ligand-competition axis transitions from high BMP4 and low BMP10 to high BMP10 and low BMP4. The NMuMG response changes along this axis, whereas the BMPR2-knockdown line remains close to zero. The BMP4 titration in NMuMG provides a dose--response comparison that also includes a decrease in observed signal at the highest concentration. We held the mechanistic parameters at $\theta^{\mathrm{LSR}}$ and sampled the FT likelihood, isolating response correction from changes in inferred parameters. The FT model improves over the simulator at this reference throughout the displayed BMPR2-knockdown series and at lower BMP4 doses, while the NMuMG competition series and higher-dose titration retain regions where LSR predicts better. These conditional-likelihood comparisons are distinct from the aggregate HBDS posterior-predictive benchmark in \cref{tab:rmse_three_methods_transposed}.

\paragraph{Design-level adaptation.}

We relate the design-conditional path divergence in \cref{eq:score-control-energy} to the LSR cost groups defined above. Panel A of \cref{fig:bmp-misspecification-regimes} shows which group has the lowest MSE at each experimental condition. Although the low-cost group dominates, other groups fit better at particular ligand conditions and cell lines. These preferences describe relative fit among groups of parameter estimates, rather than establishing simulator misspecification on their own.

Comparing panels A and B reveals a qualitative correspondence between some departures from the dominant LSR group and elevated path divergence $\mathcal{A}(\theta,\xi)$. Panels C and D then distinguish adaptation from predictive improvement by comparing FT responses with the selected LSR anchor and the PT bootstrap median, respectively. Fine-tuning improves over the PT model across many conditions, but its comparison with LSR remains mixed. Nevertheless, improvements over LSR occur both at conditions favoring alternative parameter groups and at some conditions where the dominant group remains preferred. The green-boxed examples connect these design-level diagnostics to the response curves in \cref{fig:rims_comparison}.

\begin{figure}[t]
\centering
\includegraphics[width=\linewidth]{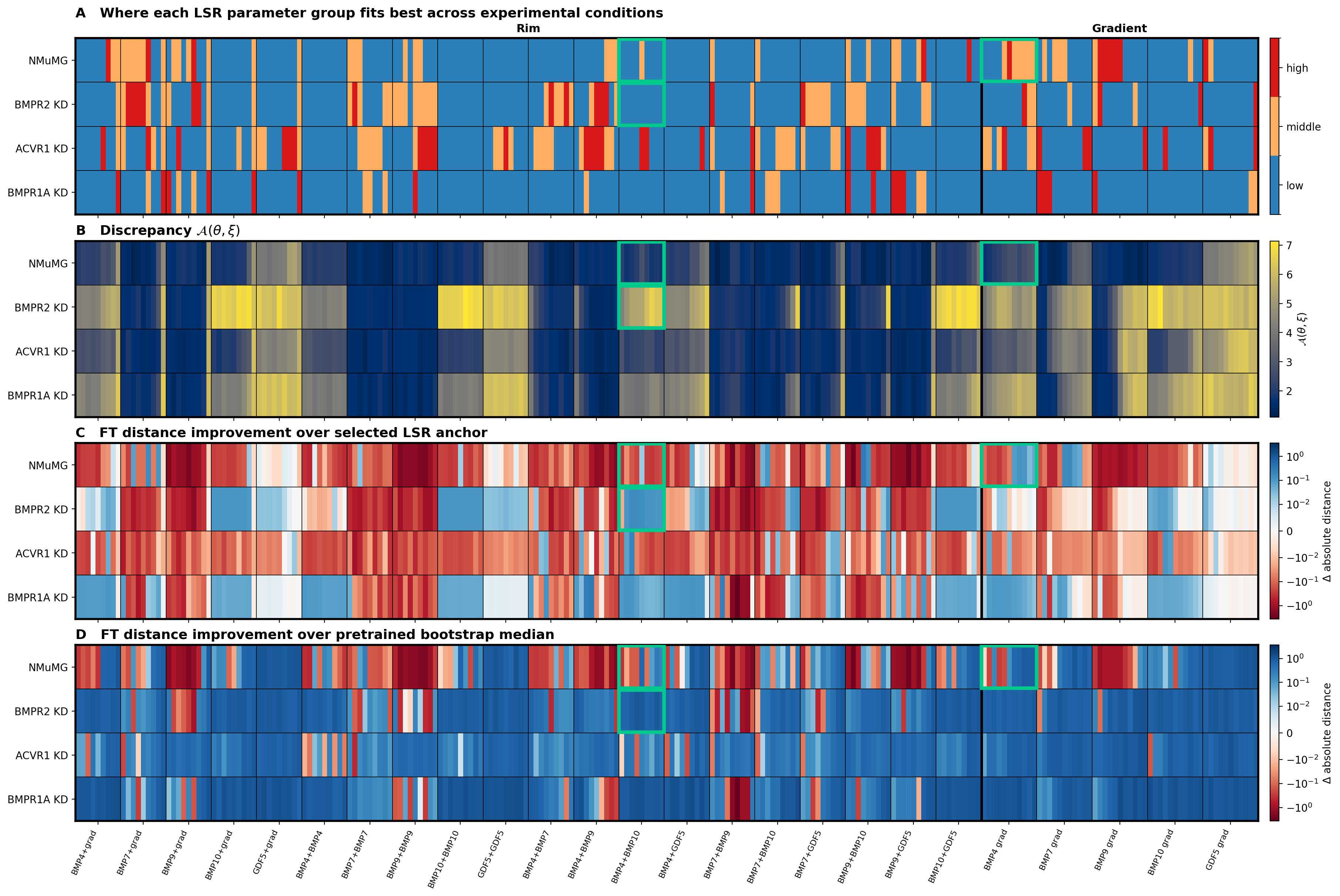}
\caption{Design-localized adaptation in the BMP model across four cell lines (rows) and experimental conditions (columns). Thin black lines separate titration series; the thick black line separates ligand-competition (``Rim'') from gradient series (``Gradient''). Green boxes mark the conditions illustrated in \cref{fig:rims_comparison}. (\emph{A}) Best-fitting LSR parameter group; colors identify groups, not error magnitudes. (\emph{B}) Path divergence $\mathcal{A}(\theta,\xi)$, with yellow indicating greater adaptation. (\emph{C}) Fine-tuning distance improvement relative to the selected LSR anchor. (\emph{D}) Fine-tuning distance improvement relative to the PT bootstrap median. In C and D, blue indicates reduced distance to observations and red indicates increased distance relative to the comparator.}
\label{fig:bmp-misspecification-regimes}
\end{figure}

Together, these results provide a proof of concept for design-localized likelihood correction where fine-tuning can improve predictions beyond the selected mechanistic fit on particular conditions, while path divergence identifies where the likelihood changes. The remaining regions where LSR performs better also show that the current fine-tuning procedure does not fully exploit this potential, or, can be improved. This motivates the design-adaptive curricula and divergence-aware regularization discussed in \cref{sec:soc-connection,sec:fisher-rao-analysis}, targeting residual error while preserving successful corrections. Establishing whether these extensions yield broader gains requires further evaluation, which we leave to future work.

\paragraph{Posterior transfer without collapse.} Although likelihood fine-tuning is anchored at a fixed reference $\theta^{\mathrm{LSR}}$, the resulting posterior does not collapse onto this point estimate. \Cref{fig:bmp-posterior-marginals} shows that FT posterior marginals generally shift toward the reference while retaining substantial spread, indicating that the likelihood-side correction transfers to posterior queries without reducing inference to the fine-tuning anchor.

\begin{figure}[p]
\centering
\includegraphics[width=\textwidth]{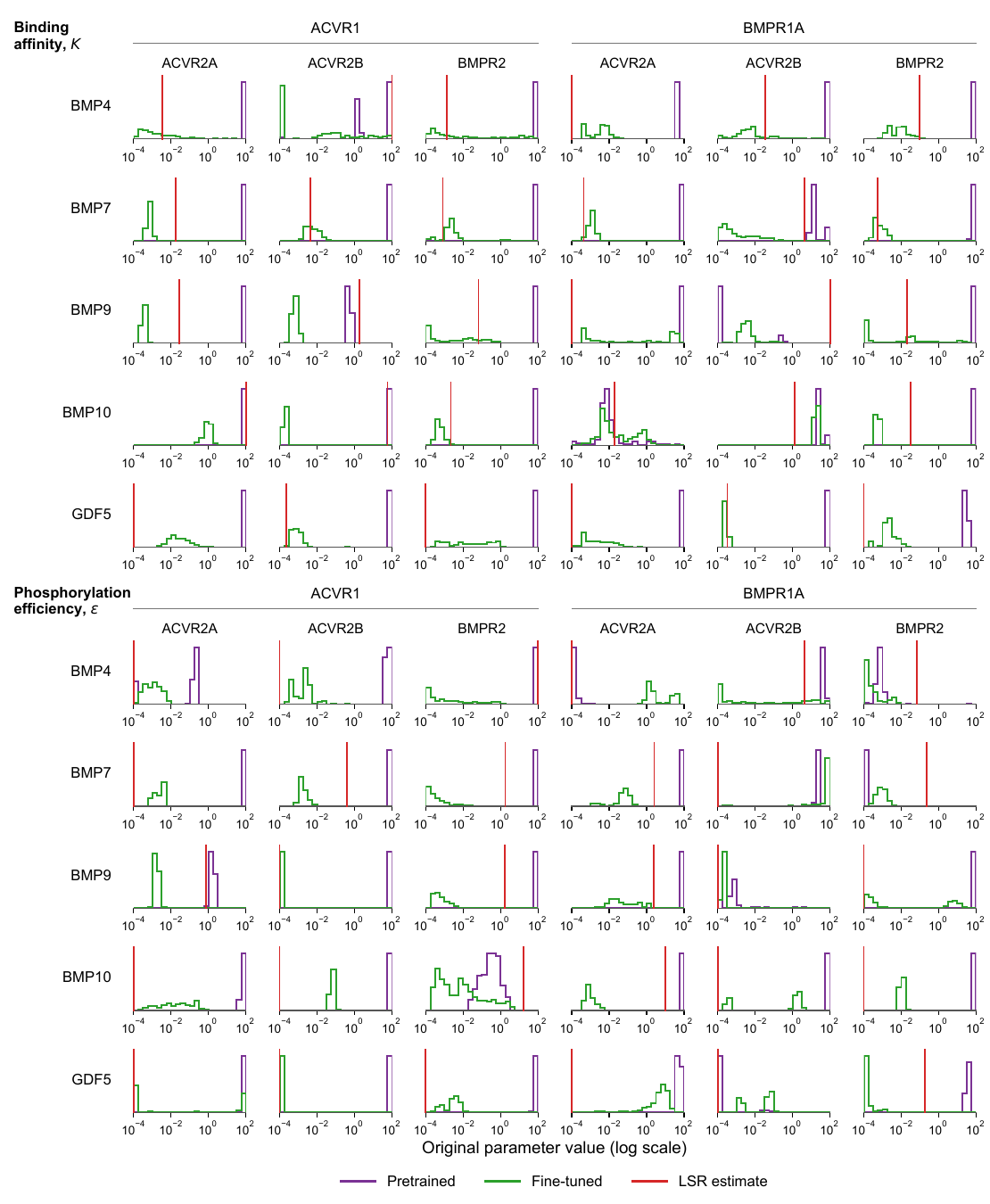}
\caption{BMP posterior alignment with the LSR reference. Purple and green show PT and FT marginals; red lines mark the fitted LSR reference. The upper and lower blocks show 30 binding affinities $K$ and 30 phosphorylation efficiencies $\varepsilon$, with ligand rows and receptor-complex columns. Type-I receptors label the column groups above; type-II receptors label individual columns beneath them, identifying each receptor pair. FT marginals generally move closer to the reference while retaining spread rather than collapsing to it.}
\label{fig:bmp-posterior-marginals}
\end{figure}

\clearpage
\section{AI use statement}
\label{sec:ai-use}

AI tools assisted with grammar, organization, TikZ figures, and experiment management. The authors take responsibility for the final content.


\end{document}